\definecolor{cvprblue}{rgb}{0.21,0.49,0.74}
\definecolor{COLOR_MEAN}{HTML}{f0f0f0}
\definecolor{GREEN}{HTML}{0aa344}
\newtheorem{theorem}{Theorem}
\newtheorem{lemma}{Lemma}
\newtheorem{proposition}{Proposition}
\theoremstyle{definition}
\newtheorem{example}{Example}
\newtheorem{assumption}{Assumption}
\newcommand{\compilehidecomments}{false}
	\newcommand{\siwei}[1]{}
	\newcommand{\wei}[1]{}
	\newcommand{\siwei}[1]{{\color{red} [\text{Siwei:} #1]}}
	\newcommand{\wei}[1]{{\color{purple} [\text{Wei:} #1]}}
\title{Can Graph Learning Improve Planning\\ in LLM-based Agents?}
\author{%
  {
  Xixi Wu$^{1,3}$\thanks{\ denotes equal contributions. Work was done during Xixi Wu's (\texttt{xxwu@se.cuhk.edu.hk}) internship at Microsoft Research Asia. Corresponding authors (\texttt{yifeishen@microsoft.com, yunx@fudan.edu.cn})}  \hspace{0.3cm}
  Yifei Shen$^{2*}$\Envelope\hspace{0.3cm}
  Caihua Shan$^2$ \hspace{0.3cm}
  Kaitao Song$^2$ \hspace{0.3cm}
  Siwei Wang$^2$ \hspace{0.3cm}
  Bohang Zhang$^4$\hspace{0.3cm}
  }
   \\
 {
  \textbf{Jiarui Feng}$^5$\hspace{0.35cm}
  \textbf{Hong Cheng}$^3$ \hspace{0.35cm}
  \textbf{Wei Chen}$^2$ \hspace{0.35cm}
  \textbf{Yun Xiong}$^1$\Envelope\hspace{0.35cm}
  \textbf{Dongsheng Li}$^2$
  }
  \vspace{0.35cm}
  \\
  { \normalfont $^1$Fudan University\thanks{\ Shanghai Key Laboratory of Data Science, School of Computer Science, Fudan University} \hspace{0.16cm} $^2$Microsoft Research Asia \hspace{0.14cm}
  $^3$The Chinese University of Hong Kong}\\
  $^4$Peking University \hspace{0.3cm}
  $^5$Washington University, Saint Louis
}
\begin{document}

\maketitle

\begin{abstract}
Task planning in language agents is emerging as an important research topic alongside the development of large language models (LLMs). It aims to break down complex user requests in natural language into solvable sub-tasks, thereby fulfilling the original requests. In this context, the sub-tasks can be naturally viewed as a graph, where the nodes represent the sub-tasks, and the edges denote the dependencies among them. Consequently, task planning is a decision-making problem that involves selecting a connected path or subgraph within the corresponding graph and invoking it. In this paper, we explore graph learning-based methods for task planning, a direction that is orthogonal to the prevalent focus on prompt design. Our interest in graph learning stems from a theoretical discovery: the biases of attention and auto-regressive loss impede LLMs' ability to effectively navigate decision-making on graphs, which is adeptly addressed by graph neural networks (GNNs). This theoretical insight led us to integrate GNNs with LLMs to enhance overall performance. Extensive experiments demonstrate that GNN-based methods surpass existing solutions even without training, and minimal training can further enhance their performance. The performance gain increases with a larger task graph size. \footnote{\ The code and datasets are available at 
 \texttt{\href{https://github.com/WxxShirley/GNN4TaskPlan}{https://github.com/WxxShirley/GNN4TaskPlan}}}
\end{abstract}

\section{Introduction}
LLM-based agents have recently emerged as a rapidly growing field of research and are considered a significant step towards artificial general intelligence (AGI) \cite{wang2024survey,bubeck2023sparks}. These agents have achieved remarkable successes across a variety of domains, as evidenced by their ability to address complex AI challenges (e.g., HuggingGPT \cite{shen2024hugginggpt}), excel in gaming environments (e.g., Voyager \cite{wang2023voyager}), and drive innovation in chemical research (e.g., \cite{boiko2023autonomous}). Within this burgeoning field, task planning in language agents emerges as a critical area of study. It involves LLMs autonomously interpreting user instructions, breaking user's instructions in natural language into concrete and solvable sub-tasks, and then fulfilling the user's request by executing each sub-task \cite{schick2024toolformer,shen2024hugginggpt,shen2023taskbench}. For instance, in the case of HuggingGPT \cite{shen2024hugginggpt}, task planning involves invoking expert AI models from the HuggingFace website to solve complex AI tasks beyond the capabilities of GPT alone. 

Given its practical significance, numerous algorithms have been proposed, with a major focus on prompt design \cite{shen2023taskbench,shen2024hugginggpt,Liu2023ControlLLMAL,Lin2024GraphenhancedLL,wang2023plan,singh2023progprompt,yao2024tree,besta2024graph,wei2022chain}. This paper proposes to explore an orthogonal direction, i.e., graph-learning-based approaches. In task planning, solvable sub-tasks can be naturally represented as a \emph{task graph}, wherein each node corresponds to a distinct sub-task, and each edge signifies the dependencies between these sub-tasks. The crux of task planning, therefore, involves selecting a connected path or subgraph to satisfy the user's request, which is a decision-making problem on graphs. Adopting this framework, we analyze the task planning capabilities of LLMs, specifically within the context of HuggingGPT \cite{shen2024hugginggpt}. Our empirical investigation uncovers that a considerable portion of planning failures can be ascribed to the LLMs' inefficacy in accurately discerning the structure of the task graph. This finding presents intriguing questions from both theoretical and empirical perspectives. Theoretically, it initiates a discussion on the inherent limitations of LLMs in processing task graphs. Empirically, it highlights the urgent need for developing effective and efficient strategies to mitigate this deficiency and improve task planning performance.

For the theoretical question, we first investigate the expressiveness of Transformer architectures when applied to graph tasks with sequential graph input, such as edge list representations, which is the graph input format for task planning. Our initial hypothesis is that the format of sequential graph input might not align with the inductive bias inherent to graph structures, potentially reducing expressiveness. Contrary to this hypothesis, it is proved that by taking edge lists as the input, a constant-width Transformer can solve graph decision-making problems by simulating dynamic programming algorithms on edge lists. Nevertheless, we find that LLMs' solutions lack invariance under graph isomorphism, an important property for graph decision-making problems. In addition, the expressiveness is weakened if the attention is sparse, which is typically observed in LLMs \cite{xiao2023efficient}. Beyond expressiveness, we also examine the influence of auto-regressive loss, demonstrating that it introduces spurious correlations that can be harmful to graph decision-making tasks. These insights expose the inherent limitations of LLMs in task planning and, more broadly, in graph-related problems (e.g., the challenges in \cite{guo2023gpt4graph,wang2024can,luo2024graphinstruct}).

To tackle the limitations, we take the use of GNNs, which have been shown to adeptly handle graph decision-making problems, both in theory and in practice \cite{khalil2017learning,xu2019can}. Initially, we deploy LLMs to interpret an ambiguous user request, breaking it down into more detailed steps. Subsequently, we utilize a GNN to retrieve the relevant sub-tasks based on these detailed steps and the corresponding sub-task descriptions. Notably, this approach can be implemented without training if we adopt parameter-free GNN models such as SGC \cite{wu2019simplifying}. In the case of training-based methods, we apply the Bayesian Personalized Ranking (BPR) loss \cite{rendle2012bpr} to facilitate learning from the implicit sub-task rankings. Extensive experiments demonstrate that the proposed methods achieve better performance than baselines. Specifically, our main contributions are summarized as follows:

\begin{enumerate}
    \item \textbf{Task Planning Formulation:} This study presents a formulation of task planning as a graph decision-making problem. In the realm of task planning, our work initiates the exploration of graph learning methodologies to enhance performance. Concurrently, it introduces task planning as a new application in the graph learning domain.

    \item \textbf{Theoretical Insights:} We prove that Transformers have expressiveness to solve graph decision-making problems \emph{based on edge list input}, but inductive biases of attention and the auto-regressive loss function may serve as obstacles to their full potential.

    \item \textbf{Novel Algorithms:} Based on the theoretical analysis, we introduce an additional GNN for sub-task retrieval, available in both training-free and training-based variants. The experiments on diverse LLMs and planning datasets demonstrate that the proposed method outperforms existing solutions with much less computation time. Furthermore, the performance is further enhanced by improved prompts or a fine-tuned model.
    
\end{enumerate}

\section{Preliminaries}
In this section, we introduce task planning in language agents and the current LLM-based solutions.

\subsection{Task Planning in Language Agents}
We start with the definition of task planning with a concrete example of HuggingGPT \cite{shen2024hugginggpt}. In task planning, there is a pool of pre-defined tasks. Task planning inputs include this task pool and a user request. The user request is expressed in natural language, which is ambiguous and could encompass multiple complex tasks. The output is a sequence of tasks and the order of their invocation to address the user's request. 

Figure \ref{fig:task_illustration} features the task planning in HuggingGPT, with the pre-defined tasks corresponding to APIs from the HuggingFace website, such as \texttt{Translation} and \texttt{Pose-to-Image}, accompanied by detailed descriptions. For instance, the detailed description for \texttt{Translation} is ``Translation is the task of converting text from one language to another''. The user request is ``Please generate an image where a girl is reading a book, and her pose matches the boy in `example.jpg', then describe the new image with your voice.'' The ground-truth output is a sequence of four APIs (nodes): $\{ \texttt{Pose Detection}, \texttt{Pose-to-Image}, \texttt{Image-to-Text}, \texttt{Text-to-Speech} \}$, outlining the order of their invocation (a path). By invoking these APIs on HuggingFace, the user request can be fulfilled.

\subsection{Current LLM-based Solution to Task Planning}\label{sec:llm_planning}
The current solution of task planning is purely based on LLMs and involves two stages \cite{shen2023taskbench,shen2024hugginggpt}. The first stage involves request decomposition, where a user's ambiguous request is broken down into concrete \emph{steps} via LLMs. For instance, the request illustrated in Figure \ref{fig:task_illustration} is decomposed into the following steps: (1) analyze the pose of the boy; (2) take that pose and generate a new image; (3) generate the caption for newly generated image; (4) convert the generated text into audio. The second stage is task retrieval. For each decomposed step, LLMs are employed to retrieve an appropriate task from the task pool and execute them in sequence. For example, ``Analyze the pose of the boy'' corresponds to \texttt{Pose detection}. The output tasks should be (1) \texttt{Pose detection}; (2) \texttt{Pose-to-Image}; (3) \texttt{Image-to-Text}; (4) \texttt{Text-to-Speech}. Figure \ref{fig:baseline} illustrates this procedure.

\begin{figure}[!t]
    \centering
    \includegraphics[width=\textwidth]{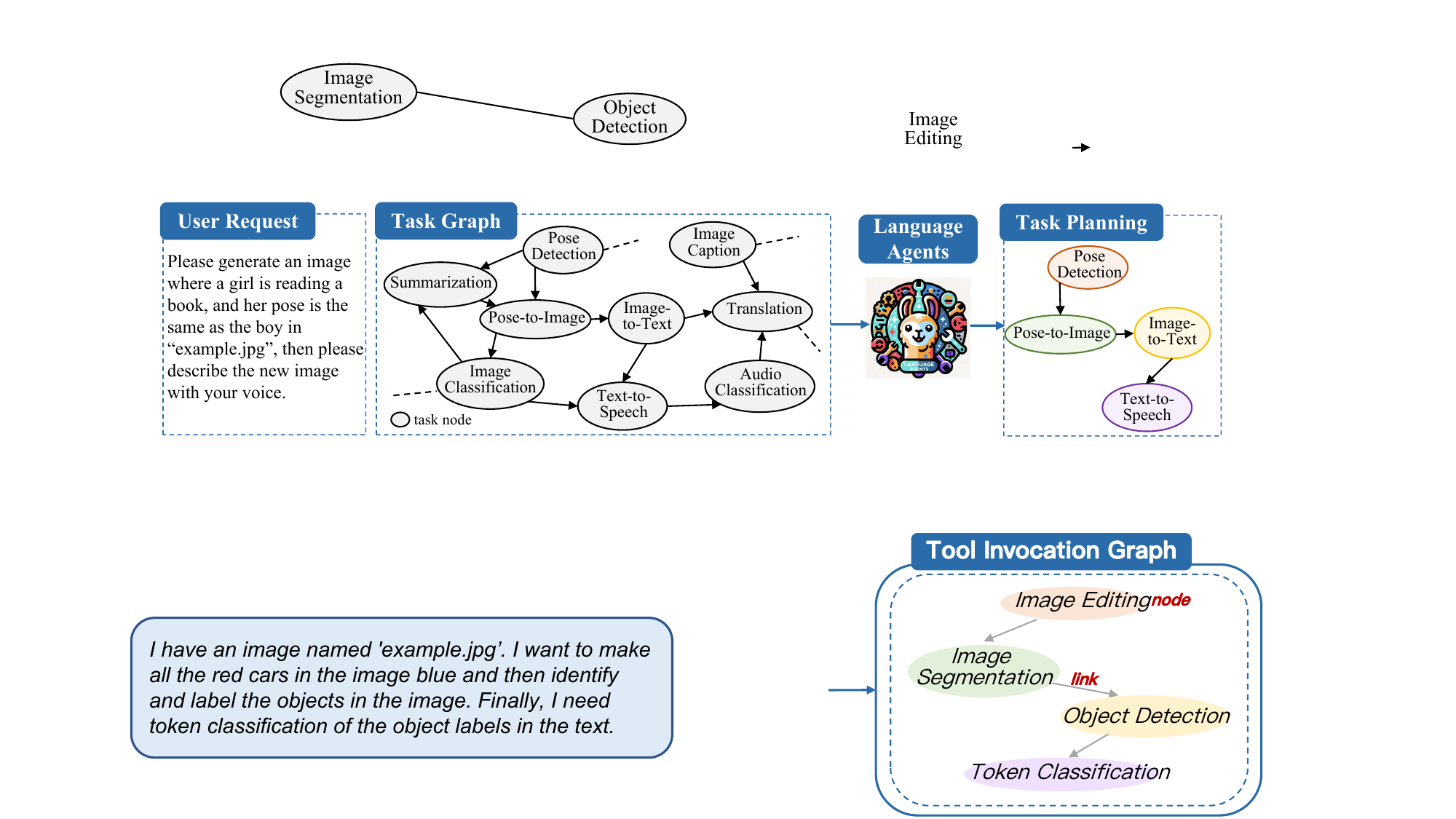}
    \caption{\textbf{Illustration of Task Planning in Language Agents (e.g., HuggingGPT \cite{shen2024hugginggpt})}}
    \label{fig:task_illustration}
\end{figure}

\section{Graph Formulation and Insights}
\subsection{Graph Formulation of Task Planning}
In this subsection, we formulate the task planning as a decision-making problem on the task graph. The task graph is a special kind of text-attributed graphs and we define it as $G=(V,E,T)$. Each node $v \in V$ represents a pre-defined task in the task pool, associated with a text $t_v \in T$ describing its function (e.g., ``Translation. Translation is the task of converting text from one language to another.''). Each edge $(u,v) \in E$ indicates a dependency between tasks (e.g., the output format of task $u$ matches the input format of task $v$). Task planning is to select a path or connected sub-graph on the task graph. 

Viewed from this angle, task planning bears resemblance to traditional decision-making problems on graphs, such as planning for the shortest path. Compared with traditional planning, task planning in language agents involves diverse and open-ended goals due to the varied users' personal requests. For example, on platforms like HuggingFace, users' intentions span across video, text, and image domains.  On the contrary, classic planning has a fixed goal for a given domain, which is often explicitly expressed by mathematical formulas \cite{Toyer2019ASNetsDL,Mao2023RegressionWidth}.

\begin{figure}[!t]
    \centering
    \begin{subfigure}[b]{0.4\textwidth}
        \centering
        \includegraphics[width=\textwidth]{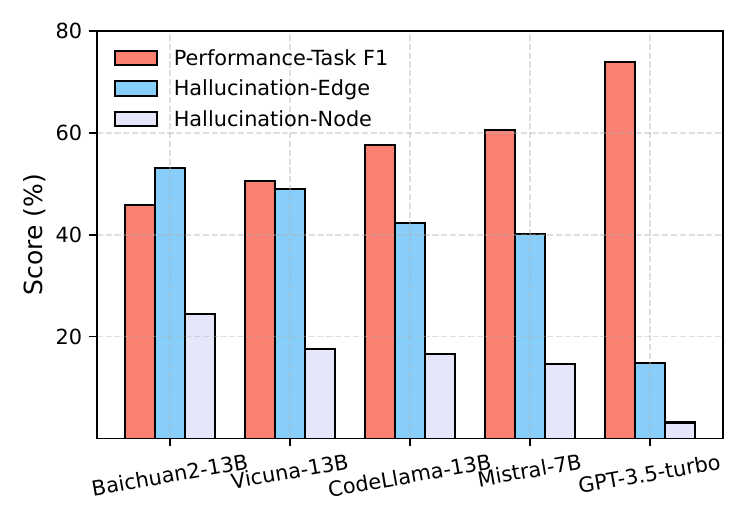}
        \caption{Performance and Hallucination ratios}
        \label{fig:hallucination}
    \end{subfigure}
    \begin{subfigure}[b]{0.4\textwidth}
        \centering
        \includegraphics[width=\textwidth]{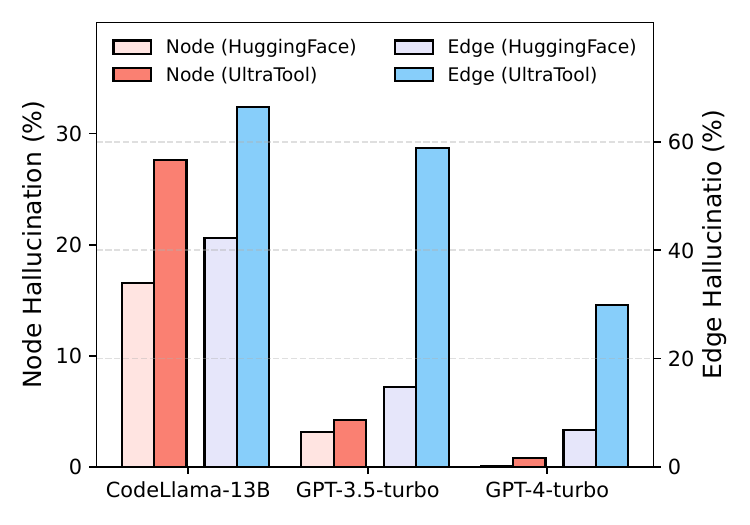}
        \caption{Hallucination ratios across datasets}
        \label{fig:hallucination_graphsize}
    \end{subfigure}
    \caption{\textbf{Illustration of (a) LLMs' planning performance and hallucination in HuggingGPT, and (b) hallucination in relation to task graph size.}}
    \label{fig:full_hallucination}
    \vspace*{-15pt}
\end{figure}

\subsection{Failures of LLMs in Planning: Empirical Findings}
With the task graph at hand, we diagnose LLMs in task planning in Figure \ref{fig:full_hallucination}. We adopt the experimental settings as outlined in the work of HuggingGPT \cite{shen2024hugginggpt}, where the prompts are specifically optimized for task planning on HuggingFace. The evaluation metric calculates the F1 score to assess the accuracy of the tasks identified by LLMs against the ground-truth tasks. Additionally, we report two task-graph-related metrics: the node hallucination ratio and the edge hallucination ratio. These metrics measure the frequency of non-existent nodes (i.e., tasks) and edges (i.e., dependencies) outputted by LLMs, respectively, indicative of the models' misinterpretation of the graph input.

Our empirical findings reveal that (1) LLMs exhibit a certain \textbf{hallucination} ratio, and (2) there is a strong correlation between the hallucination ratio and planning performance. This suggests that LLMs struggle to accurately interpret the task graph while the task graph is the key to the performance. 

We further explore whether the incidence of hallucinations correlates with the number of sub-tasks. The HuggingGPT dataset contains 23 sub-tasks, and our analysis is expanded to incorporate the UltraTool dataset \cite{huang2024planning}, which consists of 260 sub-tasks. Figure \ref{fig:hallucination_graphsize} illustrates that the hallucination ratio increases with a larger task graph size.

\subsection{Failures of LLMs in Planning: Theoretical Insights}\label{sec:analysis}
In this subsection, we provide theoretical insights into the limitations of LLMs in processing task graphs. In contrast to previous graph learning approaches for graph decision-making problems, LLMs process the graph input by flattening it into a sequence and are trained using an auto-regressive loss. We will then examine the impact of these two factors.

\textbf{How does sequential graph input impact the expressiveness?} We consider general graph decision-making problems that can be resolved using dynamic programming (DP) as described in \eqref{eq:dp}. The input comprises the edge list and initial states:
\begin{align}\label{eq:edge_list}
\underbrace{u_1 \ v_1 \ c[u_1][v_1] \ u_1 \ v_2 \ c[u_1][v_2] \ \ldots }_{\text{edge list}} \ \underbrace{u_1 \ \text{Answer}[0][u_1] \ \ldots \ u_{n} \ \text{Answer}[0][u_{n}]}_{\text{initial states}} 
\end{align}
The intended output format is $u_1 \ \text{Answer}[k][u_1]\ \ldots \ u_n \ \text{Answer}[k][u_n]$. In existing studies, task graphs are often presented in \eqref{eq:edge_list}, where the edge list is described by natural language and the initial states are the task descriptions of each task node, detailed in Appendix A.9 of \cite{shen2023taskbench}. 

As discussed in the previous subsection, task planning is a decision-making problem on the task graph. The decision-making problems on graphs are often solved by DP \cite{bellman1966dynamic} and its general formulation is given by
\begin{align}\label{eq:dp}
    \text{Answer}[k][i] = f\left(\square_{j \in \mathcal{T}(i)} g(\text{Answer}[k-1][j], c[i][j] ) \right), 
\end{align}
where $\text{Answer}[k][i]$ is the solution to state $i$ in the $k$-th iteration, $c[i][j]$ is a cost associated with state $i$ and $j$, $\mathcal{T}(i)$ is the set of state can be transited to $i$, $\square_{j \in \mathcal{T}(i)}$ is an aggregation function such as MAX or SUM, and $f, g$ are task-specific update functions. We give the formulation of typical DP algorithms in Appendix \ref{sec:dp} including some NPC problems. For the decision-making problems on the text-attributed graphs (e.g., task planning), one may conceptualize them as DP in the feature space, as discussed in \cite{xu2019can}.

To our surprise, although the edge list input does not directly reflect the geometric structures of graphs, it enables Transformers to simulate DP efficiently, in terms of expressiveness, as demonstrated by the following theorem.

\begin{theorem}\label{thm:expressiveness} (LLMs have enough expressiveness) Assume the input format is given in \eqref{eq:edge_list} and $f,g,\square$ in DP update \eqref{eq:dp} satisfy the assumptions \ref{assumption:fgh} and \ref{assumption:F} in Appendix. There exists a log-precision constant-depth and constant-width Transformer that simulates one step of DP update in \eqref{eq:dp}. As a consequence, there exists a log-precision $O(k)$-depth and constant-width Transformer that simulates $k$ steps of DP update in \eqref{eq:dp}. 
\end{theorem}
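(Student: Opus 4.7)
The plan is to prove the one-step simulation result constructively, then iterate it $k$ times for the second claim. Since the input in \eqref{eq:edge_list} already places every triple $(u_i,v_i,c[u_i][v_i])$ at a single region of tokens and every pair $(u_i,\text{Answer}[k-1][u_i])$ in the initial-state region, the strategy is to view each edge token and each state token as a ``worker'' with a constant-dimensional slot holding the pieces of the DP recursion \eqref{eq:dp}, and to use a small number of attention layers to shuttle values between these slots. Throughout, I will rely on Assumptions \ref{assumption:fgh} and \ref{assumption:F} to approximate $f$, $g$, and the aggregator $\square$ by bounded-width feed-forward modules and (possibly soft) attention, respectively.

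First I would set up positional encodings that let the Transformer distinguish three types of tokens: edge-endpoint tokens, edge-cost tokens, and initial-state tokens. Using one or two attention heads with hard-coded query/key patterns that match on the endpoint identifier, I would (i) for each edge position $(u,v,c)$, retrieve $\text{Answer}[k-1][v]$ from the initial-state region and write it into the edge's residual stream, and (ii) apply a constant-width MLP to compute $g(\text{Answer}[k-1][v],c[u][v])$ in place, invoking Assumption \ref{assumption:fgh} to get a bounded-width universal-approximation style realization with log-precision weights. This gives, at each edge token, the summand that will feed into $\square$.

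Next I would implement the aggregation step. For each state token $i$, an attention head with query equal to the identifier of $i$ and keys equal to the source endpoints of edge tokens selects exactly the set $\mathcal{T}(i)$; under Assumption \ref{assumption:F} the aggregator $\square$ (e.g., SUM, MAX) is realizable by this attention, either exactly for SUM via uniform averaging followed by a degree-aware rescaling, or in a log-precision soft-max approximation for MAX using a temperature scaled polynomially in the precision. A final constant-width MLP applies $f$ and writes $\text{Answer}[k][i]$ into the state slot, matching the intended output format. Composition of these blocks gives a constant-depth, constant-width Transformer simulating one DP step; stacking $k$ copies (or reusing the layers with an extra counter dimension) yields the $O(k)$-depth claim.

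The main obstacle, I expect, is showing that all of this stays within \emph{constant} width and log-precision simultaneously, since the naive realization would carry a growing list of neighbor contributions at each state token. The key observation that dissolves this is that the aggregation is performed inside the attention mechanism itself rather than in the residual stream, so each token only ever needs to hold a bounded tuple (its identifier, a current DP value, and a small number of scratch slots). A secondary subtlety is the MAX case: because log-precision soft-max is not exact MAX, I would need Assumption \ref{assumption:F} to guarantee that values are sufficiently separated (or that $f$ is Lipschitz) so that the soft-max error is absorbed without changing the output bits. Once this is handled, iteration to $k$ steps is immediate by stacking, since the output format of one block matches the input format of the next.
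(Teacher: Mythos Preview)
Your proposal is correct and follows essentially the same approach as the paper: set up constant-dimensional type/positional embeddings, use attention heads to shuttle $\text{Answer}[k-1][\cdot]$ and edge costs between tokens, apply constant-width MLPs for $f$ and $g$, realize $\square$ via attention, and stack for $k$ steps. The paper organizes these moves into four explicit blocks and offloads the COPY/MEAN/MAX/SUM-via-attention step to a lemma from \cite{feng2024towards} (which absorbs your soft-max $\epsilon$-error worry directly), but the architecture and the crucial observation that aggregation happens inside attention rather than the residual stream are the same as yours.
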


The proof is presented in Appendix \ref{sec:expressive}. However, certain aspects of the proof's constructions are challenging to be realized in Transformers that have been pretrained on natural language. First, the embedding process must be carefully filtered to ensure invariance under graph isomorphism. This invariance property does not align with the inductive biases inherent in natural language, making it difficult to achieve. Consequently, if an LLM can accurately produce the correct answer for a specific ordering of nodes, it might not maintain this accuracy after the nodes have been reordered (experiments given in Appendix \ref{sec:permutation}). Second, each token needs to synchronize its hidden states with all other tokens sharing the same token ID, which is of order $O(|V|)$. In practice, the attention trained from natural language is typically sparse \cite{xiao2023efficient}, leading to intractability issues. The formal lower bound is provided in the following proposition and the proof is given in Appendix \ref{sec:attention}.

\begin{proposition}\label{prop:attention} (Inductive bias of language hinders expressiveness)  Assume the input format is described \eqref{eq:edge_list} and that the attention mechanism is limited to attending to a constant number of tokens. There exists at least one instance of one-step DP update such that no log-precision constant-width constant-depth transformer can simulate.
\end{proposition}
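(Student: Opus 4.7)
The plan is to exhibit a single instance of DP update together with a graph family on which one-step aggregation at a single node requires combining information from $\Omega(|V|)$ distinct input tokens, and then to contrast this with the fact that sparse attention of constant fan-in in a constant-depth transformer induces only a constant-size receptive field per output position. I would take the DP update to be one relaxation step of single-source shortest paths, so that
\begin{equation*}
\text{Answer}[1][v] \;=\; \min_{u \in \mathcal{T}(v)} \bigl(\text{Answer}[0][u] + c[u][v]\bigr),
\end{equation*}
which clearly satisfies the assumptions on $f,g,\square$ used in Theorem~\ref{thm:expressiveness}. The instance family is a ``star'': a fixed center $v_0$ with $n-1$ leaves $v_1,\dots,v_{n-1}$, tunable edge costs $c_1,\dots,c_{n-1}$ and tunable initial states $s_1,\dots,s_{n-1}$ at the leaves, serialized exactly in the edge-list format of \eqref{eq:edge_list}. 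The ground-truth output at $v_0$ is $\min_j (s_j + c_j)$, so every leaf contributes an input token that is relevant to that single output position.

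Next I would formalize a receptive-field lemma for sparse-attention transformers: if every attention head attends to at most $k = O(1)$ token positions and the depth is $L = O(1)$, then for any output position $p$ the set $S_p$ of input positions on which the output at $p$ can functionally depend has $|S_p| \le k^{L} = O(1)$. The argument is a straightforward induction on the layer index: at layer $0$, position $p$ depends only on input at position $p$; at layer $\ell+1$, its dependency set is the union over the $k$ attended positions of their layer-$\ell$ dependency sets, together with its own. Log precision and constant width do not affect this purely combinatorial bound.

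I would then run an adversary/indistinguishability argument. Pick $n$ large enough that $n-1 > k^L$ and consider the output position $p^\star$ that is supposed to emit $\text{Answer}[1][v_0]$. Because $|S_{p^\star}| \le k^L < n-1$, there exists an index $j^\star$ whose cost token $c_{j^\star}$ (or the initial-state token $s_{j^\star}$) lies outside $S_{p^\star}$. I would then construct two inputs $I$ and $I'$ that agree on every position in $S_{p^\star}$ but differ at the $j^\star$ position: in $I$ set $s_{j^\star}+c_{j^\star}$ far below every other $s_j+c_j$, and in $I'$ set it far above, with the numerical values chosen to fit within log precision. The true DP outputs at $v_0$ differ, yet the transformer's output at $p^\star$ must coincide on $I$ and $I'$ because it is a function of inputs in $S_{p^\star}$ alone — a contradiction.

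The main obstacle will be to make the ``sparse-attention implies small receptive field'' step genuinely water-tight despite the fact that many tokens share the same node ID in \eqref{eq:edge_list}. One might worry that an attention head could, in effect, ``broadcast'' across all tokens carrying ID $v_0$ without charging fan-in proportional to the degree. I would close this by noting that the assumption bounds the \emph{number of attended positions per head per token} by a constant, regardless of how those positions are selected or identified, so duplicate token IDs cannot enlarge $S_p$ beyond $k^L$. A secondary subtlety is ensuring the two adversarial inputs conform to the exact token layout (consistent node orderings, legal edge-list prefixes, log-precision numerical encodings); I would handle this by padding both inputs identically and perturbing only the numerical field at position $j^\star$, so indistinguishability on $S_{p^\star}$ is preserved while the ground-truth minimum flips. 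This yields the required instance and completes the lower bound.
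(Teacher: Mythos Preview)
Your proposal is correct and rests on the same core insight as the paper: constant fan-in attention together with constant depth bounds each output position's receptive field to $O(1)$ input tokens, too few to aggregate over $\Omega(|V|)$ neighbors. The execution, however, differs. The paper gives a one-paragraph counting argument: a single output token under these constraints can access only $O(\log n)$ bits of input information (a constant number of log-precision, constant-width tokens), whereas the space of possible one-step DP outcomes over a $|V|$-node graph has size $2^{\Theta(|V|)}$; pigeonhole then forces two distinct DP instances to collide at the output, a contradiction. You instead fix a concrete witness --- a star graph with a single-source shortest-path relaxation --- prove the $k^L$ receptive-field bound explicitly by induction on depth, and run an adversary argument that perturbs one leaf outside the receptive field of the center's output position so as to flip the ground-truth minimum while leaving the model's output unchanged. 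Your route buys an explicit instance and sidesteps any worry about how information is distributed across a multi-token output sequence; the paper's route buys brevity.
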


\textbf{How does auto-regressive Loss impact the generalization?} Our investigation next focuses on the auto-regressive loss and considers the following scenario: given a fixed task graph, user data is collected to perform instruction tuning with next-token-prediction loss. For a tractable theoretical study, we conceptualize this issue as a path-planning problem, since the output of task planning is essentially a path. We consider the training dataset comprises input sequences of the form $s \ t \ s \ v_1 \ v_2 \ \cdots \ t$, where $s$ represents the source node, $t$ the target node, and the sequence $s \ v_1 \ v_2 \ \cdots \ t$ is a path that adheres to specified constraints. During testing, given the initial and target nodes $s$ and $t$, the model is expected to generate a path with the same constraint. Our findings indicate that auto-regressive loss can lead to the emergence of a frequency-based spurious correlation, as substantiated by the following theorem and the proof is given in Appendix \ref{sec:auto-regressive}.

\begin{theorem}\label{thm:auto-regressive} (Spurious correlations of auto-regressive loss) Assume (1) the loss employed is a next-token-prediction loss utilizing cross-entropy, applied to the sub-sequence $v_1\ v_2 \ \cdots \ t$ during training; (2) the output logits are determined by target node $t$ and the current node $v_{i-1}$. Let $N_{t, v_{i-1}, u}$ be the number of times in the training dataset such that $t$ is the target node, $v_{i-1}$ is the current node and $v_i = u$ is the next node. The optimal logits for predicting the next node $u$ from current node $v_{i-1}$ towards target node $t$ is given by $\hat{\bm{v}}_{i}[u] = \frac{N_{t, v_{i-1}, u}}{\sum_u N_{t, v_{i-1}, u}}$ if $\sum_u N_{t, v_{i-1}, u} > 0$. If $\sum_u N_{t, v_{i-1}, u} = 0$, $\hat{\bm{v}}_{i}[u]$ can be any non-negative number subject to $\sum_u \hat{\bm{v}}_{i}[u] = 1$.
\end{theorem}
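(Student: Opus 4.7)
The plan is to recognize this as a standard maximum-likelihood / cross-entropy minimization argument, where the optimum of the expected loss under a softmax head reduces to the empirical conditional frequency. The parameterization assumption (logits depend only on $(t, v_{i-1})$) is the key: it lets me decouple the overall training objective into independent subproblems indexed by the pair $(t, v_{i-1})$.

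First, I would write out the total loss. For every training sequence $s\ t\ s\ v_1 v_2\cdots v_L$ with $v_L = t$, the next-token-prediction cross-entropy on the sub-sequence $v_1\cdots t$ contributes a term $-\log \hat{\bm v}_i[v_i]$ for each position $i$. Since $\hat{\bm v}_i$ depends only on $(t, v_{i-1})$, I can regroup the sum by collecting all positions across the dataset that share the same pair $(t,v_{i-1})$. After regrouping, the total loss becomes
\begin{equation*}
\mathcal{L} \;=\; -\sum_{t,\, w} \sum_{u} N_{t,w,u} \,\log \hat{\bm v}(t,w)[u],
\end{equation*}
where $\hat{\bm v}(t,w)$ is the shared logit vector (after softmax, a probability vector on $V$) used whenever the target is $t$ and the current node is $w$.

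Second, I would minimize each block separately. For a fixed pair $(t,w)$ with $M_{t,w} := \sum_u N_{t,w,u} > 0$, the block objective $-\sum_u N_{t,w,u}\log \hat{\bm v}(t,w)[u]$ subject to $\sum_u \hat{\bm v}(t,w)[u]=1$ and $\hat{\bm v}(t,w)[u]\ge 0$ is a classic cross-entropy minimization. Writing $q_u := N_{t,w,u}/M_{t,w}$, the objective equals $M_{t,w}\cdot H(q) + M_{t,w}\cdot \mathrm{KL}(q \,\|\, \hat{\bm v}(t,w))$, so non-negativity of KL gives the unique minimizer $\hat{\bm v}(t,w)[u] = q_u = N_{t,w,u}/M_{t,w}$. (Equivalently, a one-line Lagrange multiplier on the simplex constraint yields the same result.)

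Third, I would handle the degenerate case $M_{t,w} = 0$: if no training position ever has target $t$ and current node $w$, then the corresponding block contributes zero to $\mathcal{L}$ regardless of $\hat{\bm v}(t,w)$, so any probability vector on $V$ is optimal, which matches the second clause of the theorem. I do not expect any real obstacle here; the only subtlety worth flagging is to verify that the parameterization assumption genuinely allows the per-pair blocks to be optimized independently (i.e., no tying across different $(t,w)$ beyond what the assumption states). Once that is noted, the frequency-based spurious correlation interpretation --- namely that the learned next-token distribution ignores any global graph structure beyond the local pair $(t,v_{i-1})$ and simply memorizes training transition frequencies --- follows immediately from the closed form.
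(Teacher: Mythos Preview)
Your proposal is correct and follows essentially the same route as the paper: regroup the total cross-entropy loss by the pair $(t, v_{i-1})$ using the assumption that the logits depend only on that pair, then minimize each block independently to obtain the empirical conditional frequency. The only cosmetic difference is that you spell out the minimization via the $H(q)+\mathrm{KL}(q\,\|\,\hat{\bm v})$ decomposition, whereas the paper simply asserts that cross-entropy is minimized when the predicted distribution matches the empirical one.
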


In our setup, $s$ $t$ is the instruction and the third token is a duplicate of the first token. It is reasonable to exclude these tokens in the loss calculation, which is the first assumption. The second assumption assumes that the output only depends on the current node and target node, which is a minimal requirement for path-related problems. For DP problems, the frequency-based prediction contradicts to the value-based ground-truth. We then give an example that auto-regressive loss even cannot find a valid path.

\begin{example} Consider a training dataset consisting of a sufficient number of valid paths. Suppose the dataset contains two paths $a \ b \ c$ and $b \ c \ d$ and there are no other paths such that $t = d$ and the current node $v_i = a$ for all $i$. Then we have $N_{d, a, u} \equiv 0$ for all $u$ and the logits for the next node can be arbitrary. This results in the model's inability to predict the next node of $a$ when given $a$ as the source node and $d$ as the target node.
\end{example}

To a human, finding a path from $a$ to $d$ simply involves concatenating the paths $a \ b \ c$ and $b \ c \ d$. However, auto-regressive loss fails under such circumstances. In task planning datasets, we indeed observe that the performance of fine-tuned LLMs is inferior to that of GNNs trained on the same dataset, as shown in Figure \ref{fig:orthogonal_llm}.

\section{Integrating GNNs and LLMs for Planning}
\subsection{Motivations}
In the last section, we find that a considerable portion of planning failures can be ascribed to the LLMs' inefficacy in accurately discerning the structure of the task graph, due to the hallucination, the inductive bias of the attention, and next-token prediction loss. In contrast to LLMs, GNNs can strictly operate on the task graph, thereby avoiding hallucinations. Additionally, they leverage the graph structure as input, rather than flattening the graph into a sequence, thus overcoming the theoretical limitations discussed previously. Furthermore, GNNs have demonstrated proficiency in handling graph decision-making problems, both theoretically and empirically \cite{xu2019can,dudzik2022graph,khalil2017learning}. As a result, the simplest fix is to integrate GNNs into the task-planning algorithm. 

In the following subsections, we propose both training-free and training-based approaches to enhance performance. Training-free methods are necessary when the available tasks are continuously changing, or new tasks are emerging constantly. This scenario is common when the task planning module is deployed in a new system. Once the task planning module has been deployed for a period, it becomes possible to collect users' requests and label a small proportion of the data, enabling lightweight training-based methods.

\subsection{A Training-free GNN-based Approach}
As we discussed in Section \ref{sec:llm_planning}, the current solution to task planning involves two stages. The first stage requires the ability to understand users' requests in natural language and break them down into concrete instructions, which is the unique ability of LLMs. The second stage is to select a path on the task graph, where each node corresponds to a decomposed step. Thus, we can integrate GNNs in this stage. The illustration of our framework is shown in \textbf{Figure \ref{fig:model_full} in Appendix}.

For each decomposed step outputted by the first stage, we use a GNN to select a corresponding node within the task graph. Suppose we are selecting the node for the $i$-th decomposed step. First, we utilize a small pre-trained language model, e5-335M \cite{Wang2022e5}, to embed the $i$-th decomposed step. The resulting embedding is denoted as $\bm{x}_i^{\text{step}}$. Second, for the task graph, we first use the same pre-trained language model e5-335M to convert each node's description into embeddings, denoted as the node feature $\bm{h}^0_v$, where the superscript indicates the layer and the subscript represents the node. Then we adopt a $K$-layer SGC \cite{wu2019simplifying} to compute the final node embeddings, resulting in $\bm{h}_v = \bm{h}_v^{(K)}$. Given a sequence of previously selected task nodes $\{v_1, \cdots, v_{i-1}\}$, the next node $v_i$ is chosen according to $v_i = \text{argmax}_{v \in \mathcal{N} (v_{i-1})} \; \langle \bm{h}_v, \bm{x}_i^{\text{step}} \rangle$, where $\bm{h}_v$ is the final node embedding, and $\mathcal{N}(v_{i-1})$ denotes the neighbors of node $v_{i-1}$ in the task graph. Particularly, $v_1$ can be selected from the whole graph. 

As e5-335M is pre-trained and SGC is parameter-free, the proposed method requires no additional training and can be effectively applied in a zero-shot manner.

\subsection{A Training-required GNN-based Approach}
The inference process in training-required methods mirrors that of the training-free approach, with the difference being the substitution of parameter-free GNNs with parametric counterparts, such as GAT \cite{velickovic2018graph} or GraphSAGE \cite{hamilton2017inductive}. Here we specify the training process of GNNs. 
 
\textbf{Data Preparation:} We assume that each entry in the task planning dataset comprises a user request, a sequence of decomposed steps, and the corresponding ground-truth tasks, denoted as $(\text{request}, \{s_1, \ldots, s_n\},\{v_1, \ldots, v_n\})$. If the dataset does not adhere to this format, we reformat it accordingly using GPT-4, with details provided in Appendix \ref{sec:reformat}. It is important to note that there is a one-to-one correspondence between the steps and tasks in the dataset. Therefore, the training dataset can be represented as $\{(s_i, v_i)\}_{i=1}^n$, where $s_i$ is a step described in natural language, and $v_i$ is its corresponding invoked task.

\textbf{Training Loss:} The problem in the dataset can be viewed as a binary ranking problem, where the labeled node is $1$ and the other nodes are $0$. Therefore, we adopt the Bayesian Personalized Ranking (BPR) loss \cite{rendle2012bpr} designed for recommendation with binary rankings. The loss function is given by $\ell = \sum_{( \bm{x}^{\text{step}}, v, v')} - \log \; \sigma( \langle \bm{h}_v, \bm{x}^{\text{step}} \rangle -  \langle \bm{h}_{v'}, \bm{x}^{\text{step}} \rangle )$, where $\bm{x}^{\text{step}}$ represents the embedding of the step's textual description generated by e5-335M, $v$ is the ground-truth task, and $v'$ is a negative task. We select negative tasks that are textually similar to the positive task, and for computational efficiency, we limit our selection to $2$ negative tasks per positive task. The trainable parameters may merely include GNNs or both GNNs and e5-335M with illustrative configurations shown in \textbf{Figure \ref{fig:model_config} in Appendix}.

\section{Experiments and Analysis}

\subsection{Experimental Setup}\label{sec:exp_setup}
\textbf{Datasets:} We utilize four datasets across two task planning benchmarks: HuggingFace tasks, Multimedia tasks, and Daily Life API tasks from \textbf{TaskBench} \cite{shen2023taskbench}, as well as TMDB API tasks from \textbf{RestBench} \cite{song2023restgpt}. The HuggingFace dataset includes AI models on the HuggingFace. The Multimedia dataset provides a wide range of user-centric tasks, such as file downloading and video editing. The Daily Life APIs cater to everyday services like web search and shopping functionalities. TMDB focuses on movie-related search and retrieval tasks. Statistics for each dataset are presented in Table \ref{tab:datasets} with illustrative examples shown in Figure \ref{fig:dataset_illustration} in Appendix. Other benchmarks, such as ToolBench \cite{qin2023toolllm} and ToolAlpaca \cite{tang2023toolalpaca}, are less suitable for our experiments due to (1) the absence of a well-defined task graph detailing tasks and their dependencies, and (2) a scarcity of samples involving multi-task planning, with a focus on single-task retrieval.

\textbf{Evaluation:} For the datasets from TaskBench, we split $3000$ samples for training and $500$ samples for testing, each containing an invocation path with at least two tasks. For the TMDB dataset, we first filter to include the samples with two or more invoked tasks, and then randomly select a sample served as the in-context learning example. 
The remaining $94$ samples are designated for testing. 
We adopt the evaluation metric in TaskBench \cite{shen2023taskbench} and HuggingGPT \cite{shen2024hugginggpt}, i.e., Node F1-Score (\textit{n-F1}) and Link F1-Score (\textit{l-F1}), which measure the accuracy of invoked tasks and invoked dependencies, respectively. Besides, the Accuracy (\textit{Acc}) can measure the success rate from task level. We also measure the token consumption (\textit{\# tok}) as the efficiency metric. 

\textbf{Choices of LLMs:} We consider close-sourced LLMs, i.e., GPT-3.5-turbo and GPT-4-turbo, as well as open-sourced LLMs with different parameter scales, including CodeLlama-13B(or 7B)-Instruct-hf \cite{Rozire2023CodeLlama}, Mistral-7B-Instruct-v0.2 \cite{Jiang2023Mistral7}, Vicuna-13B-v1.5 \cite{Vicuna}, and Baichuan2-13B-Chat \cite{Yang2023Baichuan2O}. 

 \textbf{Choices of GNNs:} To comprehensively investigate the effectiveness of different graph learning methods for task planning, we consider a wide range of graph neural networks, including SGC \cite{wu2019simplifying}, GCN \cite{kipf2017semi}, GAT \cite{velickovic2018graph}, GraphSAGE \cite{hamilton2017inductive}, GIN \cite{xu2018how}, and Graph Transformers \cite{TransformerConv}.

\subsection{Performance of the Training-free Approach}
We compare the performance across three training-free methods: \textbf{(1) LLM's Direct Inference} is introduced in Section \ref{sec:llm_planning}. \textbf{(2) GraphSearch} \cite{Liu2023ControlLLMAL, Sun2023ThinkonGraphDA, Liu2024ToolNetCL} leverages the classic graph search method to generate the candidate nodes and uses LLMs to give a score for node selection. Given a step, \textbf{GreedySearch} consistently selects the node with the highest score and adjacent to the previous task node; \textbf{AdaptiveSearch} selects the nodes with scores above a fixed threshold, adjusting the breadth of the search space in an adaptive mode; \textbf{BeamSearch} retains the $k$ nodes with highest scores. \textbf{(3) SGC} \cite{wu2019simplifying} employs a training-free SGC for task retrieval based on decomposed task steps. The details of baselines are given in Section \ref{sec:implementation_trainfree} and illustrated in Figure \ref{fig:baseline}. Table \ref{tab:exp_trainfree} shows both the overall performance and token consumption costs, with results of Accuracy (\textit{Acc}) moved to Table \ref{tab:trainfree_acc} in Appendix.

\begin{table}[!t]
    \centering
    \caption{\textbf{Comparison of Training-free Methods: Overall Performance (Node-F1 and Link-F1 in $\%$) and Token Consumption in $\times 10^3$}. Performance of other LLMs are given in Table \ref{tab:exp_trainfree_full}.} 
    
     \resizebox{\textwidth}{!}{%
    \begin{tabular}{cc|ccc|ccc|ccc|ccc}
     \toprule

    \rowcolor{COLOR_MEAN} & & \multicolumn{9}{c|}{\textbf{TaskBench}} & \multicolumn{3}{c}{\textbf{RestBench}} \\
     
   \rowcolor{COLOR_MEAN}  &  & \multicolumn{3}{c|}{\textbf{HuggingFace}} & \multicolumn{3}{c|}{\textbf{Multimedia}} & \multicolumn{3}{c|}{\textbf{Daily Life}} & \multicolumn{3}{c}{\textbf{TMDB}} \\
     
    \rowcolor{COLOR_MEAN}  \multirow{-3}{*}{\textbf{LLM}} &  \multirow{-3}{*}{\textbf{Method}}  & \textit{n-F1}  $\uparrow$ & \textit{l-F1}  $\uparrow$ & \textit{\# Tok} $\downarrow$  & \textit{n-F1}  $\uparrow$ & \textit{l-F1}  $\uparrow$ & \textit{\# Tok} $\downarrow$ & \textit{n-F1}  $\uparrow$ & \textit{l-F1}  $\uparrow$ &   \textit{\# Tok} $\downarrow$ & \textit{n-F1}  $\uparrow$ & \textit{l-F1}  $\uparrow$ & \textit{\#Tok} $\downarrow$  \\  \midrule

        \multirow{5}{*}{\begin{tabular}{c} \textbf{Vicuna} \\ \textbf{13B} \end{tabular}} & Direct & 50.46 & 21.27 & 2.50 & 53.57 & 23.19 & 2.64 & 73.70 & 45.80 & 3.82 & 44.66 & 14.01 & 2.02 \\
    & GreedySearch & 52.94 & 25.73 & 6.23 & 46.99 & 23.11 & 5.55 & 42.98 & 13.33 & 7.18 & 45.22 & 13.69  & 3.42\\
    & AdaptiveSearch &  54.36 & 25.67 & 9.81 & 51.24 & 24.32 & 11.25 & 62.71 & 31.15 & 13.92 & 41.32 & 7.02 & 6.51 \\
    & BeamSearch & 56.64 & 26.93 & 24.11 & 54.09 & 26.19 & 25.42  & 54.55 & 23.60 & 24.86 & 46.91 & 15.41 & 7.79\\
   \rowcolor{blue!10}\cellcolor{white} & \textbf{SGC} & \textbf{59.62} & \textbf{31.98} & \textbf{2.31} & \textbf{61.78} & \textbf{37.60} & \textbf{2.43} & \textbf{83.33} & \textbf{63.77} & \textbf{3.82} & \textbf{48.79} & \textbf{15.99} & \textbf{1.89} \\ \midrule

    \multirow{5}{*}{\begin{tabular}{c} \textbf{Mistral} \\ \textbf{7B} \end{tabular}} & Direct & 60.60 & 30.23 & 2.49 & 69.83 & 39.85 & 2.64 & 84.26 & 53.63 & 3.77 & 62.23 & 22.02 & 1.96 \\
    & GreedySearch & 65.91 & 38.13 & 6.52 & 58.92 & 34.72 & 6.26 & 75.18 & 49.47 & 8.27 & 60.64 & 23.18 & 4.38 \\ 
    & AdaptiveSearch & 67.30 & 38.90 & 7.68 & 71.59 & 44.84 & 10.66 & 86.39 & 63.65 & 10.92 & 54.04 & 21.35 & 9.99\\
    & BeamSearch & 67.13 & 36.73 & 25.66 & 73.55 & 47.12 & 31.10 & 85.87 & 61.53 & 39.16 & 63.41 & \textbf{26.79} & 11.26\\ 
   \rowcolor{blue!10} \cellcolor{white}& \textbf{SGC} & \textbf{67.43} & \textbf{42.08} & \textbf{2.32} & \textbf{74.07} & \textbf{49.90} & \textbf{2.43} & \textbf{87.13} & \textbf{66.49} & \textbf{3.54} & \textbf{64.72} & 25.67 & \textbf{1.89} \\

    \midrule

      \multirow{5}{*}{\begin{tabular}{c} \textbf{CodeLlama} \\ \textbf{13B} \end{tabular}} & Direct & 57.55 & 28.88 & 2.45 & 68.57 & 41.79 & 2.59 & 91.20 & 76.07 & 3.88 & 68.91 & 43.74 & 2.02\\
      &  GreedySearch & 61.67 & 34.02 & 5.95 & 67.98 & 42.04 & 4.95 & 91.50 & 76.56 & 5.54 & 66.67 & 42.16 & 3.81 \\
      & AdaptiveSearch & 60.85 & 31.66 & 11.10 & 68.14 & 41.71 & 6.77 & 91.34 & 76.09 & 7.18 & 63.74 & 37.17 & 8.16 \\
      & BeamSearch & 62.65 & 34.31 & 20.14 & 69.53 & 43.35 &19.51 & 91.74 & 76.60 & 19.19 & 68.08 & 42.92 & 8.88\\
     \rowcolor{blue!10}\cellcolor{white} & \textbf{SGC} & \textbf{65.51} & \textbf{39.44} & \textbf{2.31} & \textbf{73.32} & \textbf{53.28} & \textbf{2.43} & \textbf{92.96} & \textbf{79.57} & \textbf{3.64} & \textbf{71.40} & \textbf{47.55} & \textbf{1.90} \\ 

     \midrule

   \multirow{5}{*}{\begin{tabular}{c} \textbf{GPT-} \\ \textbf{3.5-turbo} \end{tabular}} & Direct & 73.85 & 45.73 & 2.14 & 82.85 & 62.07 & 2.26 & 96.09 & 83.65 & 3.36 & 81.70 & 57.52 & 1.67 \\ 
   & GreedySearch & 67.75 & 43.88 & 5.29 & 81.11 & 63.02 & 4.92 & 93.77 & 81.26 & 7.36 & 76.19 & 50.11 & 3.06\\ 
   & AdaptiveSearch & 72.18 & 47.55 & 7.47 & 81.86 & 62.71 & 5.71 & 93.79 & 81.41 & 8.53 & 77.57 & 53.65 & 5.89 \\ 
   & BeamSearch & 75.51 & 49.62 & 14.22 & 83.57 & \textbf{64.50} & 12.91 & 95.66 & 82.72 & 22.05 & 81.24 & 57.98 & 6.42 \\ 
   \rowcolor{blue!10} \cellcolor{white}& \textbf{SGC} & \textbf{76.37} & \textbf{50.04} & \textbf{2.02} & \textbf{83.65} & 63.65 & \textbf{2.09} & \textbf{96.38} & \textbf{86.19} & \textbf{3.16} & \textbf{82.63} & \textbf{59.15} & \textbf{1.61} \\ 
   \midrule

   \multirow{5}{*}{\begin{tabular}{c} \textbf{GPT-} \\ \textbf{4-turbo} \end{tabular}} & Direct & 77.60 & 52.18 & 2.19 & 88.29 & 69.38 & 2.28  & \textbf{97.36} & 84.58 & 3.37 & \textbf{82.56} & \textbf{56.67} & 1.75 \\ 
   & GreedySearch & 74.75 & 50.44 & 5.78 & 86.81 & 69.80 & 5.52 & 97.36 & 85.78 & 7.37 & 75.34 & 49.95 & 3.73 \\ 
   & AdaptiveSearch & 76.17 & 51.30 & 8.94 & 88.02 & 69.99 & 7.14 & 97.30 & \textbf{85.80} & 9.04 & 81.78 & 55.15 & 6.35 \\ 
   & BeamSearch & 77.56 & \textbf{52.54} & 8.98 & 88.16 & \textbf{70.39} & 6.90  & 97.35 & 85.78 & 8.99  & 80.11 & 51.00 & 5.18 \\ 
   \rowcolor{blue!10}\cellcolor{white} & \textbf{SGC} & \textbf{77.79} & 52.20 & \textbf{2.03}  & \textbf{88.54} & 69.83 & \textbf{2.10} & 97.35 & 85.76 & \textbf{3.16} & 82.27 & 56.37 & \textbf{1.62} \\ 

     \bottomrule
   \end{tabular}
   }
   \label{tab:exp_trainfree}
   \vspace{-1.5em}
\end{table}

Compared with direct inference, integrating an SGC consistently improves performance, underscoring the effectiveness of the proposed method. GraphSearch-type methods rely on beam search to identify paths and employ LLMs for evaluation, where longer processing times generally lead to better outcomes. Notably, our proposed method achieves \textbf{comparable or superior performance} (Table \ref{tab:exp_trainfree} and Table \ref{tab:trainfree_acc}) to BeamSearch while requiring \textbf{$5$-$10$ times fewer tokens} (Table \ref{tab:exp_trainfree}) and inference time (Table \ref{tab:efficiency_trainfree}). The case studies are provided in Appendix \ref{sec:case_study}. However, we observed only marginal improvements with GPT-4-turbo. A unique feature of GPT-4-turbo is its ability to manage ChatGPT-plugins, and it may have been specially trained on task planning datasets. In addition, the pre-trained language model used for feature extraction in SGC is e5-335M, which may not be sufficiently powerful to effectively analyze GPT-4's output. A detailed diagnostic analysis of cases involving GPT-4 is provided in Figure \ref{fig:case_gpt4} in Appendix.

\subsection{Performance of the Training-based Approaches}
\textbf{Settings:} We further explore the efficacy of training-based GNNs in three TaskBench datasets. The TMDB dataset is excluded due to its limited sample size. Throughout our experiments, we trained a spectrum of GNN variants, both with and without co-training the small LM (i.e., e5-335M), whose role is to generate node embeddings derived from task names and descriptions. Owing to space constraints, we only show the performance of GraphSAGE in the main text, relegating a detailed comparison of all the situations to Table \ref{tab:exp_gnnfull} and Table \ref{tab:exp_lmgnn} in Appendix.

\textbf{Compared Methods:} Due to the lack of training-based baseline methods specifically for task planning, we adapt two existing approaches that combine LLMs and GNNs for graph-related tasks, including: (1) \textbf{TAPE} \cite{he2023harnessing} employs a $\text{LLM} \rightarrow \text{LM} \rightarrow \text{GNN}$ architecture for node classification task. In this framework, LLMs generate high-quality explanatory text for each predicted node, which is then fine-tuned by an LM to produce node embeddings. Finally, a GNN performs the downstream classification. We adapt TAPE for task planning by reformulating the problem as classifying user requests into corresponding node labels within the task graph. (2) \textbf{GraphToken} \cite{perozzi2024graphtoken} uses GNNs to tokenize graph nodes, which are then fed into LLMs to generate textual outputs. In our adaptation for task planning, we treat the user request as the input question and the expected plan as the generated answer. Additional implementation details are provided in the Appendix \ref{sec:tape_and_gtoken}.

\textbf{Observations:} From Table \ref{tab:exp_sage}, we observe a significant improvement in performance when employing a training-based GraphSAGE approach over the training-free method. However, the co-training of GNNs with e5-335M does not yield a marked improvement, suggesting that message passing is the crucial element for enhancing performance. Further analysis across a broad spectrum of GNNs (as shown in Table \ref{tab:exp_gnnfull} and Table \ref{tab:exp_lmgnn}) reveals that powerful GNNs, such as GINs, perform similarly to networks perceived as less complex, like GCNs, and even underperform compared to GraphSAGE. This pattern indicates that the task's challenge may not lie in the expressiveness of the models but rather in their ability to generalize. Regarding baselines, TAPE is unsuitable for task planning as its classification approach simplifies task planning, overlooking task dependencies. While GraphToken demonstrates superior performance over LLMs' direct inference, we have noted instances of minor hallucination. This observation suggests that GraphToken's understanding of the task graph is not yet perfect. In addition, GraphToken is limited to open-sourced LLMs. Besides, our proposed approaches also greatly boost the parameter prediction performance, as given in Appendix \ref{sec:parameter} (e.g. $9\%$ improvement to GPT-3.5-turbo and $3\%$ improvement to GPT-4-turbo).

\textbf{Efficiency:} The details of the training time are given in Table \ref{tab:efficiency}. The training cost is remarkably low because we use e5-335M \cite{Wang2022e5} as the text embedding model for GNNs. If the trainable parameters are limited to the GNNs alone, training typically concludes within just 3 minutes. Furthermore, the training duration extends to only 15 minutes when GNNs are jointly trained with e5-335M model. This efficiency stands in stark contrast to the 10-20 hours required for tuning open-sourced LLMs. 

\begin{table}[!t]
    \centering
     \caption{\textbf{Comparison with Training-based Approaches: Node-F1, Link-F1, and Accuracy are reported in $\%$}. TAPE \cite{he2023harnessing} is designed for node classification task and cannot predict links, so we report Link-F1 as ``NA''. GraphToken \cite{perozzi2024graphtoken} requires finetuning LLMs, which is not compatible with close-sourced LLMs. The performance of other GNNs and LLMs are given in Table \ref{tab:exp_gnnfull} and Table \ref{tab:exp_lmgnn} in the Appendix.}
    \resizebox{0.95\textwidth}{!}{%
     \begin{tabular}{cc|ccc|ccc|ccc}
     \toprule
     
     \rowcolor{COLOR_MEAN} &   & \multicolumn{3}{c|}{\textbf{HuggingFace}} & \multicolumn{3}{c|}{\textbf{Multimedia}} & \multicolumn{3}{c}{\textbf{Daily Life}} \\
     
     \rowcolor{COLOR_MEAN}  \multirow{-2}{*}{\textbf{LLM}}  & \multirow{-2}{*}{\textbf{Method}} &  \textit{n-F1}  $\uparrow$ & \textit{l-F1}  $\uparrow$ & \textit{Acc} $\uparrow$ & \textit{n-F1}  $\uparrow$ & \textit{l-F1}  $\uparrow$ & \textit{Acc} $\uparrow$ &  \textit{n-F1}  $\uparrow$ & \textit{l-F1}  $\uparrow$ & \textit{Acc} $\uparrow$  \\ 
      \midrule


       \multirow{5}{*}{\textbf{Vicuna-13B}} & Direct & 50.46 & 21.27 & 8.72 & 53.57 & 23.19 & 11.20 & 73.70 & 45.80 & 24.43 \\ 
       & TAPE & 59.47 & NA & 5.07 & 54.97 & NA & 2.07 & 73.26 & NA & 12.50 \\ 
       & GraphToken & \textbf{63.37} & 31.54 & 15.61 & 65.40 & 36.38 & 19.87 & 81.65 & 48.29 & 43.37 \\
      \rowcolor{blue!10} \cellcolor{white}& GraphSAGE & 61.86 & 35.68 & \textbf{20.08} & 63.71 & 39.88 & 21.37 & \textbf{86.07} & \textbf{67.63} & \textbf{48.64} \\
    \rowcolor{blue!10} \cellcolor{white}& GraphSAGE$_\text{co-train}$  & 62.82 & \textbf{37.04} & 19.68 & \textbf{65.89} & \textbf{42.18} & \textbf{21.58} & 84.23 & 65.44 & 47.81 \\ 
     \midrule



     \multirow{5}{*}{\textbf{Mistral-7B}} & Direct & 60.60 & 30.23 & 16.36 & 69.83 & 39.85 & 25.05 & 84.26 & 53.63 & 44.52 \\ 
     & TAPE & 61.82 & NA & 6.13 & 58.92 & NA & 3.29 & 76.40 & NA & 16.44 \\
     & GraphToken & 64.42 & 32.04 & 18.60 & 72.31 & 42.60 & 30.31 & 86.82 & 57.06 & \textbf{53.99} \\
      \rowcolor{blue!10} \cellcolor{white}& GraphSAGE & \textbf{68.12} & 43.09 & 25.77 & 75.51 & 52.94 & \textbf{34.29}  & 87.51 & 66.57 & 52.74 \\
    \rowcolor{blue!10} \cellcolor{white}& GraphSAGE$_\text{co-train}$ & 67.61 & \textbf{43.14} & \textbf{27.20}  & \textbf{76.96} & \textbf{55.46} & 33.26 & \textbf{87.61} & \textbf{66.75} & 52.97 \\ 

    \midrule

     \multirow{5}{*}{\textbf{CodeLlama-13B}} & Direct & 57.55 & 28.88 & 14.29 & 68.57 & 41.79 & 24.10 & 91.20 & 76.07 & 66.40 \\ 
      & TAPE & 64.03 & NA  & 8.05 & 58.27 & NA & 2.01 & 77.74 & NA & 17.37 \\
      & GraphToken & 62.15 & 32.55 & 20.08 & 74.57 & 47.60 & 35.06 & 92.50 & 73.57 & 69.42 \\
      \rowcolor{blue!10} \cellcolor{white}& GraphSAGE & 67.30 & 42.41 & 26.56 & 74.93 & 54.52 & \textbf{38.55} & \textbf{93.84} & \textbf{80.38} & 73.60 \\
    \rowcolor{blue!10} \cellcolor{white} & GraphSAGE$_\text{co-train}$ & \textbf{68.92} & \textbf{44.85} & \textbf{29.58} & \textbf{76.28} & \textbf{55.41} & 37.75 & 93.30 & 79.51 & \textbf{74.00} \\

      \midrule

      & Direct & 73.85 & 45.73 & 28.95 & 82.85 & 62.07  & 47.96 & 96.09 & 83.65 & 81.30 \\
      & TAPE & 68.00 & NA & 8.83 & 62.43 & NA & 3.87 & 70.67 & NA & 8.92 \\
      \rowcolor{blue!10} \cellcolor{white} \multirow{-2}{*}{\textbf{GPT-3.5-turbo}}& GraphSAGE & \textbf{77.90} & 52.68 & 35.11 & 85.29 & 65.80 & 53.55 & \textbf{96.43} & \textbf{86.26} & \textbf{83.13} \\ 
     \rowcolor{blue!10} \cellcolor{white} & GraphSAGE$_\text{co-train}$ & 77.87 & \textbf{53.04}  & \textbf{35.32} & \textbf{85.51} & \textbf{66.56} & \textbf{55.91} & 96.34 & 86.09 & 83.13 \\

      \midrule

        & Direct & 77.60 & 52.18 & 33.68 & 88.29 & 69.38 & 60.56 & 97.36 & 84.58 & 86.77 \\
        & TAPE & 68.82 & NA & 9.50 & 63.94 & NA & 4.02 & 71.51 & NA & 9.40 \\
      \rowcolor{blue!10}\multirow{-2}{*}{\textbf{GPT-4-turbo}} \cellcolor{white}& GraphSAGE & \textbf{78.76} & 52.53 & \textbf{34.09} &  88.63  & 69.65 & 60.36 & 97.34 & 85.67 & \textbf{86.97} \\ 
     \rowcolor{blue!10} \cellcolor{white}& GraphSAGE$_\text{co-train}$ & 78.49 & \textbf{52.62} & 33.88 & \textbf{88.86} & \textbf{70.25} & \textbf{62.37} & \textbf{97.42} & \textbf{85.80} & 86.57 \\
      
      \bottomrule
    \end{tabular}
    }

    \label{tab:exp_sage}
\end{table}

\begin{table}[!t]
    \centering
    \caption{\textbf{Performance Comparison of SGC and GraphSAGE on the UltraTool Benchmark \cite{huang2024planning}}. Integrating GNNs can lead to more significant improvements in LLMs on larger task graphs.}
    \resizebox{0.9\textwidth}{!}{%
    \begin{tabular}{cc|lllc|lllc}
        \toprule
      \rowcolor{COLOR_MEAN}   & & \multicolumn{4}{c}{\textbf{0-shot}} & \multicolumn{4}{c}{\textbf{1-shot}} \\
       \rowcolor{COLOR_MEAN} \multirow{-2}{*}{\textbf{LLM}} & \multirow{-2}{*}{\textbf{Method}} & \textit{n-F1} $\uparrow$ & \textit{l-F1} $\uparrow$ & \textit{Acc} $\uparrow$ & \textit{\# Tok} $\downarrow$  & \textit{n-F1} $\uparrow$ & 
      \textit{l-F1} $\uparrow$ & \textit{Acc} $\uparrow$ & \textit{\# Tok} $\downarrow$  \\ \midrule 
        & Direct & 38.88 & 16.42 & 13.58 & 10,535  & 57.64 & 30.44 & 26.25 & 10,737 \\
        & BeamSearch & 49.71 & 22.51 & 17.08 & 26,008 & 64.93 & 36.23 & 33.47 & 23,023 \\ 
       \rowcolor{blue!10} \cellcolor{white}& SGC & 61.07 & 37.61 & 25.31 & 10,456 & 71.64 & 44.00 & 39.68 & 10,658 \\
       \rowcolor{blue!10}\multirow{-4}{*}{\textbf{CodeLlama-13B} } \cellcolor{white} & GraphSAGE & \textbf{63.78} & \textbf{39.91}  & \textbf{27.98}  & \textbf{10,456} & \textbf{72.81}  & \textbf{45.26}  & \textbf{43.49}  & \textbf{10,658} \\ \midrule 
       
       & Direct & 54.35 & 21.35 & 18.33 & 8,462 & 63.58 & 30.85 & 25.00 & 8,614 \\ 
       & BeamSearch & 55.40 & 28.02 & 19.76 & 21,979 & 63.41 & 34.05 & 26.28 & 20,813 \\ 
       \rowcolor{blue!10} \cellcolor{white}& SGC & 59.80 & 37.82 & 25.87 & 8,352 & 64.96 & 37.96 & 29.70 & 8,504 \\ 
      \rowcolor{blue!10}\cellcolor{white}\multirow{-4}{*}{\textbf{GPT-3.5-turbo} } & GraphSAGE & \textbf{63.97}  & \textbf{42.26}  & \textbf{30.35}  & \textbf{8,352} & \textbf{70.49}  & \textbf{47.79}  & \textbf{39.74}  & \textbf{8,504} \\ \midrule
      & Direct & 68.63 & 40.01 & 27.20 & 8,513 & 69.54 & 41.79 & 28.17 & 8,693 \\ 
      & BeamSearch & \textbf{71.29} & \textbf{43.99} & 30.40 & 18,793 & \textbf{71.99} & 44.54 & 31.62 & 20,515 \\ 
      \rowcolor{blue!10} \cellcolor{white}& SGC & 70.87 & 44.01 & 31.60 & 8,346 & 70.46 & 44.82 & 33.00 & 8,504 \\ 
      \rowcolor{blue!10}\multirow{-4}{*}{\textbf{GPT-4-turbo} } \cellcolor{white}& GraphSAGE & 70.67  & 43.83 & \textbf{34.40}  & \textbf{8,346} & 70.75  & \textbf{47.68}  & \textbf{37.22} & \textbf{8,504} \\ \bottomrule
    \end{tabular}
    }
    \label{tab:ultratool}
    \vspace{-1em}
\end{table}

\subsection{Scaling to Large Task Graphs}

To demonstrate the scalability of our method to larger task graphs, we conducted a supplementary experiment on the newly released planning benchmark \textbf{UltraTool} \cite{huang2024planning}, which features a relatively large task graph with $260$ nodes. Details on processing this dataset are provided in Appendix \ref{sec:ultratool}. We present a performance comparison of GNN models (training-free SGC and training-required GraphSAGE) against strong baselines like BeamSearch in Table \ref{tab:ultratool}. Among the metrics, accuracy ($\textit{Acc}$) is calculated based on whether the predicted tasks match the ground-truth tasks, measuring the success rate at each case level. In such conditions, integrating a GNN significantly enhances performance and mitigates planning failures, e.g., GPT-4-turbo undergoes a \textbf{9.05\% accuracy improvement} with the introduction of GraphSAGE.

The results indicate that (1) LLMs' performance is vulnerable to the scale of task graphs; (2) The performance gain of the proposed method increases with a larger task graph.

\begin{figure}[!t]
    \centering
    \begin{subfigure}[b]{0.45\textwidth}
        \centering
        \includegraphics[width=\textwidth]{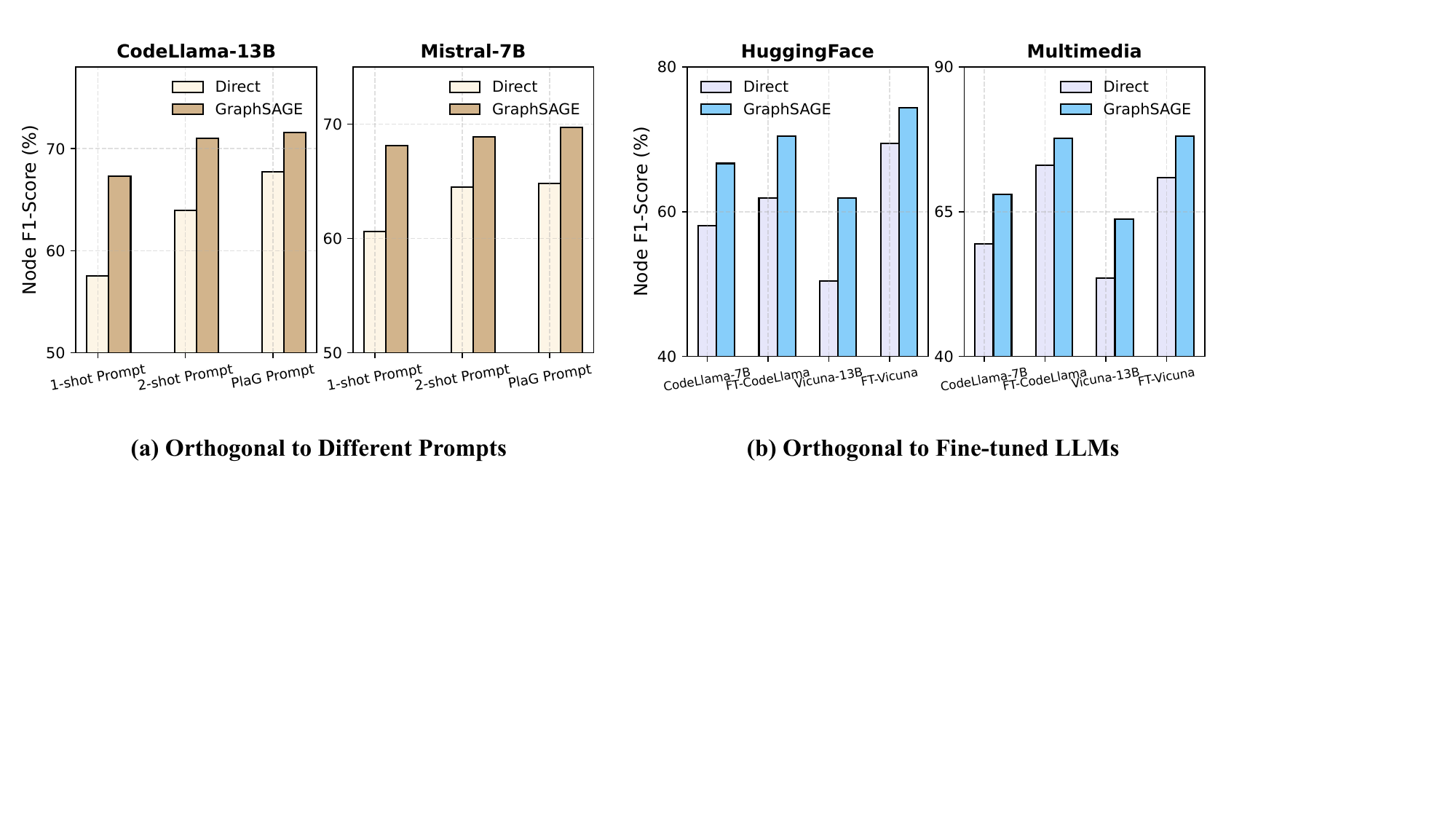}
        \caption{Different Prompts}
        \label{fig:orthogonal_prompt}
    \end{subfigure}
    \begin{subfigure}[b]{0.45\textwidth}
        \centering
        \includegraphics[width=\textwidth]{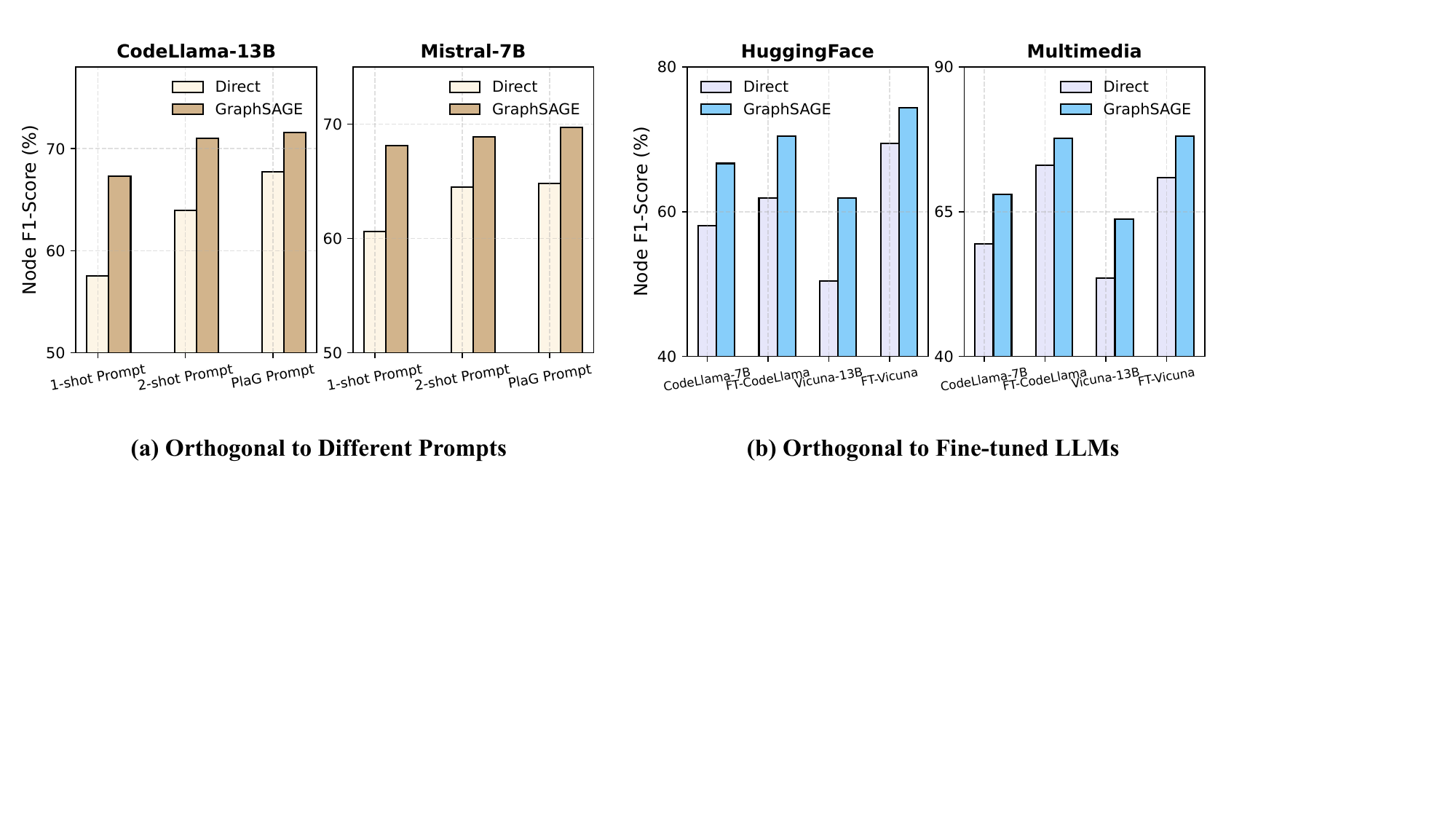}
        \caption{Fine-tuned LLMs}
        \label{fig:orthogonal_llm}
    \end{subfigure}
    \caption{\textbf{Orthogonal Effectiveness to both Improved Prompts and Fine-tuned LLMs}}
    \vspace*{-10pt}
\end{figure}

\subsection{Improved Prompts and Fine-tuned LLMs}
In this subsection, we show that the proposed method is orthogonal to two dominant methods, i.e., prompt engineering and fine-tuning. 

\textbf{Orthogonal to Improved Prompts:} We investigate GNN's effectiveness when applied to improved prompt templates, i.e., strategically designed prompts that enhance the task planning abilities of LLMs. Specifically, we consider two types of prompts: \textbf{(1) In-context Learning with Increased Examples \cite{shen2023taskbench}} During main experiments, we maintain the consistent 1-shot in-context learning example for LLM's direct inference. To realize further improvements, we increase the number of examples to $2$, and results under this setting are denoted as ``2-shot Prompt''; \textbf{(2) Plan like a Graph (PlaG) \cite{Lin2024GraphenhancedLL}} We adopt the prompt in \cite{Lin2024GraphenhancedLL} to encourage LLM to think and plan in a graph-like manner. Specifically, we convert the entire task graph into plain text and then integrate PlaG instructions to enhance LLM's planning. Results under this prompt template are denoted as ``PlaG Prompt''.

From the results shown in Figure \ref{fig:orthogonal_prompt}, where we apply three different prompts to CodeLlama-13B and Mistral-7B on HuggingFace, it is clear that applying GraphSAGE to improved prompts, where task steps are more concisely decomposed and predictions are more accurate, can also boost performance.

\textbf{Orthogonal to LLMs' Fine-tuning:} To explore whether our framework maintains effectiveness on fine-tuned LLMs, which have acquired dataset-specific task planning capabilities, we conduct further experiments. For each dataset, we use LoRA \cite{Hu2021LoRALA} to fine-tune two LLMs of different parameter scales, including CodeLlama-7B and Vicuna-13B, based on the same training data as GNNs. Details of fine-tuning process are provided in Appendix \ref{sec:finetune_llm}. The finetuned model is named as ``FT-CodeLlama'' and ``FT-Vicuna'' in Figure \ref{fig:orthogonal_llm}. 

The results depicted in Figure \ref{fig:orthogonal_llm} demonstrate that fine-tuning markedly enhances the task-planning capabilities of LLMs. Furthermore, applying GraphSAGE to the decomposed tasks of LLMs further improves the accuracy of task planning.

\section{Conclusions}\label{sec:conclude_limitation}
This paper presents an initial exploration into graph-learning-based approaches for task planning in language agents. Through theoretical analysis, we demonstrate the inductive bias of the attention mechanism and the utility of auto-regressive loss impedes their effectiveness in task planning. We propose to integrate GNNs for task graph analysis, which yields performance improvements across a range of LLMs and planning benchmarks.

\textbf{Limitations:} Despite the encouraging performance, there are limitations that highlight significant opportunities for enhancement. Firstly, our proposed method, while effective, is straightforward; more sophisticated graph-learning-based decision-making algorithms could potentially offer further improvements. Secondly, the construction of the task graph currently requires manual effort. Investigating automated graph generation techniques for diverse applications is another promising direction for future work.

\clearpage 
\newpage 

\section*{Acknowledgements}
This work is partly supported by grant from the Research Grants Council of the Hong Kong Special Administrative Region, China (No. CUHK 14217622). 

\small
\bibliographystyle{nips}
\bibliography{refs}

\begin{thebibliography}{73}
\providecommand{\natexlab}[1]{#1}
\providecommand{\url}[1]{\texttt{#1}}
\expandafter\ifx\csname urlstyle\endcsname\relax
  \providecommand{\doi}[1]{doi: #1}\else
  \providecommand{\doi}{doi: \begingroup \urlstyle{rm}\Url}\fi

\bibitem[Barto et~al.(1995)Barto, Bradtke, and Singh]{barto1995learning}
Andrew~G Barto, Steven~J Bradtke, and Satinder~P Singh.
\newblock Learning to act using real-time dynamic programming.
\newblock \emph{Artificial intelligence}, 72\penalty0 (1-2):\penalty0 81--138, 1995.

\bibitem[Behrens et~al.(2018)Behrens, Muller, Whittington, Mark, Baram, Stachenfeld, and Kurth-Nelson]{behrens2018cognitive}
Timothy~EJ Behrens, Timothy~H Muller, James~CR Whittington, Shirley Mark, Alon~B Baram, Kimberly~L Stachenfeld, and Zeb Kurth-Nelson.
\newblock What is a cognitive map? organizing knowledge for flexible behavior.
\newblock \emph{Neuron}, 100\penalty0 (2):\penalty0 490--509, 2018.

\bibitem[Bellman(1966)]{bellman1966dynamic}
Richard Bellman.
\newblock Dynamic programming.
\newblock \emph{science}, 153\penalty0 (3731):\penalty0 34--37, 1966.

\bibitem[Besta et~al.(2024)Besta, Blach, Kubicek, Gerstenberger, Podstawski, Gianinazzi, Gajda, Lehmann, Niewiadomski, Nyczyk, et~al.]{besta2024graph}
Maciej Besta, Nils Blach, Ales Kubicek, Robert Gerstenberger, Michal Podstawski, Lukas Gianinazzi, Joanna Gajda, Tomasz Lehmann, Hubert Niewiadomski, Piotr Nyczyk, et~al.
\newblock Graph of thoughts: Solving elaborate problems with large language models.
\newblock In \emph{Proceedings of the AAAI Conference on Artificial Intelligence}, volume~38, pp.\  17682--17690, 2024.

\bibitem[Boiko et~al.(2023)Boiko, MacKnight, Kline, and Gomes]{boiko2023autonomous}
Daniil~A Boiko, Robert MacKnight, Ben Kline, and Gabe Gomes.
\newblock Autonomous chemical research with large language models.
\newblock \emph{Nature}, 624\penalty0 (7992):\penalty0 570--578, 2023.

\bibitem[Bonet \& Geffner(2003)Bonet and Geffner]{bonet2003labeled}
Blai Bonet and Hector Geffner.
\newblock Labeled rtdp: Improving the convergence of real-time dynamic programming.
\newblock In \emph{ICAPS}, volume~3, pp.\  12--21, 2003.

\bibitem[Bubeck et~al.(2023)Bubeck, Chandrasekaran, Eldan, Gehrke, Horvitz, Kamar, Lee, Lee, Li, Lundberg, et~al.]{bubeck2023sparks}
S{\'e}bastien Bubeck, Varun Chandrasekaran, Ronen Eldan, Johannes Gehrke, Eric Horvitz, Ece Kamar, Peter Lee, Yin~Tat Lee, Yuanzhi Li, Scott Lundberg, et~al.
\newblock Sparks of artificial general intelligence: Early experiments with gpt-4.
\newblock \emph{arXiv preprint arXiv:2303.12712}, 2023.

\bibitem[Cappart et~al.(2023)Cappart, Ch{\'e}telat, Khalil, Lodi, Morris, and Veli{\v{c}}kovi{\'c}]{cappart2023combinatorial}
Quentin Cappart, Didier Ch{\'e}telat, Elias~B Khalil, Andrea Lodi, Christopher Morris, and Petar Veli{\v{c}}kovi{\'c}.
\newblock Combinatorial optimization and reasoning with graph neural networks.
\newblock \emph{Journal of Machine Learning Research}, 24\penalty0 (130):\penalty0 1--61, 2023.

\bibitem[Chai et~al.(2023)Chai, Zhang, Wu, Han, Hu, Huang, and Yang]{chai2023graphllm}
Ziwei Chai, Tianjie Zhang, Liang Wu, Kaiqiao Han, Xiaohai Hu, Xuanwen Huang, and Yang Yang.
\newblock {GraphLLM}: Boosting graph reasoning ability of large language model.
\newblock \emph{arXiv preprint arXiv:2310.05845}, 2023.

\bibitem[Chen et~al.(2024)Chen, Thi{\'e}baux, and Trevizan]{chen2024learning}
Dillon~Z Chen, Sylvie Thi{\'e}baux, and Felipe Trevizan.
\newblock Learning domain-independent heuristics for grounded and lifted planning.
\newblock In \emph{Proceedings of the AAAI Conference on Artificial Intelligence}, volume~38, pp.\  20078--20086, 2024.

\bibitem[Dudzik \& Veli{\v{c}}kovi{\'c}(2022)Dudzik and Veli{\v{c}}kovi{\'c}]{dudzik2022graph}
Andrew~J Dudzik and Petar Veli{\v{c}}kovi{\'c}.
\newblock Graph neural networks are dynamic programmers.
\newblock \emph{Advances in neural information processing systems}, 35:\penalty0 20635--20647, 2022.

\bibitem[Feng et~al.(2024)Feng, Zhang, Gu, Ye, He, and Wang]{feng2024towards}
Guhao Feng, Bohang Zhang, Yuntian Gu, Haotian Ye, Di~He, and Liwei Wang.
\newblock Towards revealing the mystery behind chain of thought: a theoretical perspective.
\newblock \emph{Advances in Neural Information Processing Systems}, 36, 2024.

\bibitem[Gasse et~al.(2019)Gasse, Ch{\'e}telat, Ferroni, Charlin, and Lodi]{gasse2019exact}
Maxime Gasse, Didier Ch{\'e}telat, Nicola Ferroni, Laurent Charlin, and Andrea Lodi.
\newblock Exact combinatorial optimization with graph convolutional neural networks.
\newblock \emph{Advances in neural information processing systems}, 32, 2019.

\bibitem[Guo et~al.(2023)Guo, Du, and Liu]{guo2023gpt4graph}
Jiayan Guo, Lun Du, and Hengyu Liu.
\newblock Gpt4graph: Can large language models understand graph structured data? an empirical evaluation and benchmarking.
\newblock \emph{arXiv preprint arXiv:2305.15066}, 2023.

\bibitem[Hamilton et~al.(2017)Hamilton, Ying, and Leskovec]{hamilton2017inductive}
William~L. Hamilton, Rex Ying, and Jure Leskovec.
\newblock Inductive representation learning on large graphs.
\newblock In \emph{NIPS}, 2017.

\bibitem[He et~al.(2023)He, Bresson, Laurent, Perold, LeCun, and Hooi]{he2023harnessing}
Xiaoxin He, Xavier Bresson, Thomas Laurent, Adam Perold, Yann LeCun, and Bryan Hooi.
\newblock Harnessing explanations: Llm-to-lm interpreter for enhanced text-attributed graph representation learning, 2023.

\bibitem[He et~al.(2024)He, Tian, Sun, Chawla, Laurent, LeCun, Bresson, and Hooi]{he2024gretriever}
Xiaoxin He, Yijun Tian, Yifei Sun, Nitesh~V Chawla, Thomas Laurent, Yann LeCun, Xavier Bresson, and Bryan Hooi.
\newblock G-retriever: Retrieval-augmented generation for textual graph understanding and question answering.
\newblock \emph{arXiv preprint arXiv:2402.07630}, 2024.

\bibitem[Helmert \& Domshlak(2009)Helmert and Domshlak]{helmert2009landmarks}
Malte Helmert and Carmel Domshlak.
\newblock Landmarks, critical paths and abstractions: what's the difference anyway?
\newblock In \emph{Proceedings of the International Conference on Automated Planning and Scheduling}, volume~19, pp.\  162--169, 2009.

\bibitem[Hu et~al.(2021)Hu, Shen, Wallis, Allen-Zhu, Li, Wang, and Chen]{Hu2021LoRALA}
J.~Edward Hu, Yelong Shen, Phillip Wallis, Zeyuan Allen-Zhu, Yuanzhi Li, Shean Wang, and Weizhu Chen.
\newblock Lora: Low-rank adaptation of large language models.
\newblock \emph{ArXiv}, abs/2106.09685, 2021.

\bibitem[Huang et~al.(2024{\natexlab{a}})Huang, Zhong, Lu, Zhu, Gao, Liu, Hou, Zeng, Wang, Shang, Jiang, Xu, and Liu]{huang2024planning}
Shijue Huang, Wanjun Zhong, Jianqiao Lu, Qi~Zhu, Jiahui Gao, Weiwen Liu, Yutai Hou, Xingshan Zeng, Yasheng Wang, Lifeng Shang, Xin Jiang, Ruifeng Xu, and Qun Liu.
\newblock Planning, creation, usage: Benchmarking llms for comprehensive tool utilization in real-world complex scenarios, 2024{\natexlab{a}}.

\bibitem[Huang et~al.(2022)Huang, Abbeel, Pathak, and Mordatch]{huang2022language}
Wenlong Huang, Pieter Abbeel, Deepak Pathak, and Igor Mordatch.
\newblock Language models as zero-shot planners: Extracting actionable knowledge for embodied agents.
\newblock In \emph{International Conference on Machine Learning}, pp.\  9118--9147. PMLR, 2022.

\bibitem[Huang et~al.(2024{\natexlab{b}})Huang, Liu, Chen, Wang, Wang, Lian, Wang, Tang, and Chen]{huang2024understanding}
Xu~Huang, Weiwen Liu, Xiaolong Chen, Xingmei Wang, Hao Wang, Defu Lian, Yasheng Wang, Ruiming Tang, and Enhong Chen.
\newblock Understanding the planning of llm agents: A survey.
\newblock \emph{arXiv preprint arXiv:2402.02716}, 2024{\natexlab{b}}.

\bibitem[Jiang et~al.(2023)Jiang, Sablayrolles, Mensch, Bamford, Chaplot, de~Las~Casas, Bressand, Lengyel, Lample, Saulnier, Lavaud, Lachaux, Stock, Scao, Lavril, Wang, Lacroix, and Sayed]{Jiang2023Mistral7}
Albert~Qiaochu Jiang, Alexandre Sablayrolles, Arthur Mensch, Chris Bamford, Devendra~Singh Chaplot, Diego de~Las~Casas, Florian Bressand, Gianna Lengyel, Guillaume Lample, Lucile Saulnier, L'elio~Renard Lavaud, Marie-Anne Lachaux, Pierre Stock, Teven~Le Scao, Thibaut Lavril, Thomas Wang, Timoth{\'e}e Lacroix, and William~El Sayed.
\newblock Mistral 7b.
\newblock \emph{ArXiv}, abs/2310.06825, 2023.

\bibitem[Khalil et~al.(2017)Khalil, Dai, Zhang, Dilkina, and Song]{khalil2017learning}
Elias Khalil, Hanjun Dai, Yuyu Zhang, Bistra Dilkina, and Le~Song.
\newblock Learning combinatorial optimization algorithms over graphs.
\newblock \emph{Advances in neural information processing systems}, 30, 2017.

\bibitem[Kingma \& Ba(2014)Kingma and Ba]{Kingma2014AdamAM}
Diederik~P. Kingma and Jimmy Ba.
\newblock Adam: A method for stochastic optimization.
\newblock \emph{CoRR}, abs/1412.6980, 2014.

\bibitem[Kipf \& Welling(2017)Kipf and Welling]{kipf2017semi}
Thomas~N. Kipf and Max Welling.
\newblock Semi-supervised classification with graph convolutional networks.
\newblock In \emph{International Conference on Learning Representations (ICLR)}, 2017.

\bibitem[Li et~al.(2023)Li, Li, Wang, Li, Sun, Cheng, and Yu]{li2023survey}
Yuhan Li, Zhixun Li, Peisong Wang, Jia Li, Xiangguo Sun, Hong Cheng, and Jeffrey~Xu Yu.
\newblock A survey of graph meets large language model: Progress and future directions.
\newblock \emph{arXiv preprint arXiv:2311.12399}, 2023.

\bibitem[Li et~al.(2024)Li, Wang, Li, Yu, and Li]{li2024zerog}
Yuhan Li, Peisong Wang, Zhixun Li, Jeffrey~Xu Yu, and Jia Li.
\newblock Zerog: Investigating cross-dataset zero-shot transferability in graphs.
\newblock \emph{arXiv preprint arXiv:2402.11235}, 2024.

\bibitem[Li et~al.(2018)Li, Chen, and Koltun]{li2018combinatorial}
Zhuwen Li, Qifeng Chen, and Vladlen Koltun.
\newblock Combinatorial optimization with graph convolutional networks and guided tree search.
\newblock \emph{Advances in neural information processing systems}, 31, 2018.

\bibitem[Lin et~al.(2024)Lin, Malfa, Hofmann, Yang, Cohn, and Pierrehumbert]{Lin2024GraphenhancedLL}
Fangru Lin, Emanuele~La Malfa, Valentin Hofmann, Elle~Michelle Yang, Anthony Cohn, and Janet~B. Pierrehumbert.
\newblock Graph-enhanced large language models in asynchronous plan reasoning.
\newblock \emph{ArXiv}, abs/2402.02805, 2024.

\bibitem[Liu et~al.(2023{\natexlab{a}})Liu, Jiang, Zhang, Liu, Zhang, Biswas, and Stone]{liu2023llm+}
Bo~Liu, Yuqian Jiang, Xiaohan Zhang, Qiang Liu, Shiqi Zhang, Joydeep Biswas, and Peter Stone.
\newblock Llm+ p: Empowering large language models with optimal planning proficiency.
\newblock \emph{arXiv preprint arXiv:2304.11477}, 2023{\natexlab{a}}.

\bibitem[Liu et~al.(2024)Liu, Peng, Yi, Xie, Xiang, Liu, and Xu]{Liu2024ToolNetCL}
Xukun Liu, Zhiyuan Peng, Xiaoyuan Yi, Xing Xie, Lirong Xiang, Yuchen Liu, and Dongkuan Xu.
\newblock Toolnet: Connecting large language models with massive tools via tool graph.
\newblock \emph{ArXiv}, abs/2403.00839, 2024.

\bibitem[Liu et~al.(2023{\natexlab{b}})Liu, Lai, Gao, Cui, Zhu, Lu, Chen, Qiao, Dai, and Wang]{Liu2023ControlLLMAL}
Zhaoyang Liu, Zeqiang Lai, Zhangwei Gao, Erfei Cui, Xizhou Zhu, Lewei Lu, Qifeng Chen, Yu~Qiao, Jifeng Dai, and Wenhai Wang.
\newblock Controlllm: Augment language models with tools by searching on graphs.
\newblock \emph{ArXiv}, abs/2310.17796, 2023{\natexlab{b}}.

\bibitem[Luo et~al.(2024)Luo, Song, Huang, Lian, Zhang, Jiang, Xie, and Jin]{luo2024graphinstruct}
Zihan Luo, Xiran Song, Hong Huang, Jianxun Lian, Chenhao Zhang, Jinqi Jiang, Xing Xie, and Hai Jin.
\newblock Graphinstruct: Empowering large language models with graph understanding and reasoning capability.
\newblock \emph{arXiv preprint arXiv:2403.04483}, 2024.

\bibitem[Mao et~al.(2023{\natexlab{a}})Mao, Lozano-P{\'e}rez, Tenenbaum, and Kaelbling]{mao2024planning}
Jiayuan Mao, Tom{\'a}s Lozano-P{\'e}rez, Josh Tenenbaum, and Leslie Kaelbling.
\newblock What planning problems can a relational neural network solve?
\newblock \emph{Advances in Neural Information Processing Systems}, 36, 2023{\natexlab{a}}.

\bibitem[Mao et~al.(2023{\natexlab{b}})Mao, Lozano-Perez, Tenenbaum, and Kaelbing]{Mao2023RegressionWidth}
Jiayuan Mao, Tomas Lozano-Perez, Joshua~B. Tenenbaum, and Leslie~Pack Kaelbing.
\newblock {What Planning Problems Can A Relational Neural Network Solve?}
\newblock In \emph{Advances in Neural Information Processing Systems (NeurIPS)}, 2023{\natexlab{b}}.

\bibitem[Momennejad et~al.(2023)Momennejad, Hasanbeig, Vieira~Frujeri, Sharma, Jojic, Palangi, Ness, and Larson]{momennejad2024evaluating}
Ida Momennejad, Hosein Hasanbeig, Felipe Vieira~Frujeri, Hiteshi Sharma, Nebojsa Jojic, Hamid Palangi, Robert Ness, and Jonathan Larson.
\newblock Evaluating cognitive maps and planning in large language models with {CogEval}.
\newblock \emph{Advances in Neural Information Processing Systems}, 36, 2023.

\bibitem[Perozzi et~al.(2024)Perozzi, Fatemi, Zelle, Tsitsulin, Kazemi, Al-Rfou, and Halcrow]{perozzi2024graphtoken}
Bryan Perozzi, Bahare Fatemi, Dustin Zelle, Anton Tsitsulin, Mehran Kazemi, Rami Al-Rfou, and Jonathan Halcrow.
\newblock Let your graph do the talking: Encoding structured data for llms, 2024.

\bibitem[Qin et~al.(2023)Qin, Liang, Ye, Zhu, Yan, Lu, Lin, Cong, Tang, Qian, Zhao, Tian, Xie, Zhou, Gerstein, Li, Liu, and Sun]{qin2023toolllm}
Yujia Qin, Shihao Liang, Yining Ye, Kunlun Zhu, Lan Yan, Yaxi Lu, Yankai Lin, Xin Cong, Xiangru Tang, Bill Qian, Sihan Zhao, Runchu Tian, Ruobing Xie, Jie Zhou, Mark Gerstein, Dahai Li, Zhiyuan Liu, and Maosong Sun.
\newblock Toolllm: Facilitating large language models to master 16000+ real-world apis, 2023.

\bibitem[Reimers \& Gurevych(2019)Reimers and Gurevych]{Reimers2019SentenceBERTSE}
Nils Reimers and Iryna Gurevych.
\newblock Sentence-bert: Sentence embeddings using siamese bert-networks.
\newblock In \emph{Conference on Empirical Methods in Natural Language Processing}, 2019.

\bibitem[Rendle et~al.(2012)Rendle, Freudenthaler, Gantner, and Schmidt-Thieme]{rendle2012bpr}
Steffen Rendle, Christoph Freudenthaler, Zeno Gantner, and Lars Schmidt-Thieme.
\newblock Bpr: Bayesian personalized ranking from implicit feedback.
\newblock \emph{arXiv preprint arXiv:1205.2618}, 2012.

\bibitem[Rozi{\`e}re et~al.(2023)Rozi{\`e}re, Gehring, Gloeckle, Sootla, Gat, Tan, Adi, Liu, Remez, Rapin, Kozhevnikov, Evtimov, Bitton, Bhatt, Ferrer, Grattafiori, Xiong, D'efossez, Copet, Azhar, Touvron, Martin, Usunier, Scialom, and Synnaeve]{Rozire2023CodeLlama}
Baptiste Rozi{\`e}re, Jonas Gehring, Fabian Gloeckle, Sten Sootla, Itai Gat, Xiaoqing Tan, Yossi Adi, Jingyu Liu, Tal Remez, J{\'e}r{\'e}my Rapin, Artyom Kozhevnikov, I.~Evtimov, Joanna Bitton, Manish~P Bhatt, Cristian~Cant{\'o}n Ferrer, Aaron Grattafiori, Wenhan Xiong, Alexandre D'efossez, Jade Copet, Faisal Azhar, Hugo Touvron, Louis Martin, Nicolas Usunier, Thomas Scialom, and Gabriel Synnaeve.
\newblock Code llama: Open foundation models for code.
\newblock \emph{ArXiv}, abs/2308.12950, 2023.

\bibitem[Schick et~al.(2024)Schick, Dwivedi-Yu, Dess{\`\i}, Raileanu, Lomeli, Hambro, Zettlemoyer, Cancedda, and Scialom]{schick2024toolformer}
Timo Schick, Jane Dwivedi-Yu, Roberto Dess{\`\i}, Roberta Raileanu, Maria Lomeli, Eric Hambro, Luke Zettlemoyer, Nicola Cancedda, and Thomas Scialom.
\newblock Toolformer: Language models can teach themselves to use tools.
\newblock \emph{Advances in Neural Information Processing Systems}, 36, 2024.

\bibitem[Shen et~al.(2020)Shen, Trevizan, and Thi{\'e}baux]{shen2020learning}
William Shen, Felipe Trevizan, and Sylvie Thi{\'e}baux.
\newblock Learning domain-independent planning heuristics with hypergraph networks.
\newblock In \emph{Proceedings of the International Conference on Automated Planning and Scheduling}, volume~30, pp.\  574--584, 2020.

\bibitem[Shen et~al.(2023)Shen, Song, Tan, Zhang, Ren, Yuan, Lu, Li, and Zhuang]{shen2023taskbench}
Yongliang Shen, Kaitao Song, Xu~Tan, Wenqi Zhang, Kan Ren, Siyu Yuan, Weiming Lu, Dongsheng Li, and Yueting Zhuang.
\newblock Taskbench: Benchmarking large language models for task automation.
\newblock \emph{arXiv preprint arXiv:2311.18760}, 2023.

\bibitem[Shen et~al.(2024)Shen, Song, Tan, Li, Lu, and Zhuang]{shen2024hugginggpt}
Yongliang Shen, Kaitao Song, Xu~Tan, Dongsheng Li, Weiming Lu, and Yueting Zhuang.
\newblock Hugginggpt: Solving ai tasks with chatgpt and its friends in hugging face.
\newblock \emph{Advances in Neural Information Processing Systems}, 36, 2024.

\bibitem[Shi et~al.(2021)Shi, Huang, Wang, Zhong, Feng, and Sun]{TransformerConv}
Yunsheng Shi, Zhengjie Huang, Wenjin Wang, Hui Zhong, Shikun Feng, and Yu~Sun.
\newblock Masked label prediction: Unified massage passing model for semi-supervised classification.
\newblock In \emph{Proceedings of the Thirtieth International Joint Conference on Artificial Intelligence (IJCA)}, 2021.

\bibitem[Shinn et~al.(2024)Shinn, Cassano, Gopinath, Narasimhan, and Yao]{shinn2024reflexion}
Noah Shinn, Federico Cassano, Ashwin Gopinath, Karthik Narasimhan, and Shunyu Yao.
\newblock Reflexion: Language agents with verbal reinforcement learning.
\newblock \emph{Advances in Neural Information Processing Systems}, 36, 2024.

\bibitem[Singh et~al.(2023)Singh, Blukis, Mousavian, Goyal, Xu, Tremblay, Fox, Thomason, and Garg]{singh2023progprompt}
Ishika Singh, Valts Blukis, Arsalan Mousavian, Ankit Goyal, Danfei Xu, Jonathan Tremblay, Dieter Fox, Jesse Thomason, and Animesh Garg.
\newblock Progprompt: Generating situated robot task plans using large language models.
\newblock In \emph{2023 IEEE International Conference on Robotics and Automation (ICRA)}, pp.\  11523--11530. IEEE, 2023.

\bibitem[Song et~al.(2023)Song, Xiong, Zhu, Wu, Qian, Song, Huang, Li, Wang, Yao, Tian, and Li]{song2023restgpt}
Yifan Song, Weimin Xiong, Dawei Zhu, Wenhao Wu, Han Qian, Mingbo Song, Hailiang Huang, Cheng Li, Ke~Wang, Rong Yao, Ye~Tian, and Sujian Li.
\newblock Restgpt: Connecting large language models with real-world restful apis, 2023.

\bibitem[St{\aa}hlberg et~al.(2022)St{\aa}hlberg, Bonet, and Geffner]{staahlberg2022learning}
Simon St{\aa}hlberg, Blai Bonet, and Hector Geffner.
\newblock Learning general optimal policies with graph neural networks: Expressive power, transparency, and limits.
\newblock In \emph{Proceedings of the International Conference on Automated Planning and Scheduling}, volume~32, pp.\  629--637, 2022.

\bibitem[Sun et~al.(2023)Sun, Xu, Tang, Wang, Lin, Gong, Ni, yeung Shum, and Guo]{Sun2023ThinkonGraphDA}
Jiashuo Sun, Chengjin Xu, Lumingyuan Tang, Sai Wang, Chen Lin, Yeyun Gong, Lionel~M. Ni, Heung yeung Shum, and Jian Guo.
\newblock Think-on-graph: Deep and responsible reasoning of large language model on knowledge graph.
\newblock 2023.

\bibitem[Tang et~al.(2023{\natexlab{a}})Tang, Yang, Wei, Shi, Su, Cheng, Yin, and Huang]{tang2023graphgpt}
Jiabin Tang, Yuhao Yang, Wei Wei, Lei Shi, Lixin Su, Suqi Cheng, Dawei Yin, and Chao Huang.
\newblock {GraphGPT}: Graph instruction tuning for large language models.
\newblock \emph{arXiv preprint arXiv:2310.13023}, 2023{\natexlab{a}}.

\bibitem[Tang et~al.(2023{\natexlab{b}})Tang, Deng, Lin, Han, Liang, and Sun]{tang2023toolalpaca}
Qiaoyu Tang, Ziliang Deng, Hongyu Lin, Xianpei Han, Qiao Liang, and Le~Sun.
\newblock Toolalpaca: Generalized tool learning for language models with 3000 simulated cases, 2023{\natexlab{b}}.

\bibitem[Toyer et~al.(2019)Toyer, Trevizan, Thi{\'e}baux, and Xie]{Toyer2019ASNetsDL}
Sam Toyer, Felipe~W. Trevizan, Sylvie Thi{\'e}baux, and Lexing Xie.
\newblock Asnets: Deep learning for generalised planning.
\newblock \emph{ArXiv}, abs/1908.01362, 2019.

\bibitem[Trinh et~al.(2024)Trinh, Wu, Le, He, and Luong]{trinh2024solving}
Trieu~H Trinh, Yuhuai Wu, Quoc~V Le, He~He, and Thang Luong.
\newblock Solving {Olympiad} geometry without human demonstrations.
\newblock \emph{Nature}, 625\penalty0 (7995):\penalty0 476--482, 2024.

\bibitem[Veli{\v{c}}kovi{\'{c}} et~al.(2018)Veli{\v{c}}kovi{\'{c}}, Cucurull, Casanova, Romero, Li{\`{o}}, and Bengio]{velickovic2018graph}
Petar Veli{\v{c}}kovi{\'{c}}, Guillem Cucurull, Arantxa Casanova, Adriana Romero, Pietro Li{\`{o}}, and Yoshua Bengio.
\newblock Graph attention networks.
\newblock \emph{International Conference on Learning Representations}, 2018.

\bibitem[Wang et~al.(2023{\natexlab{a}})Wang, Xie, Jiang, Mandlekar, Xiao, Zhu, Fan, and Anandkumar]{wang2023voyager}
Guanzhi Wang, Yuqi Xie, Yunfan Jiang, Ajay Mandlekar, Chaowei Xiao, Yuke Zhu, Linxi Fan, and Anima Anandkumar.
\newblock Voyager: An open-ended embodied agent with large language models.
\newblock \emph{arXiv preprint arXiv:2305.16291}, 2023{\natexlab{a}}.

\bibitem[Wang et~al.(2023{\natexlab{b}})Wang, Feng, He, Tan, Han, and Tsvetkov]{wang2024can}
Heng Wang, Shangbin Feng, Tianxing He, Zhaoxuan Tan, Xiaochuang Han, and Yulia Tsvetkov.
\newblock Can language models solve graph problems in natural language?
\newblock \emph{Advances in Neural Information Processing Systems}, 36, 2023{\natexlab{b}}.

\bibitem[Wang et~al.(2023{\natexlab{c}})Wang, Xu, Lan, Hu, Lan, Lee, and Lim]{wang2023plan}
Lei Wang, Wanyu Xu, Yihuai Lan, Zhiqiang Hu, Yunshi Lan, Roy Ka-Wei Lee, and Ee-Peng Lim.
\newblock Plan-and-solve prompting: Improving zero-shot chain-of-thought reasoning by large language models.
\newblock \emph{arXiv preprint arXiv:2305.04091}, 2023{\natexlab{c}}.

\bibitem[Wang et~al.(2024)Wang, Ma, Feng, Zhang, Yang, Zhang, Chen, Tang, Chen, Lin, et~al.]{wang2024survey}
Lei Wang, Chen Ma, Xueyang Feng, Zeyu Zhang, Hao Yang, Jingsen Zhang, Zhiyuan Chen, Jiakai Tang, Xu~Chen, Yankai Lin, et~al.
\newblock A survey on large language model based autonomous agents.
\newblock \emph{Frontiers of Computer Science}, 18\penalty0 (6):\penalty0 1--26, 2024.

\bibitem[Wang et~al.(2022)Wang, Yang, Huang, Jiao, Yang, Jiang, Majumder, and Wei]{Wang2022e5}
Liang Wang, Nan Yang, Xiaolong Huang, Binxing Jiao, Linjun Yang, Daxin Jiang, Rangan Majumder, and Furu Wei.
\newblock Text embeddings by weakly-supervised contrastive pre-training.
\newblock \emph{ArXiv}, abs/2212.03533, 2022.

\bibitem[Wei et~al.(2022)Wei, Wang, Schuurmans, Bosma, Xia, Chi, Le, Zhou, et~al.]{wei2022chain}
Jason Wei, Xuezhi Wang, Dale Schuurmans, Maarten Bosma, Fei Xia, Ed~Chi, Quoc~V Le, Denny Zhou, et~al.
\newblock Chain-of-thought prompting elicits reasoning in large language models.
\newblock \emph{Advances in neural information processing systems}, 35:\penalty0 24824--24837, 2022.

\bibitem[Whittington et~al.(2020)Whittington, Muller, Mark, Chen, Barry, Burgess, and Behrens]{whittington2020tolman}
James~CR Whittington, Timothy~H Muller, Shirley Mark, Guifen Chen, Caswell Barry, Neil Burgess, and Timothy~EJ Behrens.
\newblock The tolman-eichenbaum machine: unifying space and relational memory through generalization in the hippocampal formation.
\newblock \emph{Cell}, 183\penalty0 (5):\penalty0 1249--1263, 2020.

\bibitem[Wu et~al.(2019)Wu, Souza, Zhang, Fifty, Yu, and Weinberger]{wu2019simplifying}
Felix Wu, Amauri Souza, Tianyi Zhang, Christopher Fifty, Tao Yu, and Kilian Weinberger.
\newblock Simplifying graph convolutional networks.
\newblock In \emph{International conference on machine learning}, pp.\  6861--6871. PMLR, 2019.

\bibitem[Xiao et~al.(2023)Xiao, Tian, Chen, Han, and Lewis]{xiao2023efficient}
Guangxuan Xiao, Yuandong Tian, Beidi Chen, Song Han, and Mike Lewis.
\newblock Efficient streaming language models with attention sinks.
\newblock \emph{arXiv preprint arXiv:2309.17453}, 2023.

\bibitem[Xu et~al.(2019{\natexlab{a}})Xu, Hu, Leskovec, and Jegelka]{xu2018how}
Keyulu Xu, Weihua Hu, Jure Leskovec, and Stefanie Jegelka.
\newblock How powerful are graph neural networks?
\newblock In \emph{International Conference on Learning Representations}, 2019{\natexlab{a}}.

\bibitem[Xu et~al.(2019{\natexlab{b}})Xu, Li, Zhang, Du, Kawarabayashi, and Jegelka]{xu2019can}
Keyulu Xu, Jingling Li, Mozhi Zhang, Simon~S Du, Ken-ichi Kawarabayashi, and Stefanie Jegelka.
\newblock What can neural networks reason about?
\newblock \emph{arXiv preprint arXiv:1905.13211}, 2019{\natexlab{b}}.

\bibitem[Yang et~al.(2023)Yang, Xiao, and et~al]{Yang2023Baichuan2O}
Ai~Ming Yang, Bin Xiao, and et~al.
\newblock Baichuan 2: Open large-scale language models.
\newblock \emph{ArXiv}, abs/2309.10305, 2023.

\bibitem[Yang et~al.(2024)Yang, Ackermann, He, Feng, Zhang, Feng, Ye, He, and Wang]{yang2024efficient}
Kai Yang, Jan Ackermann, Zhenyu He, Guhao Feng, Bohang Zhang, Yunzhen Feng, Qiwei Ye, Di~He, and Liwei Wang.
\newblock Do efficient transformers really save computation?
\newblock \emph{arXiv preprint arXiv:2402.13934}, 2024.

\bibitem[Yao et~al.(2024)Yao, Yu, Zhao, Shafran, Griffiths, Cao, and Narasimhan]{yao2024tree}
Shunyu Yao, Dian Yu, Jeffrey Zhao, Izhak Shafran, Tom Griffiths, Yuan Cao, and Karthik Narasimhan.
\newblock Tree of thoughts: Deliberate problem solving with large language models.
\newblock \emph{Advances in Neural Information Processing Systems}, 36, 2024.

\bibitem[Zheng et~al.(2023)Zheng, Chiang, Sheng, Zhuang, Wu, Zhuang, Lin, Li, Li, Xing, Zhang, Gonzalez, and Stoica]{Vicuna}
Lianmin Zheng, Wei-Lin Chiang, Ying Sheng, Siyuan Zhuang, Zhanghao Wu, Yonghao Zhuang, Zi~Lin, Zhuohan Li, Dacheng Li, Eric~P. Xing, Haotong Zhang, Joseph Gonzalez, and Ion Stoica.
\newblock Judging llm-as-a-judge with mt-bench and chatbot arena.
\newblock \emph{ArXiv}, abs/2306.05685, 2023.

\bibitem[Zhong et~al.(2024)Zhong, Guo, Gao, Ye, and Wang]{zhong2024memorybank}
Wanjun Zhong, Lianghong Guo, Qiqi Gao, He~Ye, and Yanlin Wang.
\newblock Memorybank: Enhancing large language models with long-term memory.
\newblock In \emph{Proceedings of the AAAI Conference on Artificial Intelligence}, volume~38, pp.\  19724--19731, 2024.

\end{thebibliography}


\appendix
\normalsize

\clearpage 
\newpage 
\tableofcontents
\newpage 
\section{Related Works and Discussions}\label{sec:related}

\subsection{Planning Algorithms in LLMs}
The existing studies of task planning approaches can be categorized into several directions, including task decomposition, multi-plan selection, the use of external planners, reflection, and memory-aided planning \cite{huang2024understanding}. Task decomposition methods, such as the chain-of-thought approach \cite{wei2022chain}, employ the divide-and-conquer strategy, utilizing LLMs for both task decomposition and sub-task planning. The application of this method to task planning is detailed in Section \ref{sec:llm_planning} and is referred to as ``Direct'' in the baseline comparison. Multi-plan selection strategies, exemplified by the tree-of-thought \cite{yao2024tree} and graph-of-thought \cite{besta2024graph}, leverage search-based methods to generate plans. Subsequently, LLMs evaluate these plans to select the most effective one. The ``GraphSearch'' methods used in our baselines fall into this category. External planner approaches \cite{liu2023llm+} use LLMs to convert the problem into Planning Domain Definition Language (PDDL) and then employ classic solvers to address the planning problem. PDDL requires a pre-defined goal, for example, moving the blocks from one state to another state. However, the goal of task planning investigated in language agents deals with more flexible and personal goals, spanning personal needs in video, text, and image processing. Translating these goals into formal PDDL is very difficult. We instead demonstrate that GNNs can serve as an effective external planner in this application. Reflection-based methods \cite{shinn2024reflexion} focus on reflecting upon experiences to refine the plan, while memory-aided planning approaches \cite{zhong2024memorybank} utilize external experiences, such as those from search engines. These approaches are deployed in interactive environments and orthogonal to this paper.

\subsection{Task Planning in Traditional AI}
Apart from task planning in language agents, there is also a domain in traditional AI called task planning \cite{barto1995learning,bonet2003labeled,helmert2009landmarks,Toyer2019ASNetsDL,shen2020learning,staahlberg2022learning,mao2024planning,chen2024learning}. Task planning in traditional AI is defined as $(\mathcal{S}, \mathcal{A},\mathcal{T},\mathcal{C},\mathcal{G},s_0)$, where $\mathcal{S}$ is the states, $\mathcal{A}$ is the action space, $\mathcal{T}:\mathcal{S} \times \mathcal{S} \times \mathcal{A} \rightarrow [0,1]$, a cost function $\mathcal{C}:\mathcal{S}\times \mathcal{A} \rightarrow [0,\infty)$, a set of goal states $\mathcal{G} \subseteq \mathcal{S}$, and an initial state $s_0$. An agent following a policy $\pi:\mathcal{A}\times \mathcal{S} \rightarrow [0,1]$ will start in state $s_0$, then repeatedly choose an action $a \sim \pi(a|s)$ and execute it to reach a new state $s' \sim \mathcal{T}(s'|s,a)$, incurring a cost $\mathcal{C}(s,a)$ along its way. An optimal policy $\pi^*(a|s)$ is one that reaches the goal state with probability $1$ while minimizing the total expected cost. Traditionally, task planning is solved by reinforcement learning approaches \cite{bonet2003labeled} and heuristic A$^*$ approaches \cite{helmert2009landmarks}. The neural network-based approaches are employed to accelerate the computation and improve the performance \cite{Toyer2019ASNetsDL,shen2020learning,staahlberg2022learning,mao2024planning,chen2024learning}. 

The task planning in language agents is a different application. It cannot be solved as a constraint satisfaction problem, since both the features of task graph and user request are expressed in natural language. Compared with traditional planning, task planning for language agents involves diverse and open-ended goals due to the varied personal requirements users have. For example, on platforms like Hugging Face, users' intentions span across video, text, and image domains.  On the contrary, classic planning has a fixed goal for a given domain, e.g., in the n-puzzle \cite{Toyer2019ASNetsDL}, the goal is formally as placing the tiles in numerical order. Within this new application domain, while existing research primarily focuses on prompt design for pre-trained LLMs, our work underscores the importance of traditional planning methods, such as GNNs, in complementing LLMs. 

\subsection{Planning in Agents and Neuroscience}
Planning is a pivotal topic in both agents and neuroscience, where graphs play an indispensable role. We believe the concepts and insights presented in this paper are useful to these fields.

The TaskBench, RestBench, and UltraTool dataset used in this paper belongs to the tool agents. The Math agent, AlphaGeometry, employs LLMs to generate auxiliary constructions in geometry \cite{trinh2024solving}. Considering lemmas as nodes and their interdependencies as edges, the endeavor to prove a theorem resembles the task of identifying a route to the theorem node within the graph constituted by potential lemma nodes and the edges that represent their interdependencies. There are no explicit task graphs in game agents \cite{wang2023voyager}, embodied agents \cite{huang2022language}, and code agents \cite{shinn2024reflexion}. The core strategy in these domains is to employ verbal reinforcement learning within LLMs. The state and transitions in reinforcement learning can be modeled as nodes and edges in the graph. In addition, there are case-by-case graph models in these agent applications. For example, in code agents, one can view the class as the nodes and dependencies as the edges. In the embodied agents, the objects in the environment can be viewed as the nodes.

In the neuroscience, animal planning is often assessed through path planning in mazes \cite{behrens2018cognitive,whittington2020tolman}. Inspired by these animal experiments, planning testbenches have been developed for LLMs \cite{momennejad2024evaluating}. A computational model known as the Tolman-Eichenbaum Machine (TEM) has been proposed to decipher the mechanisms of general planning in animals across various environments, such as mazes \cite{whittington2020tolman}. The TEM model posits that hippocampal cells, including place and landmark cells, remap between environments, while entorhinal cells exhibit a range of properties that mirror spatial responses, including grid, band, border, and object-vector cells. In essence, hippocampal cells map sensory inputs onto locations in abstract graphs and remap, and entorhinal cells execute graph operations.

\subsection{LLMs for Graphs}
With the breakthroughs in LLMs, there has been a surge of interest in applying LLMs to graph-related problems \cite{li2023survey, li2024zerog}. GPT4Graph \cite{guo2023gpt4graph} and NLGraph \cite{wang2024can} are two prominent benchmarks designed to evaluate the performance of LLMs in the context of graph tasks. They encompass a wide spectrum of challenges, various input formats, and state-of-the-art prompting techniques, demonstrating that LLMs possess basic graph processing capabilities. Importantly, the choice of prompts and formats significantly influences performance. However, these benchmarks also expose the models' susceptibility to spurious correlations within graphs. For instance, GPT-4 achieves only about $50\%$ accuracy on shortest-path tasks, even with the use of complex prompts. GraphInstruct \cite{luo2024graphinstruct} attempts to fine-tune LLMs on graph-theory-related tasks, resulting in improved performance, though it remains far from satisfactory. Despite these empirical efforts, there is a limited theoretical understanding of these evaluation results. The analysis in Section \ref{sec:analysis} aims to shed light on the empirical observations reported in these studies.

Considering these negative results, a new line of research has emerged that utilizes the output of GNNs as tokens for LLMs, as seen in GraphGPT \cite{tang2023graphgpt}, GraphLLM \cite{chai2023graphllm}, GraphToken \cite{perozzi2024graphtoken}, and G-Retriever \cite{he2024gretriever}. These approaches have demonstrated significant improvements in performance on GNN-related tasks. However, they have not yet been applied to task planning due to the lack of extensive training data. A promising future direction involves using task planning data generated by GPT to fine-tune graph foundation models, such as GraphGPT \cite{tang2023graphgpt}, and applying them to task planning. This paper proposes to use task planning as a new benchmark for this line of research.


\subsection{Theoretical Analysis of Reasoning}
Reasoning is closely related to task planning and decision-making. The theoretical exploration of the reasoning abilities of neural networks was initiated by \cite{xu2019can}. This work unifies various reasoning tasks, such as intuitive physics, visual question answering, and shortest path calculations, into DP problems. It then analyzes the generalization capabilities of MLPs, DeepSets, and GNNs. It is demonstrated that GNNs exhibit the best generalization bounds, attributed to their architecture's resemblance to the Bellman-Ford algorithm, which is adept at solving DP problems. In terms of reasoning abilities within LLMs, \cite{feng2024towards} examines how the Chain of Thought (CoT) approach aids in solving arithmetic and DP problems without graphs. By decomposing challenging problems into simpler subproblems, CoT extends the expressive capabilities of Transformers from $\text{TC}^0$ to $\text{P}$. This analysis is further applied to linear and sparse Transformers in \cite{yang2024efficient}. 

Our proof of Theorem \ref{thm:expressiveness} builds upon the proof of Theorem 4.7 in \cite{feng2024towards}. However, while \cite{feng2024towards} addresses DP problems without graph structures, Theorem \ref{thm:expressiveness} specifically focuses on DP problems with graph edge list inputs. Moreover, unlike \cite{feng2024towards}, which decomposes and solves the DP problem sequentially, Theorem \ref{thm:expressiveness} proposes a method to simulate DP on edge lists in parallel. In addition, we analyze the negative results rising from the inductive bias of attention mechanism and auto-regressive loss. These theoretical contributions are novel and promise to be valuable for general reasoning and planning tasks.


\subsection{GNNs and GraphSearch for Combinatorial Optimization}
GNNs are popular approaches for solving decision-making problems on graphs. The problems investigated are typically NP-hard, such as the minimum vertex cover, maximum cut, and the traveling salesman problem \cite{cappart2023combinatorial}. The basic approach involves selecting nodes one by one in a manner that satisfies the constraints \cite{khalil2017learning}. In this paper, we adopt this method to sequentially select task nodes. Furthermore, reinforcement learning can be used to enhance the performance of GNNs beyond what is achievable with supervised labels alone \cite{khalil2017learning}. In \cite{khalil2017learning}, the node with the highest score is selected exclusively. Conversely, \cite{li2018combinatorial} employs beam search to improve performance by selecting the top-$k$ nodes in a single iteration. Additionally, GNNs are utilized as the method for variable selection in exhaustive searches for exact solutions to combinatorial optimization problems \cite{gasse2019exact}. This paper conceptualizes task planning as a graph-based decision-making problem. Both greedy and beam search algorithms have been adopted in task planning \cite{Liu2023ControlLLMAL, Liu2024ToolNetCL}. Given this connection, a promising future direction involves repurposing GNNs for decision-making approaches in task-planning applications.

\newpage

\newpage 
\section{Prompts}


\begin{table}[!h]
    \centering
    \small
    \caption{\textbf{Prompt template for LLM's direct inference \cite{shen2023taskbench}}}
    \begin{tabular}{|m{13cm}|}
       \hline 
       \vskip 0.08in
    \# TASK LIST \# \newline \texttt{\{\{ task list \}\}} \newline \newline \# GOAL \# \newline Based on the above tasks, I want you to generate task steps and a task invocation graph (including nodes and edges) to address the \# USER REQUEST \#. The format must be in strict JSON format, like: \newline $\{$ \newline   $\textcolor{white}{2}$  ``task\_steps'': $[$ step description for one or more steps $]$, \newline $\textcolor{white}{2}$ ``task\_nodes'': $[\{$ \newline $\textcolor{white}{2}\quad$ ``task'': `` task name must be from \# TASK LIST \# '', \newline $\textcolor{white}{2}\quad$  ``arguments'': $[$ a concise list of arguments for the task $]$ \newline $\textcolor{white}{2}$$\} ]$, \newline $\textcolor{white}{2}$ ``task\_links'': $[\{$ ``source'': ``task name i'', ``target'': ``task name j'' $\}]$, \newline $\}$ \newline \newline \# REQUIREMENTS \# \newline 1. Generated task steps and task nodes can resolve the  user request \# USER REQUEST \# perfectly.  Task name must be selected from \# TASK LIST \#. \newline 2. The task steps should strictly align with the task nodes, and the number of task steps should be same with the task nodes. \newline 3. The task links should reflect the temporal and resource dependencies among task nodes, i.e., the order in which the tasks are invoked. \newline \newline \# EXAMPLE \# \newline \texttt{\{\{ in-context learning examples \}\}} \newline \newline \# USER REQUEST \# \newline \texttt{\{\{ user request \}\}} \newline \newline Now, please generate your response in a strict JSON format: \# RESULT \# \\
    
       \hline 
    \end{tabular}
    \label{tab:prompt_inference}
\end{table}

\newpage 


\begin{table}[!h]
    \centering
    \small
    \caption{\textbf{Prompt templates of GraphSearch \cite{Liu2023ControlLLMAL}}}
    \begin{tabular}{m{2cm}|m{11cm}}
       \toprule 
    \cellcolor{COLOR_MEAN}\textbf{Scenario} & \cellcolor{COLOR_MEAN}\textbf{Prompt} \\
    \midrule
  \textbf{Task} \newline \textbf{Assessment} & \# CANDIDATE TASK LIST \# \newline \texttt{\{\{ candidate tasks \}\}} \newline \newline \# GOAL \# \newline Based on the provided \# CANDIDATE TASK LIST \# and the user's request described in the \# STEP \#, generate a score dictionary to assess each task's problem-solving abilities. The output must be in a strict JSON format, like: $\{$ ``candidate task name 1'': score, ... $\}$.  \newline \newline \# REQUIREMENTS \# \newline 1. The keys of the generated score dictionary must align with the provided candidate tasks, and you should output scores for all candidate tasks. \newline 2. The ``score'' field denotes a concrete score that assesses whether each task can solve the given step's demand. The score should be in the range of $[1, 2, 3, 4, 5]$, where a higher score indicates better task-solving and matching abilities. \newline 3. Carefully consider the user's intention in \# STEP \# to assign the score. If the \# STEP \# contains a candidate task, its score should be >= 3. \newline \newline \# EXAMPLE \# \newline \texttt{\{\{ in-context learning examples \}\}} \newline \newline \# STEP \# \newline \texttt{\{\{ step description \}\}} \newline \newline Now please generate your result in a strict JSON format: \# RESULT \#   \\
  \midrule

  \textbf{Path} \newline \textbf{Selection} &  \# GOAL \# \newline Based on the provided \# USER REQUEST \# and initially inferred \# STEPS \#, select the best path solution list from \# SOLUTION LIST \#. The selected solution should be the one that can perfectly solve the user's request and strictly align with the inferred steps. The output must be in strict JSON format, like: $\{$ ``best\_solution'': $[$list of invoked tasks $]\}$ \newline \newline \# REQUIREMENTS \# \newline 1. Carefully analyze both the user's request and previously inferred task steps. Select the best solution that can perfectly follow the inferred steps and solve user's request. Do not change their corresponding sequences. \newline 2. Make sure that each task in the final solution list exists in the valid \# TASK LIST \# \texttt{\{\{ task list \}\}}. \newline \newline \# USER REQUEST \# \newline \texttt{\{\{ user request \}\}} \newline \newline \# STEPS \# \newline \texttt{\{\{ steps  \}\}} \newline \newline \# SOLUTION LIST \# \newline \texttt{\{\{ list of searched solutions \}\} } \newline \newline Now please generate your result in a strict JSON format: \# RESULT \# \\
    
       \bottomrule
    \end{tabular}
    \label{tab:prompt_graphsearch}
\end{table}

\newpage 

\begin{table}[!h]
    \centering
    \small
    \caption{\textbf{Prompt template for LLM's filling in invocation parameters \cite{shen2023taskbench}}}
    \begin{tabular}{|m{13cm}|}
       \hline 
       \vskip 0.08in
    \# GOAL \# \newline Given a \# USER REQUEST \# and \# PLANNED TASKS \# to be invoked in sequence to solve this request, please fill up each invoked task's invocation parameters.
    
    The format must be in strict JSON format, like: \newline $\{$ \newline   $\textcolor{white}{2}$   ``task\_nodes'': $[\{$ \newline $\textcolor{white}{2}\quad$ ``task'': `` task name must be from \# PLANNED TASKS \# '', \newline $\textcolor{white}{2}\quad$  ``arguments'': $[$ a concise list of arguments for this task $]$ \newline $\textcolor{white}{22}$$\}, \ldots ]$ $\textcolor{white}{2}$ \newline $\}$ \newline \newline \# REQUIREMENTS \# \newline 1. Consider each task's input and output requirements, and carefully fill in the arguments for each task.\newline 2. Analyze the resource dependencies, keeping in mind that these tasks are invoked sequentially to address the original request. \newline 3. The number of predicted task\_nodes must strictly align with the provided tasks. \newline \newline \# USER REQUEST \# \newline \texttt{\{\{ user request \}\}} \newline \newline \# PLANNED TASKS \# \newline \texttt{ \{\{ a list of previously GNN retrieved tasks \}\}} \newline \newline \# DETAILS OF TASKS \# \newline \texttt{\{\{ details, i.e., input and output requirements of each planned task \}\}} 
   \newline \newline \# EXAMPLE \# \newline \texttt{\{\{ in-context learning examples \}\}} \newline 
    
    Now, please generate your response in a strict JSON format: \# RESULT \# \\
       \hline 
    \end{tabular}
    \label{tab:prompt_param}
\end{table}

\newpage 
\section{Datasets}\label{sec:reformat_dataset}

\subsection{Overview}
We provide the statistics of experimental datasets from three task planning benchmarks in Table \ref{tab:datasets}. The illustrative examples from each dataset are shown in Figure \ref{fig:dataset_illustration}. For datasets from TaskBench \cite{shen2023taskbench}, each sample consists of original user request, corresponding decomposed task steps, and ground-truth task invocation path. As RestBench \cite{song2023restgpt} and UltraTool \cite{huang2024planning} include only user requests and corresponding API invocation sequences, we prompt GPT-4 to infer decomposed task steps aligned with each invoked API, thereby finalizing the dataset.

\subsection{Reformatting Details of RestBench}\label{sec:reformat}
The TMDB dataset from RestBench, focuses on movie-related searching and recommending functions. To align RestBench with our experiments, we have implemented the following processing steps:

 \textbf{Reformatting original APIs by assigning unique task names and descriptions:} APIs in RestBench were represented by request paths, such as  ``\texttt{GET /movie/top\_rated}'', referring to the API that retrieves top-rated movies on TMDB. To enhance semantic differentiation among APIs, we first prompt GPT-4 to assign a unique name and a detailed functional description to each API. These names and descriptions were then manually verified and refined. For example, the API previously mentioned is renamed ``\texttt{Get Top-Rated Movies}'' with the description: ``This API retrieves a list of the highest-rated movies.'' Note that though the original TMDB dataset contains $54$ APIs, some were never invoked in any data examples. Therefore, we focus only on those APIs that appeared in at least one user request, resulting in a refined set of $46$ APIs.

\textbf{Constructing a Task Graph:} Each API is regarded as a unique task node, and we depict their relationships from two aspects: (1) categorical association, and (2) resource dependencies. For instance, APIs that provide movie-related functionalities, such as retrieving movie details or recommending films, are grouped under the \emph{movie} category. Conversely, APIs focused on person-related functions, like searching for actors, are classified under the \emph{person} category. Additionally, if two APIs share a common parameter like \texttt{move\_id}, we establish a link between them to indicate a resource dependency. 

\textbf{Reformatting Raw Data Examples:} The original data samples in RestBench included a single query and its corresponding API invocation sequence. To reformat this data into a path structure, we treated each invoked API as a node and the sequence of invocations as directed links from one API to the next. For example, a TMDB data sample consists of the query ``\textit{Who was the lead actor in the movie The Dark Knight}'' and corresponding ground-truth API solution $[$``\texttt{GET /search/movie}'', ``\texttt{GET /movie/\{movie\_id\}/credits}''$]$. We transform this solution into a task invocation path as ``$\{$ ``task\_nodes'': $[$``\texttt{Search Movie}'', ``\texttt{Get Movie Credit}''$]$, ``task\_links'': $[\{$``source'': ``\texttt{Search Movie}'', ``target'': ``\texttt{Get Movie Credit}''$\}] \}$ ''.

\begin{table}[!t]
    \centering
     \caption{\textbf{Statistics of Experimental Datasets}}
    \resizebox{0.92\textwidth}{!}{%
    \begin{tabular}{cc|ccc|c|c}
       \toprule
      
    \rowcolor{COLOR_MEAN}  & & \multicolumn{3}{c|}{\textbf{TaskBench}} & \textbf{RestBench} &   \\ 
   \rowcolor{COLOR_MEAN}  \multirow{-2}{*}{\textbf{Type}} & \multirow{-2}{*} {\textbf{Statistic}}  & \textbf{HuggingFace } & \textbf{Multimedia } & \textbf{Daily Life } & \textbf{TMDB} & \multirow{-2}{*}{\textbf{UltraTool}}\\
         \midrule

    \multirow{3}{*}{\textbf{Task Graph}}  & \# Node & 23 & 40 & 40 & 46 & 260 \\
      & \# Links & 225 & 449 & 1560 & 979 & 611 \\ 
      & Link Type & Resource & Resource & Temporal & Resource / Category & Resource \\
       \midrule

       \textbf{All Data} & \# Samples & 7,546 &   5,584 & 4,320 & 100 & 3,527 \\

     \midrule
     
       \multirow{3}{*}{\textbf{Test Set}} & \# Samples & 500 & 500 & 500 & 94 & 500 \\

       & \# Avg Nodes & 3.81 & 3.92 & 4.05 & 2.33 & 2.38 \\
       & \# Avg Links & 2.81 & 2.92 & 3.05 & 1.33 & 1.38 \\
         
        \bottomrule
    \end{tabular}
    }
    \label{tab:datasets}
\end{table}

\begin{figure}[!t]
    \centering
    \includegraphics[width=\textwidth]{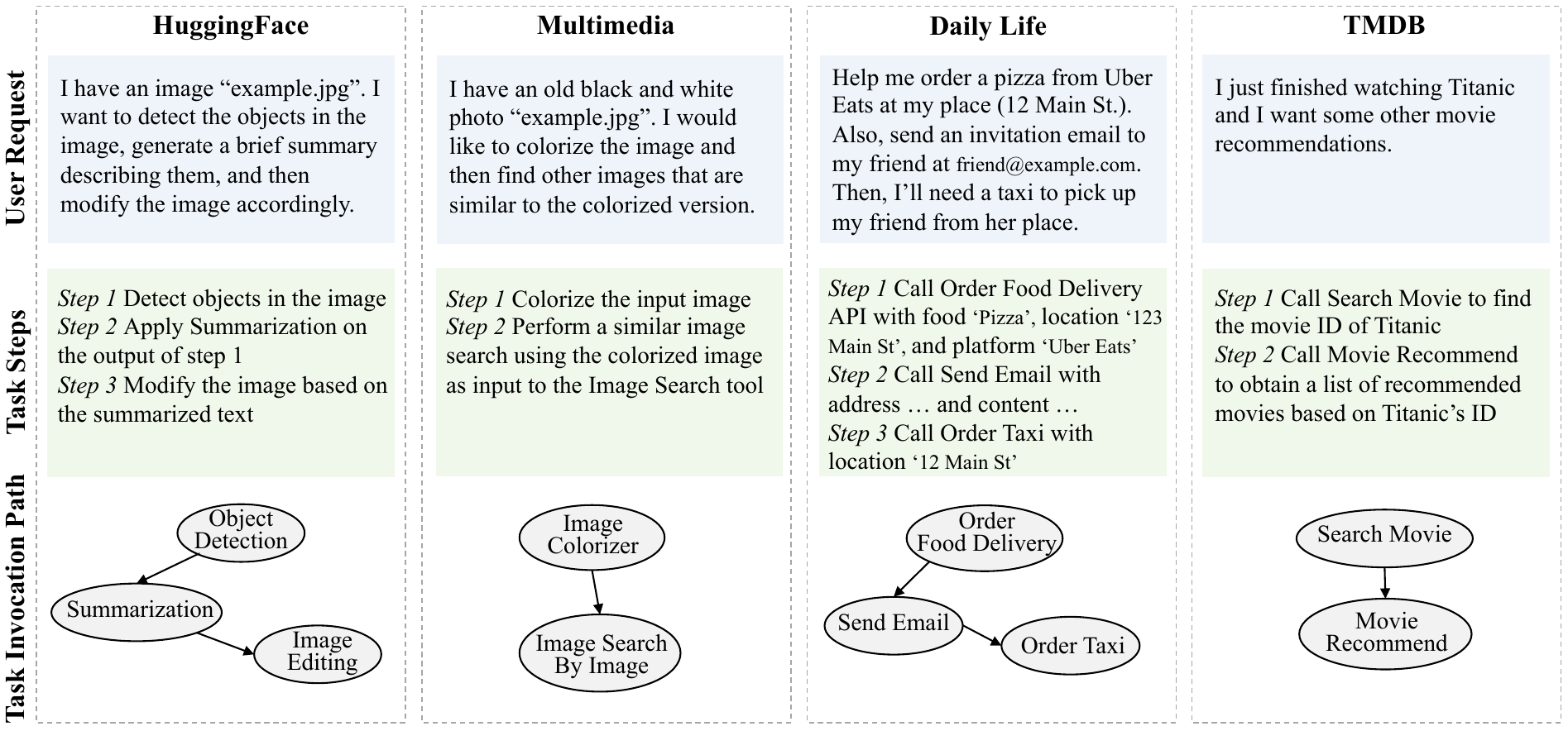}
    \caption{\textbf{Illustrative details of experimental datasets.}}
    \label{fig:dataset_illustration}
\end{figure}

\subsection{Reformatting Details of UltraTool}\label{sec:ultratool}

\textbf{Motivation:} In our main experiments using TaskBench \cite{shen2023taskbench} and RestBench \cite{song2023restgpt}, we observed that strong LLMs like GPT-4-turbo already perform well. This may be attributed to two factors: (1) the training of GPT-4-turbo likely included knowledge relevent to these benchmarks, as both were released before GPT-4 and utilize popular platforms such as HuggingFace and TMDB; (2) the task graphs are relatively small, containing no more than $50$ tasks, which falls within the capabilities of LLMs. Therefore, we aim to evacuate planning performance as well as GNN's effectiveness in a more challenging scenario, i.e., one that requires specific knowledge beyond the LLM's training and involves larger task graphs that possibly exceed its memory and reasoning capabilities. 

\textbf{Constructing a Task Graph:} We utilized UltraTool \cite{huang2024planning}, released in March 2024, which features complex planning scenarios across thousands of tasks, including travel, tourism, and other daily life domains. The original UltraTool contains $5,824$ samples with $2,032$ available tasks across $22$ distinct domains. However, we observed that some tasks appear in only one sample, making them quite rare. Therefore, we filtered this dataset as follows: first, we considered only tasks that appeared more than $5$ times across all samples, focusing on more common tasks that cater to daily life. Next, we retained samples that incorporated these filtered tasks, ensuring that the number of invoked tasks exceeded 2 to satisfy the multi-task planning scenario. After this filtering, we identified \textbf{$260$ distinct tasks}. We then constructed the task graph by treating each task as a node and adding links between tasks that were invoked sequentially in the dataset. For each task, we further verified and refined its functional descriptions to ensure semantic suitability.

\textbf{Reformatting Raw Data Examples:} We allocated $500$ samples for testing and $3,000$ samples for GNN training.  Although the original UltraTool provided decomposed steps, we found them too coarse-grained, making it difficult to align step descriptions with suitable tasks. Therefore, we employed a similar strategy by prompting GPT-4 to infer decomposed task steps that align with each invoked task. As a result, each data sample consists of the user request, corresponding steps, and ground-truth invoked tasks, ensuring high quality for GNN training.

\newpage 
\section{Supplementary Materials for Theoretical Results}
\subsection{Dynamic Programming}\label{sec:dp}

\textbf{Longest Increasing Subsequence:} The Longest Increasing Subsequence (LIS) problem is a classic dynamic programming problem that involves finding the length of the longest subsequence within a given array \texttt{arr} where the elements are in strictly increasing order. The state transition function for the LIS problem can be expressed as:
\begin{align*}
    \text{Answer}[k][i] = \max_{j \in \mathcal{T}(i)} (\text{Answer}[k-1][j] + ( \mathbb{I}(j \neq i) \times 1 ) ),
\end{align*}
\noindent where $\mathcal{T}(i) = \{ i \} \cup \{ j \; | j < i \; \text{and} \; \texttt{arr}[j] < \texttt{arr}[i] \}$ denotes the set of states that can transfer to state $i$, the aggregation function $g(x,y)$ is implemented as $\max(x,y)$, and the cost $c[i][j]$ is $1$ for those candidate states that are not equal to state $i$ as adding the element leads to a longer subsequence while $0$ for the state itself.

\textbf{Bellman-Ford Algorithm:} The Bellman-Ford algorithm is also a classic dynamic programming algorithm used to find the shortest paths from a single source vertex to all other vertices in a weighted graph. The core idea behind the Bellman-Ford algorithm is that the distance from the source vertex to a target vertex can be computed as the minimum distance from the source to any of the target's neighboring vertices, plus the weight of the edge connecting the neighbor to the target. Therefore, the state transition function for the Bellman-Ford algorithm can be expressed as:
\begin{align*}
    \text{Answer}[k][i] = \min_{j \in \mathcal{T}(i)} (\text{Answer}[k-1][j]+w[j][i]),
\end{align*}
\noindent where $\text{Answer}[k][i]$ represents the length of the shortest path from the source vertex to node $i$ at the $k$-th iteration, $\mathcal{T}(i) = \mathcal{N}^{-}(i)$ denotes the set of in-neighbors of node $i$, and $w[j][i]$ denotes the weight from node $j$ to $i$. The aggregation function $g(x,y)$ is implemented as $\min(x,y)$ as we try to find the shortest path.

\textbf{Travelling Salesman Problem (TSP):} This problem is defined as, given a set of cities and the distances between every pair of cities, finding the shortest possible route that visits every city exactly once and returns to the starting point. If we regard the set of already visited city $\mathcal{S}$ ending at $i$-th city as the current state, then states that can transfer to current state are those that ending city can reach $k$. Therefore, the state transition function for the TSP problem can be expressed as:
\begin{align*}
    \text{Answer}[k][i][\mathcal{S}] = \min_{j \in \mathcal{S}, j \neq i}( \text{Answer}[k-1][j][\mathcal{S} \setminus \{i \}] + w[j][i]  ),
\end{align*}
\noindent where $\text{Answer}[k][i][\mathcal{S}]$ represents the cost of the shortest tour that visits all the cities in the set $\mathcal{S}$ and ends at the $i$-th city, the aggregation function $g(x,y)$ is still implemented as $\min(x,y)$ since we aim to find the shortest path, and $w[j][i]$ denotes the distance from city $j$ to city $i$. 

\subsection{Proof of Theorem \ref{thm:expressiveness}}\label{sec:expressive}
\begin{assumption}\label{assumption:fgh} Each function $f, g$ in \eqref{eq:dp} can be approximated by constant size MLP.
\end{assumption}

\begin{assumption}\label{assumption:F}
    The aggregation function $\square$ in \eqref{eq:dp} is one of min, max, sum, mean.
\end{assumption}

The first assumption is mild as MLPs are universal approximators. The second assumption is mild because these are the most commonly used aggregation functions. 

\begin{theorem} (Expressiveness) Assume the input format is given in \eqref{eq:edge_list} and $f,g,\square$ in DP update \eqref{eq:dp} satisfy the assumptions \ref{assumption:fgh} and \ref{assumption:F}. There exists a log-precision constant-depth and constant-width Transformer that simulates $1$ steps of DP update in \eqref{eq:dp}. As a consequence, there exists a log-precision $O(k)$-depth and constant-width Transformer that simulates $k$ steps of DP update in \eqref{eq:dp}. 
\end{theorem}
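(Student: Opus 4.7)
The plan is to construct a Transformer of $O(1)$ depth and $O(1)$ width that implements one step of the DP update in \eqref{eq:dp}; stacking $k$ such blocks then yields the $k$-step version as a corollary. I would organize the construction around four stages: (i) role identification via positional embeddings, (ii) retrieval of $\text{Answer}[k-1][j]$ at every edge by attention, (iii) per-edge computation of $g$ by an MLP, and (iv) per-node aggregation by a second attention layer followed by an MLP that applies $f$.

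First I would use the positional embedding together with a small feed-forward block to write a few role-flags into each token's hidden state, exploiting the rigid three-tokens-per-edge structure of \eqref{eq:edge_list} and the two-tokens-per-node structure of the initial-state region. After this pass, every token knows whether it is an edge source id, an edge target id, an edge cost, an initial-state id, or an initial-state value. Node identifiers themselves occupy $O(\log n)$ bits, which fits inside log precision.

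Second, at each edge-cost token, carrying $(u_i, v_i, c[u_i][v_i])$ in its augmented state, I would install one attention head whose query is $u_i$ and whose keys are the ids of the initial-state positions; the value slot copies $\text{Answer}[k-1][u_i]$. Because each node id appears at exactly one initial-state position, sharpened softmax attention retrieves this scalar cleanly, after which a constant-size MLP (guaranteed by Assumption~\ref{assumption:fgh}) writes $g(\text{Answer}[k-1][u_i], c[u_i][v_i])$ into the token. A second attention layer, run at an output position tagged with target $v$, queries against the target-ids of the edges so that the attended set is precisely $\{j : j \in \mathcal{T}(v)\}$. For $\square \in \{\text{sum},\text{mean}\}$ a uniform softmax head realizes the aggregation directly, and a final MLP applies $f$ to yield $\text{Answer}[k][v]$.

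The main obstacle will be implementing $\square \in \{\min, \max\}$ under log precision, since softmax is intrinsically a soft average rather than a hard selector. I would handle this by the temperature-sharpening device used in \cite{feng2024towards}: scale the attention logits by a factor polynomial in the token bit-width so that softmax concentrates on the extremal token with error below the last representable bit, and then extract the scalar by a constant-depth MLP. A secondary subtlety is that each of the $O(1)$ attention heads must attend to a number of tokens that can grow with $|V|$ and $|E|$; this is admissible here because the theorem only bounds depth and width, not attention sparsity, and indeed the companion Proposition~\ref{prop:attention} shows that sparsifying the attention kills the construction. Finally, composing $k$ such single-step blocks, and routing the freshly produced $\text{Answer}[k][\cdot]$ back into the initial-state slots consumed by the next block, delivers the claimed $O(k)$-depth, constant-width simulator.
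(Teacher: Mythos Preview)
Your proposal is correct and follows essentially the same four-stage construction as the paper: role-tagging via positional embeddings, attention-based retrieval of prior answers onto edge positions, a per-edge MLP for $g$, and a second attention layer plus MLP for $\square$ and $f$, all relying on the COPY/MEAN/MAX/SUM attention primitives of \cite{feng2024towards} and then stacking $k$ copies. One minor slip: a uniform softmax head realizes \emph{mean}, not \emph{sum}; the paper (like the lemma you invoke) handles \emph{sum} with a second head that counts the attended set and rescales.
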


\begin{proof}
\textbf{Token Embedding and Positional Embedding:} The three-dimensional token embedding includes the token type $e^{\text{type1}}$ ($0$ for answer; $1$ for node; $2$ for edge cost), refined token type $e^{\text{type2}}$ ($0$ for answer, $1$ for the node tokens in initial states, $2$ for the target node tokens in the edge list, $3$ for the source node tokens in the edge list, $4$ for edge cost), and the token id $e^{\text{token}}$ (from $0$ to $|V| - 1$). The two-dimensional positional embedding includes the embedding for initial state tokens $e^{\text{pos1}}$ ($0$ for edge list tokens, $1$ for the first two elements of initial states, $2$ for the second two elements of initial states, etc.), embedding for edge list tokens $e^{\text{pos2}}$ ($0$ for initial state tokens, $1$ for the first three elements of the edge list, $2$ for the second three elements of the edge list, etc.). There are also constant-dimensional placeholders to put the states of DP. 

\textbf{Block 1 - Initial State Broadcast:} The goal of the first block is to broadcast the initial states from the initial state token to node tokens. (1) Use MLPs to recover the digits of the answer tokens and put them in the first placeholder if $e^{type}_k == 0$; (2) Copy the first placeholder from answer token to its previous node token by using \textbf{COPY} in Lemma \ref{lem:copy} and setting $\mathcal{S}_k = \{j|(e^{\text{pos1}}_k - e^{\text{pos1}}_j)^2 < \delta \}$; (3) Broadcast the first placeholder with \textbf{SUM} in Lemma \ref{lem:copy} and setting $\mathcal{S}_k = \{j|(e^{\text{type1}}_k - e^{\text{type1}}_j)^2 + (e^{\text{token}}_k - e^{\text{token}}_j)^2 < \delta \}$. Now the state for every node token $u_i$ is $[e^{\text{type1}}, e^{\text{type2}}, e^{\text{token}}, e^{\text{pos1}}, e^{\text{pos2}}, \text{Answer}[0][u_i] ]$.

\textbf{Block 2 - Edge Feature Operations:} The goal of the second block is to copy the edge features from the edge feature token to the corresponding node tokens. (1) Use MLPs to recover the digits of the edge feature tokens and put them in the second placeholder if $e^{\text{type1}} == 2$; (2) Copy the second placeholder from the edge feature token to the node token by using \textbf{SUM} in Lemma \ref{lem:copy} and setting $\mathcal{S}_k = \{j|(e^{\text{pos2}}_k - e^{\text{pos2}}_j)^2 < \delta \}$. Now the state for every node token $u_i$ of the $i$-th edge is $[e^{\text{type1}}, e^{\text{type2}}, e^{\text{token}}, e^{\text{pos1}}, e^{\text{pos2}}, \text{Answer}[0][u_i], c[u_i][v_i] ]$.

\textbf{Block 3 - Message Preparation:} (1) Use MLPs to compute $g$ and place the results in the third placeholder. Now the state for every node token $u_i$ of the $i$-th edge is $[e^{\text{type1}}, e^{\text{type2}}, e^{\text{token}}, e^{\text{pos1}}, e^{\text{pos2}}, \text{Answer}[0][u_i], c[u_i][v_i], g(\text{Answer}[0][u_i], c[u_i][v_i]) ]$; (2) Use MLPs to clean up the first and second placeholder. Now the state for every node token $u_i$ is $[e^{\text{type1}}, e^{\text{type2}}, e^{\text{token}}, e^{\text{pos1}}, e^{\text{pos2}}, g(\text{Answer}[0][u_i],c[u_i][v_i]) ]$

\textbf{Block 4 - Message Passing:} The goal of the fourth block is to compute $\square$ and $f$. (1) Use one or two attention heads (one for max, min, mean aggregations, and two for sum aggregations) to perform the aggregation operation. This is achieved by using \textbf{MEAN} or \textbf{MAX} or \textbf{SUM} for the first placeholder and setting $\mathcal{S}_k = \{j|(e^{\text{token}}_k - e^{\text{token}}_j)^2 + (e^{\text{type2}}_k - e^{\text{type2}}_j)^2 < \delta\}$. Now the state for every node token $u_i$ is $[e^{\text{type1}}, e^{\text{type2}}, e^{\text{token}}, e^{\text{pos1}}, e^{\text{pos2}}, \square_{v_j \in \mathcal{T}(u_i)} g(\text{Answer}[0][u_i],c[u_i][v_j]) ]$; (2) Use MLPs to compute $f$. Now the state for every node token is $[e^{\text{type1}}, e^{\text{type2}}, e^{\text{token}}, e^{\text{pos1}}, e^{\text{pos2}}, f(\square_{v_j \in \mathcal{T}(i)} g(\text{Answer}[0][u_i],c[u_i][v_j])) ]$. 

After four blocks, the final state for every node token $u_i$ is given by $[e^{\text{type1}}, e^{\text{type2}}, e^{\text{token}}, e^{\text{pos1}}, e^{\text{pos2}}, \text{Answer}[1][u_i] ]$. $\text{Answer}[k][u_i]$ can be obtained by repeating the above four blocks $k$ times. 

\end{proof}

\begin{lemma}\label{lem:copy}\cite{feng2024towards} Let $n \in \mathbb{N}$ be an integer and $\bm{x}_1, \cdots, \bm{x}_n$ be a sequence of vectors where $\bm{x}_i = (\tilde{\bm{x}}_i, r_i, 1) \in [-M, M]^{d+2}$ where $M$ is a large constant. Let $\bm{K}, \bm{Q}, \bm{V} \in \mathbb{R}^{d' \times (d+2)}$ be any matrices with $\|\bm{V}\|_{\infty} \leq 1$ and let $0 < \rho, \delta < M$ be any real numbers. Denote $\bm{q}_i = \bm{Q}\bm{x}_i, \bm{k}_j = \bm{K}\bm{x}_i, \bm{v}_j = \bm{V}\bm{x}_j$. Define a matching set $\mathcal{S} = \{j||\bm{q}_i^T\bm{k}_j| \leq \rho \}$. Define two following operations
\begin{itemize}
    \item \textbf{COPY}: The output is a sequence of vectors $\bm{u}_1, \cdots, \bm{u}_n$ with $\bm{u}_i = \bm{v}_{\text{pos}(i)}$, where $\text{pos}(i) = \arg\max_{j \in \mathcal{S}_i} r_j$.
    \item \textbf{MEAN, MAX, SUM}: The output is a sequence of vectors $\bm{u}_1, \cdots, \bm{u}_n$, where $\bm{u}_i = \square_{j \in \mathcal{S}_i} \bm{v}_j$ and $\square$ is min or max or sum or mean.
\end{itemize}
Specifically, for any sequence of vectors $\bm{x}_1, \bm{x}_2, \cdots, \bm{x}_n$, denote the corresponding output of the attention layer as $\bm{o}_1, \bm{o}_2, \cdots, \bm{o}_n$. Then, we have $\|\bm{u}_i - \bm{o}_i\|_{\infty} \leq \epsilon$ for all $i \in [n]$ and $\mathcal{S} \neq \emptyset$.

\end{lemma}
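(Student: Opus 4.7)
The plan is to exhibit explicit constructions of attention parameters (together with constant-size feed-forward blocks) that approximate hard selection by exploiting the fact that softmax with sufficiently scaled logits behaves like a smoothed indicator or argmax. The common starting point for all four operations is to replace the raw inner product $\bm{q}_i^{\top}\bm{k}_j$ with a negative quadratic penalty $-\lambda(\bm{q}_i^{\top}\bm{k}_j)^2$; because each input $\bm{x}_i$ carries a trailing constant $1$, this quadratic can be realized by absorbing a rank-one term into modified $\bm{Q}$ and $\bm{K}$ so that a single head emits it directly. Assuming the standard gap condition that unmatched indices satisfy $|\bm{q}_i^{\top}\bm{k}_j| \geq \rho + \delta$, choosing $\lambda = \Theta(\log(n/\epsilon)/(\rho\delta + \delta^2))$ drives the softmax mass on unmatched positions below $\epsilon/n$, while matched positions carry mass close to $1/|\mathcal{S}_i|$.

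For MEAN I would pipe $\bm{v}_j$ through the value projection unchanged; the output $\sum_j w_{ij}\bm{v}_j$ is then $\epsilon$-close to $|\mathcal{S}_i|^{-1}\sum_{j\in\mathcal{S}_i}\bm{v}_j$, since $\|\bm{v}_j\|_\infty \leq M$ bounds the leakage contribution from outside $\mathcal{S}_i$. For SUM I would add a second head using the same matching logits but emitting the scalar $1$ as its value, so its output is $1/|\mathcal{S}_i|$; a constant-size MLP then computes $x \mapsto 1/x$ on $[1/n,1]$ (Lipschitz on this range with log-precision weights) to recover $|\mathcal{S}_i|$, and a final MLP multiplies the MEAN output coordinate-wise by this count.

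For COPY I would augment the logits to $-\lambda(\bm{q}_i^{\top}\bm{k}_j)^2 + \mu r_j$, choosing $\mu$ so that $\mu \gg \log(n/\epsilon)/\Delta_r$ (to separate distinct $r_j$ values inside $\mathcal{S}_i$, where $\Delta_r$ is the minimum gap between distinct positional markers $r_j$, typically $\Delta_r \geq 1$ for integer positions) but $\mu \ll \lambda\delta^2/M$ (so it does not overwhelm the unmatched suppression). The softmax then concentrates on $\mathrm{pos}(i) = \arg\max_{j\in\mathcal{S}_i} r_j$, and the value projection returns $\bm{v}_{\mathrm{pos}(i)}$ up to $\epsilon$. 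MAX on a scalar coordinate uses the same trick with $r_j$ replaced by the relevant coordinate of $\bm{v}_j$, and MIN by its negation; for vector-valued $\bm{v}_j$ this is repeated with one head per coordinate (still constant width because $d'$ is constant).

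The main obstacle is the log-precision bookkeeping: the scaling constants $\lambda$ and $\mu$ are poly-logarithmic in $n/\epsilon$, but inputs live in $[-M,M]$, so I must verify that pre-softmax logits, exponentials, accumulated sums, and especially the inversion $x \mapsto 1/x$ needed for SUM all remain representable in $O(\log n)$ bits with uniformly bounded parameter norms, and that each softmax truncation error compounds only additively across the constant number of heads and MLP layers. Once these precision estimates are collected, combining the three error sources (unmatched leakage, imbalance among matched indices in COPY/MAX, and MLP approximation for SUM) yields the uniform $\|\bm{u}_i - \bm{o}_i\|_\infty \leq \epsilon$ bound claimed in the lemma.
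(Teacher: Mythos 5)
You should first note that the paper does not actually prove this lemma: it is imported verbatim from \cite{feng2024towards} and used as a black box inside the proof of Theorem \ref{thm:expressiveness}, so there is no internal proof to match against. Your overall strategy---saturate the softmax on the matching set via a large negative quadratic penalty $-\lambda(\bm{q}_i^{\top}\bm{k}_j)^2$ with $\lambda=\Theta(\log(n/\epsilon)/(\rho\delta+\delta^2))$, and add a scaled tie-breaking term $\mu r_j$ (resp.\ $\mu v_j^{(c)}$) to concentrate on $\arg\max_{j\in\mathcal{S}_i} r_j$ for COPY (resp.\ on the per-coordinate maximizer for MAX)---is exactly the canonical construction in the cited reference, and your accounting of the three error sources and of the two-sided constraint on $\mu$ is sound. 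You are also right to reinstate the gap condition $|\bm{q}_i^{\top}\bm{k}_j|\ge\rho+\delta$ for $j\notin\mathcal{S}_i$, which the statement as printed omits even though $\delta$ appears in its hypotheses.

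Two steps are genuinely gapped. First, the quadratic score cannot be ``absorbed into a rank-one term'' via the trailing constant $1$: that constant only lets you add terms affine in $\bm{x}_i$ and $\bm{x}_j$ separately, whereas $(\bm{q}_i^{\top}\bm{k}_j)^2=\sum_{a,b}q_{ia}q_{ib}k_{ja}k_{jb}$ is a bilinear form in the \emph{tensorized} features $\bm{q}_i\otimes\bm{q}_i$ and $\bm{k}_j\otimes\bm{k}_j$, which are quadratic in the inputs and hence not producible by the linear maps $\bm{Q},\bm{K}$ alone. You need to say explicitly that a preceding constant-width feed-forward block prepares these $O(d'^2)$ monomials (still constant-dimensional), which is how the reference handles it. Second, and more seriously, your SUM branch does not close: a constant-size ReLU MLP cannot uniformly approximate $x\mapsto 1/x$ on $[1/n,1]$ (the interpolation of the $n$ points $\{1/m\}_{m\le n}$ requires $\Omega(n)$ linear pieces, i.e., width or depth growing with $n$), and exact coordinate-wise multiplication of two reals by a constant-size ReLU block is likewise unavailable, so ``invert the count and multiply the mean'' is not a valid constant-width construction. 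Recovering an unnormalized sum from softmax attention requires a different mechanism (e.g., exploiting the structure of the specific matching sets in the application, where the summand is nonzero at a bounded number of matched positions, or an auxiliary anchor token whose known value lets the count be read off without a generic $1/x$ gadget); as written, this case is asserted rather than proved, which matters because the theorem that consumes the lemma explicitly invokes SUM aggregation.
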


\subsection{Permutation Invariance Test of LLMs}\label{sec:permutation}
We test whether LLMs respect the permutation invariance property in graph problems and the results are given in Figure \ref{fig:graph_permutation}.
\begin{figure}[h]
    \centering
    \includegraphics[width=\textwidth]{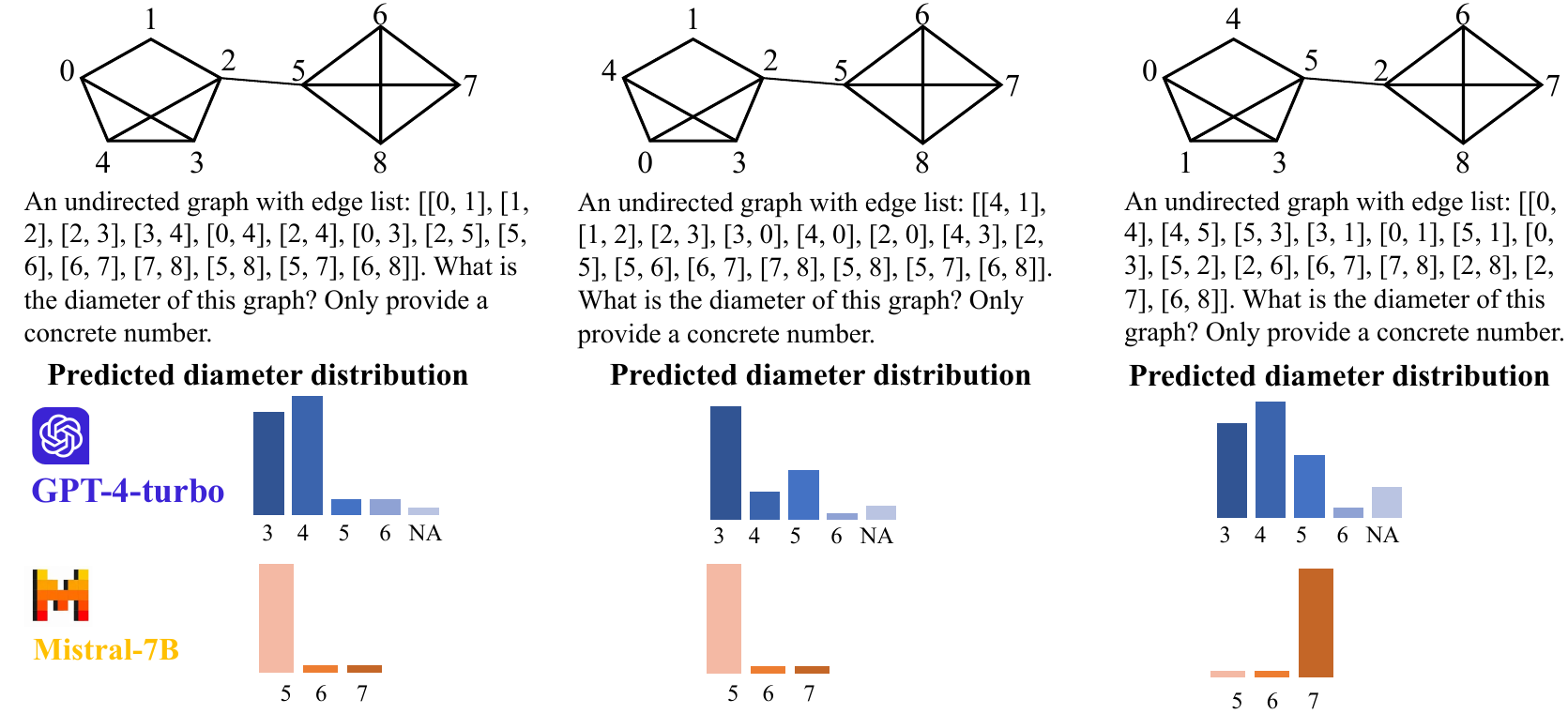}
    \caption{\textbf{Illustrative Examples of LLMs Failure to Solve Graph Computational Problems under Permutation (i.e., node re-odering)}. Experiments were conducted for \textbf{30} times.}
    \label{fig:graph_permutation}
\end{figure}

\subsection{Proof of Proposition \ref{prop:attention}}\label{sec:attention}
    \begin{proposition}  Assume the input format is as described in Equation \eqref{eq:edge_list} and that the attention mechanism is limited to attending to a constant number of tokens. There exists at least one instance of one-step DP update such that no log-precision constant-width constant-depth transformer can simulate.
\end{proposition}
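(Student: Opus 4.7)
My plan is to exhibit a single one-step DP update whose correct execution realizes an unbounded-fan-in Boolean OR, and then rule out any constant-depth constant-width log-precision Transformer with constant attention support via a dependency-cone argument. Concretely, I would take $\square = \text{MAX}$, $g(x,c) = x$, $f(x) = x$ (which fits Assumptions \ref{assumption:fgh} and \ref{assumption:F}) and, for each $n$, instantiate the graph as a star: one center $u_0$ and $n$ leaves $u_1,\dots,u_n$ with directed edges $u_i \to u_0$, costs $c[u_0][u_i] = 1$, and initial states $\text{Answer}[0][u_i] = a_i \in \{0,1\}$. Encoded as in \eqref{eq:edge_list}, the input has length $\Theta(n)$, and the one-step DP answer at the center is the $n$-bit disjunction $\text{Answer}[1][u_0] = a_1 \vee \cdots \vee a_n$. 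Simulating this update in particular requires producing this bit at the output position associated with $u_0$.

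The key step is a dependency-cone bound. Fix any Transformer with depth $L$, width $w$, $O(1)$ heads per layer each attending to at most $c$ tokens, and log-precision states, where $L,w,c$ are independent of $n$. For an output position $p$, define inductively the set of input positions $\mathcal{D}_\ell(p)$ that can affect the hidden state at $(p,\ell)$ by $\mathcal{D}_0(p) = \{p\}$ and $\mathcal{D}_{\ell+1}(p) = \bigcup_{p' \in \mathcal{D}_\ell(p)} \bigl(\{p'\} \cup \mathrm{Att}_\ell(p')\bigr)$, where $\mathrm{Att}_\ell(p')$ is the set of at most $c$ positions attended to by $p'$ at layer $\ell$. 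Since residual connections, token-wise MLPs, layer norms, and positional encodings are all positionwise and introduce no cross-token information, a straightforward induction gives $|\mathcal{D}_L(p)| \le (c+1)^L = O(1)$. Therefore the output at $p$ is a fixed function of at most $(c+1)^L$ input tokens, and in particular can depend on at most $O(1)$ of the $n$ bits $a_1,\dots,a_n$.

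The final step is indistinguishability. Pick any $n > (c+1)^L$; then some index $j$ has the input position of $a_j$ outside $\mathcal{D}_L(p)$. The two inputs ``all $a_i = 0$'' and ``$a_j = 1$, all other $a_i = 0$'' agree on every position in $\mathcal{D}_L(p)$, so the Transformer outputs the same bit at $p$ on both, whereas the correct DP answers are $0$ and $1$ respectively. Hence no such Transformer simulates this one-step update, proving the proposition. The part that I expect to demand the most care is the dependency-cone bookkeeping: I need to verify that every sub-layer other than attention is genuinely token-wise, and that no combination of multiple heads, residual paths, or layer-norm rescaling can covertly enlarge the dependency set beyond $(c+1)^L$; once that bookkeeping is tight, the adversarial-pair step is immediate.
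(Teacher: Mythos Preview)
Your argument is correct and more explicit than the paper's own proof. The paper gives a three-line pigeonhole: under constant-support attention with constant depth, constant width, and log precision, the information accessible to any single output token is $O(\log n)$ bits, while the space of one-step DP outcomes on a $|V|$-node graph needs $\Theta(|V|)$ bits to represent, so two distinct outcomes must collide at some token. You instead exhibit a concrete hard instance (the star graph realizing $n$-ary $\mathrm{OR}$ via $\square=\max$, $g(x,c)=x$, $f=\mathrm{id}$), bound the dependency cone by $(c+1)^L$, and close with an explicit adversarial pair. Both proofs rest on the same observation---sparse attention plus constant depth caps the effective fan-in at any output position---but your route supplies a specific witnessing instance and a direct sensitivity contradiction, while the paper's counting argument is shorter and instance-agnostic.

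On your bookkeeping worry: the one item not on your list is that if the attended positions are chosen input-dependently (e.g.\ top-$c$ by score), then flipping $a_j$ could in principle reroute attention and alter $\mathcal{D}_L(p)$ itself, so your two inputs need not share a cone. Under the paper's intended reading---sparse attention with a fixed or position-determined support, motivated by the cited attention-sink phenomenon---this does not arise, and the paper's own $O(\log n)$-bit claim tacitly relies on the same reading; just make that assumption explicit when you write up the cone lemma.
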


\begin{proof}
We present a proof by contradiction. Assume that a token in a Transformer with a constant depth, constant width, and log-precision can attend to only a constant number of nodes. Under this assumption, the total information accessible to the token in such a Transformer architecture amounts to $O(\log n)$ bits. However, for a graph with $|V|$ nodes, the number of possible outcomes from executing one-step DP is $O(e^{|V|})$, necessitating $\Theta(|V|)$ bits for representation. By the pigeonhole principle, this scenario inevitably leads to at least two distinct DP outcomes being represented by the same output sequence generated by the model, thereby constituting a contradiction.
\end{proof}

\subsection{Proof of Theorem \ref{thm:auto-regressive}}\label{sec:auto-regressive}
\begin{theorem} (Spurious correlations of auto-regressive loss) Assume (1) the loss employed is a next-token-prediction loss utilizing cross-entropy, applied to the sub-sequence $v_1\ v_2 \ \cdots \ t$ during training; (2) the output logits are determined by target node $t$ and the current node $v_{i-1}$. Let $N_{t, v_{i-1}, u}$ be the number of times in the training dataset such that $t$ is the target node, $v_{i-1}$ is the current node and $v_i = u$ is the next node. The optimal logits for predicting the next node $u$ from current node $v_{i-1}$ towards target node $t$ is given by $\hat{\bm{v}}_{i}[u] = \frac{N_{t, v_{i-1}, u}}{\sum_u N_{t, v_{i-1}, u}}$ if $\sum_u N_{t, v_{i-1}, u} > 0$. If $\sum_u N_{t, v_{i-1}, u} = 0$, $\hat{\bm{v}}_{i}[u]$ can be any non-negative number subject to $\sum_u \hat{\bm{v}}_{i}[u] = 1$.
\end{theorem}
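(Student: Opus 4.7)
The plan is to reduce the overall next-token-prediction objective to a sum of independent per-context cross-entropy subproblems, one for each pair $(t, v_{i-1})$, and then invoke the standard fact that the cross-entropy minimizer over a simplex is the empirical distribution of the label.

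First, I would write the total training loss explicitly. Since the loss is restricted to the subsequence $v_1 \, v_2 \, \cdots \, t$ and the output logits at position $i$ depend only on the pair $(t, v_{i-1})$ (by assumption 2), every training token contributes a term of the form $-\log \hat{\bm{v}}_i[v_i]$ where $\hat{\bm{v}}_i$ is a probability vector indexed solely by $(t, v_{i-1})$. Let me denote this shared predictor as $\hat{p}(\cdot \mid t, v_{i-1})$. Summing over the training set and regrouping by the conditioning pair $(t, w)$ yields
\begin{equation*}
\mathcal{L} \;=\; -\sum_{(t, w)} \sum_{u} N_{t, w, u} \, \log \hat{p}(u \mid t, w).
\end{equation*}
The key observation is that different conditioning pairs share no parameters (the logits are indexed by $(t, w)$ and the predictor can take any value on the simplex for each such pair, since the model has enough capacity in the idealized setting considered), so the minimization decouples across $(t, w)$.

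Second, I would fix one pair $(t, w)$ and minimize the inner sum $-\sum_u N_{t, w, u} \log \hat{p}(u \mid t, w)$ subject to $\sum_u \hat{p}(u \mid t, w) = 1$ and $\hat{p}(\cdot \mid t, w) \geq 0$. If $M := \sum_u N_{t, w, u} > 0$, divide by $M$ so that the objective becomes $M \cdot H(q, \hat{p})$ where $q(u) = N_{t, w, u}/M$ is the empirical conditional distribution and $H(q, \hat{p})$ is the cross entropy. By Gibbs' inequality $H(q, \hat{p}) \geq H(q)$ with equality iff $\hat{p} = q$, giving the claimed optimum $\hat{\bm{v}}_i[u] = N_{t, w, u}/M$. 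If instead $M = 0$, the inner sum is identically zero for every choice of $\hat{p}$ on the simplex, so any nonnegative vector with $\sum_u \hat{\bm{v}}_i[u] = 1$ is optimal, matching the second half of the claim.

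The proof is mostly a bookkeeping exercise, and I do not expect a serious obstacle. The one subtle point worth being careful about is justifying the decoupling step: one must make clear that assumption 2 literally restricts the model class to functions of $(t, v_{i-1})$ alone, so that the per-context predictors can be varied independently without any consistency constraints across contexts. With that spelled out, the remainder is a direct application of Gibbs' inequality (or, equivalently, nonnegativity of KL divergence) together with the degenerate case of an empty context.
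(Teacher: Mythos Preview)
Your proposal is correct and follows essentially the same approach as the paper: both regroup the next-token cross-entropy loss by the conditioning pair $(t,v_{i-1})$, recognize each inner sum as a weighted cross-entropy minimized at the empirical conditional distribution, and then dispose of the degenerate $\sum_u N_{t,v_{i-1},u}=0$ case. Your version is in fact slightly more explicit than the paper's, spelling out the decoupling step and invoking Gibbs' inequality where the paper simply asserts the minimizer.
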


\begin{proof}
We denote $\mathcal{D}$ as the training dataset, $L_i$ as the sequence length of the $i$-th sequence in the dataset, $\bm{v}_{i,j}$ as the one hot embedding of the $j$-th token in the $i$-th training sequence, and $\hat{\bm{v}}_{i,j,u}$ as the $u$-th logit at the $j$-th token in the $i$-th sequence. The cross-entropy loss is given by 
   \begin{align*}
       &-\sum_{i \in [|\mathcal{D}|]} \sum_{j=4}^{L_i} \bm{v}_{i,j,u} \log \hat{\bm{v}}_{i,j,u}  = -\sum_{i \in [|\mathcal{D}|]} \sum_{j=4}^{L_i} \mathbb{I}_{u = v_{i,j}} \log \hat{\bm{v}}_{i,j,u}   \overset{(a)}{=}
       -\sum_{t,v_{j-1} }\sum_u N_{t, v_{j-1}, u } \log \hat{\bm{v}}_{i,j,u} \\
       \overset{(b)}{=} &-\sum_{t,v_{j-1},u} \left(\sum_u N_{t,v_{j-1},u  }\right) \left[\left(\frac{N_{t,v_{j-1},u}}{\sum_u N_{t,v_{j-1},u}} \right) \log \hat{\bm{v}}_{i,j,u}\right],
   \end{align*}
   where (a) uses the assumption that the output logits are determined by target node $t$ and the current node $v_{i-1}$. In (b), we assume that $\sum_u N_{t,v_{i-1},u} \neq 0$. The cross-entropy is minimized when $\hat{\bm{v}}_{i,j,u} = \frac{N_{t, v_{j-1}, u}}{\sum_u N_{t, v_{j-1}, u}}$. If $\sum_u N_{t,v_{i-1},u} = 0$, then the corresponding logits will not affect the loss function and $\hat{\bm{v}}_{i,j,u}$ can take any number. 
\end{proof}

\newpage 
\section{Supplementary Materials for Training-free Methods}

\subsection{Implementation of Baselines}\label{sec:implementation_trainfree}

In this subsection, we present the implementation details of training-free baselines, including LLM's direct inference, GraphSearch, and ours SGC. Method illustrations are shown in Figure \ref{fig:baseline}.

\begin{figure}[!t]
    \centering
    \includegraphics[width=\textwidth]{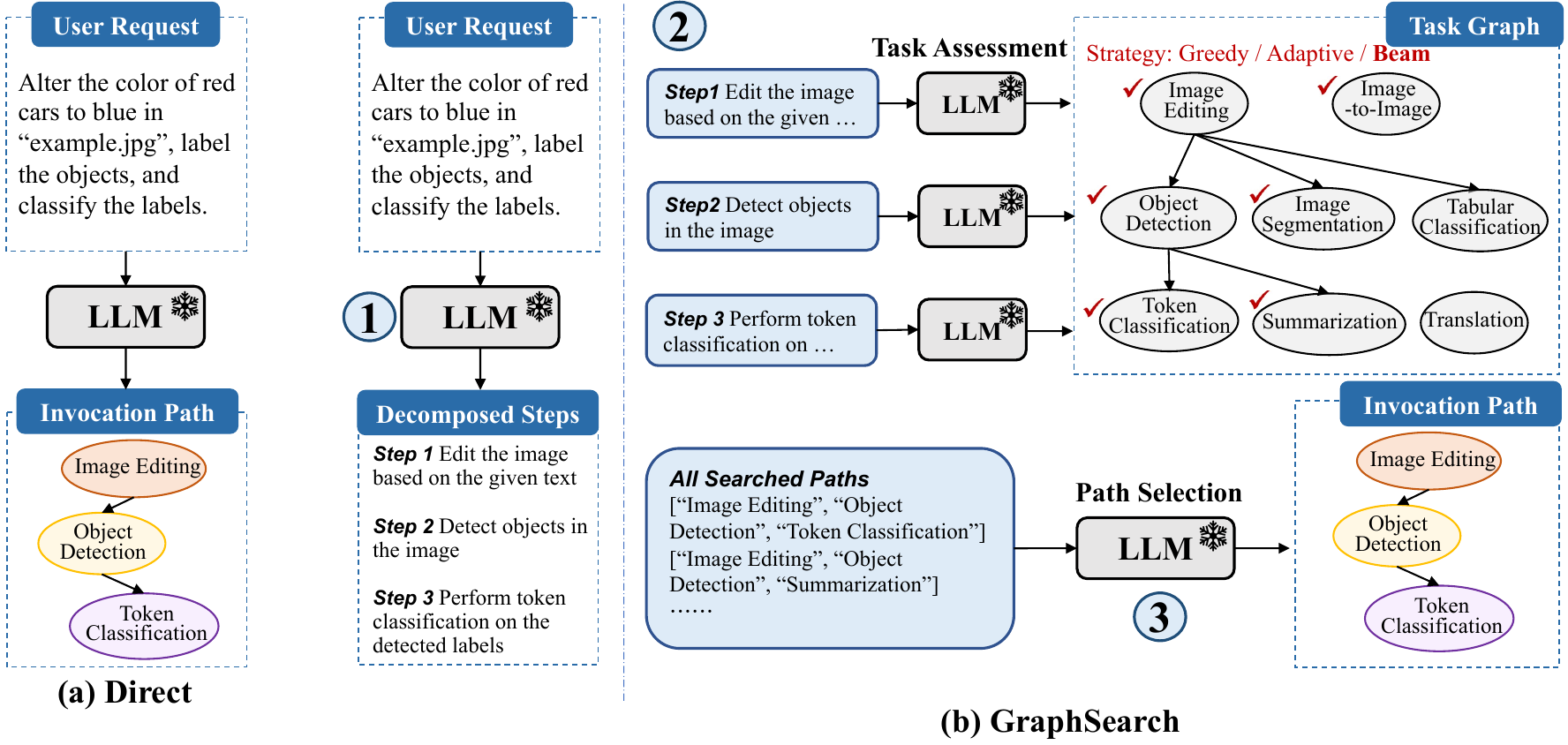}
    \caption{\textbf{Illustration of LLM's Direct Inference and GraphSearch Method.}}
    \label{fig:baseline}
\end{figure}

\textbf{LLM's Direct Inference:} The prompt template for LLM's direct inference is given in Table \ref{tab:prompt_inference}. During experiments, we \textbf{uniformly} apply \textbf{1-shot in context learning} for LLM's direct inference of task invocation path. For open-sourced LLMs, the temperature parameter is set to $0.2$.

\textbf{GraphSearch:} The prompt template for GraphSearch is given in Table \ref{tab:prompt_graphsearch}. This algorithm conducts an iterative search on the task graph to identify an optimal task invocation path that can best satisfy a given request. In each iteration, the neighbors of the last selected task are considered as candidates. These candidates are evaluated by LLM for their suitability for the current step (\emph{Task Assessment}). The search process follows a depth-first approach. After the task assessment in the final step, a set of potential invocation paths is generated. Subsequently, LLM is prompted to select the most appropriate path from these options (\emph{Path Selection}). The GraphSearch algorithm is implemented in three distinct variants, each employing a unique task selection strategy:
\begin{itemize}[leftmargin=*, topsep=2pt]
    \item \textbf{GreedySearch} consistently selects the task node with the \textit{highest} score at each step. Although fast and simple, this approach can lead to cascading errors, resulting in degraded performance.
    \item \textbf{AdaptiveSearch} selects tasks with scores \textit{above a fixed threshold}, adjusting the breadth of the search space in an adaptive mode. During experiments, we empirically set the score threshold to $3$.
    \item \textbf{BeamSearch} retains the \textit{top-$k$} tasks based on the LLM's assessment scores within candidates. Beam search can expand the search space but slightly reduces the efficiency. We uniformly set the beam width to $2$.
\end{itemize}

\textbf{Ours SGC:} Regarding the choices of LM backbones, for integrating GPT-3.5-turbo and GPT-4-turbo with SGC, the Roberta-355M \cite{Reimers2019SentenceBERTSE} serves as the text encoder. For all other datasets and LLMs, the e5-335M \cite{Wang2022e5} configuration is employed.

\subsection{Results of All LLMs}
Table \ref{tab:exp_trainfree_full} supplements Table \ref{tab:exp_trainfree} with other LLMs. The proposed methods perform consistently better. 
\begin{table}[t]
    \centering
    \caption{\textbf{Comparison of Training-free Approaches: Overall Performance (Node-F1 and Link-F1 in $\%$) and Token Consumption in $\times 10^3$}.} 
    
     \resizebox{\textwidth}{!}{%
    \begin{tabular}{cc|ccc|ccc|ccc|ccc}
     \toprule

    \rowcolor{COLOR_MEAN} & & \multicolumn{9}{c|}{\textbf{TaskBench}} & \multicolumn{3}{c}{\textbf{RestBench}} \\
     
   \rowcolor{COLOR_MEAN}  &  & \multicolumn{3}{c|}{\textbf{HuggingFace}} & \multicolumn{3}{c|}{\textbf{Multimedia}} & \multicolumn{3}{c|}{\textbf{Daily Life}} & \multicolumn{3}{c}{\textbf{TMDB}} \\
     
    \rowcolor{COLOR_MEAN}  \multirow{-3}{*}{\textbf{LLM}} &  \multirow{-3}{*}{\textbf{Method}}  & \textit{n-F1}  $\uparrow$ & \textit{l-F1}  $\uparrow$ & \textit{\# Tok} $\downarrow$  & \textit{n-F1}  $\uparrow$ & \textit{l-F1}  $\uparrow$ & \textit{\# Tok} $\downarrow$ & \textit{n-F1}  $\uparrow$ & \textit{l-F1}  $\uparrow$ &   \textit{\# Tok} $\downarrow$ & \textit{n-F1}  $\uparrow$ & \textit{l-F1}  $\uparrow$ & \textit{\#Tok} $\downarrow$  \\ 

     \midrule

       \multirow{5}{*}{\begin{tabular}{c} \textbf{Baichuan2} \\ \textbf{13B} \end{tabular}} & Direct & 45.85 & 19.00 & 2.43 & 47.57 & 4.08 & 2.59 & 33.45 & 9.52 & 3.72 & 30.87 & 9.92 & 1.96 \\ 
    & GreedySearch & 30.58 & 4.89 & 6.42 & 18.74 & 4.45 & 5.69 & 15.60 & 1.61 & 5.91 & 22.52 & 2.98 & 3.62 \\
    & AdaptiveSearch & 39.30 & 10.41 & 10.81 & 33.24 & 9.22 & 11.17 & 34.39 & 12.73 & 16.71 & 30.33 & \textbf{10.00} & 8.71\\
    & BeamSearch & 41.06 & 9.59 & 24.69 & 32.24 & 9.09 & 21.60 & 36.18 & 13.18 & 23.83 & 30.97 & 7.61 & 9.08 \\ 
   \rowcolor{blue!10} \cellcolor{white}& \textbf{SGC} & \textbf{56.53} & \textbf{29.94} & \textbf{2.28} & \textbf{56.75} & \textbf{31.62} & \textbf{2.43} & \textbf{62.31} & \textbf{36.69} & \textbf{3.53} & \textbf{32.97} & 9.11 & \textbf{1.84}\\

   \midrule

        \multirow{5}{*}{\begin{tabular}{c} \textbf{Vicuna} \\ \textbf{13B} \end{tabular}} & Direct & 50.46 & 21.27 & 2.50 & 53.57 & 23.19 & 2.64 & 73.70 & 45.80 & 3.82 & 44.66 & 14.01 & 2.02 \\
    & GreedySearch & 52.94 & 25.73 & 6.23 & 46.99 & 23.11 & 5.55 & 42.98 & 13.33 & 7.18 & 45.22 & 13.69  & 3.42\\
    & AdaptiveSearch &  54.36 & 25.67 & 9.81 & 51.24 & 24.32 & 11.25 & 62.71 & 31.15 & 13.92 & 41.32 & 7.02 & 6.51 \\
    & BeamSearch & 56.64 & 26.93 & 24.11 & 54.09 & 26.19 & 25.42  & 54.55 & 23.60 & 24.86 & 46.91 & 15.41 & 7.79\\
   \rowcolor{blue!10}\cellcolor{white} & \textbf{SGC} & \textbf{59.62} & \textbf{31.98} & \textbf{2.31} & \textbf{61.78} & \textbf{37.60} & \textbf{2.43} & \textbf{83.33} & \textbf{63.77} & \textbf{3.82} & \textbf{48.79} & \textbf{15.99} & \textbf{1.89} \\  \midrule

     \multirow{5}{*}{\begin{tabular}{c} \textbf{CodeLlama} \\ \textbf{7B} \end{tabular}} & Direct & 58.06 & 29.39 & 2.44 & 59.44 & 30.83 & 2.57 & 84.12 & 62.89 & 3.82 & 65.67 & 41.99 & 1.94\\ 
    & GreedySearch & 58.71 & 31.56 & 5.84 & 62.83 & 38.12  & 5.35 & 82.51 & 63.83 & 7.08 & 65.51 & 42.60 & 3.12\\
    & AdaptiveSearch & 60.42 & 33.18 & 6.84 & 62.32 & 36.81 & 5.50 & 83.42 & 64.15 & 7.83 & 65.37 & 40.64 & 5.00 \\
    & BeamSearch & 60.34 & 31.36  & 17.95 & 64.12 & 38.99 & 21.48 & 83.25 & 63.48 & 24.48 & 64.60 & 40.50 & 5.78\\ 
   \rowcolor{blue!10}\cellcolor{white} & \textbf{SGC} & \textbf{63.98}  & \textbf{39.27} & \textbf{2.30} &\textbf{ 67.04 }& \textbf{45.04} & \textbf{2.43} & \textbf{87.73}  & \textbf{70.49} & \textbf{3.59} & \textbf{66.15} & \textbf{42.62} & \textbf{1.88} \\

   \midrule

    \multirow{5}{*}{\begin{tabular}{c} \textbf{Mistral} \\ \textbf{7B} \end{tabular}} & Direct & 60.60 & 30.23 & 2.49 & 69.83 & 39.85 & 2.64 & 84.26 & 53.63 & 3.77 & 62.23 & 22.02 & 1.96 \\
    & GreedySearch & 65.91 & 38.13 & 6.52 & 58.92 & 34.72 & 6.26 & 75.18 & 49.47 & 8.27 & 60.64 & 23.18 & 4.38 \\ 
    & AdaptiveSearch & 67.30 & 38.90 & 7.68 & 71.59 & 44.84 & 10.66 & 86.39 & 63.65 & 10.92 & 54.04 & 21.35 & 9.99\\
    & BeamSearch & 67.13 & 36.73 & 25.66 & 73.55 & 47.12 & 31.10 & 85.87 & 61.53 & 39.16 & 63.41 & \textbf{26.79} & 11.26\\ 
   \rowcolor{blue!10} \cellcolor{white}& \textbf{SGC} & \textbf{67.43} & \textbf{42.08} & \textbf{2.32} & \textbf{74.07} & \textbf{49.90} & \textbf{2.43} & \textbf{87.13} & \textbf{66.49} & \textbf{3.54} & \textbf{64.72} & 25.67 & \textbf{1.89} \\  \midrule

      \multirow{5}{*}{\begin{tabular}{c} \textbf{CodeLlama} \\ \textbf{13B} \end{tabular}} & Direct & 57.55 & 28.88 & 2.45 & 68.57 & 41.79 & 2.59 & 91.20 & 76.07 & 3.88 & 68.91 & 43.74 & 2.02\\
      &  GreedySearch & 61.67 & 34.02 & 5.95 & 67.98 & 42.04 & 4.95 & 91.50 & 76.56 & 5.54 & 66.67 & 42.16 & 3.81 \\
      & AdaptiveSearch & 60.85 & 31.66 & 11.10 & 68.14 & 41.71 & 6.77 & 91.34 & 76.09 & 7.18 & 63.74 & 37.17 & 8.16 \\
      & BeamSearch & 62.65 & 34.31 & 20.14 & 69.53 & 43.35 &19.51 & 91.74 & 76.60 & 19.19 & 68.08 & 42.92 & 8.88\\
     \rowcolor{blue!10}\cellcolor{white} & \textbf{SGC} & \textbf{65.51} & \textbf{39.44} & \textbf{2.31} & \textbf{73.32} & \textbf{53.28} & \textbf{2.43} & \textbf{92.96} & \textbf{79.57} & \textbf{3.64} & \textbf{71.40} & \textbf{47.55} & \textbf{1.90} \\  \midrule

   \multirow{5}{*}{\begin{tabular}{c} \textbf{GPT-} \\ \textbf{3.5-turbo} \end{tabular}} & Direct & 73.85 & 45.73 & 2.14 & 82.85 & 62.07 & 2.26 & 96.09 & 83.65 & 3.36 & 81.70 & 57.52 & 1.67 \\ 
   & GreedySearch & 67.75 & 43.88 & 5.29 & 81.11 & 63.02 & 4.92 & 93.77 & 81.26 & 7.36 & 76.19 & 50.11 & 3.06\\ 
   & AdaptiveSearch & 72.18 & 47.55 & 7.47 & 81.86 & 62.71 & 5.71 & 93.79 & 81.41 & 8.53 & 77.57 & 53.65 & 5.89 \\ 
   & BeamSearch & 75.51 & 49.62 & 14.22 & 83.57 & \textbf{64.50} & 12.91 & 95.66 & 82.72 & 22.05 & 81.24 & 57.98 & 6.42 \\ 
   \rowcolor{blue!10}\cellcolor{white} & \textbf{SGC} & \textbf{76.37} & \textbf{50.04} & \textbf{2.02} & \textbf{83.65} & 63.65 & \textbf{2.09} & \textbf{96.38} & \textbf{86.19} & \textbf{3.16} & \textbf{82.63} & \textbf{59.15} & \textbf{1.61} \\ 
   \midrule

   \multirow{5}{*}{\begin{tabular}{c} \textbf{GPT-} \\ \textbf{4-turbo} \end{tabular}} & Direct & 77.60 & 52.18 & 2.19 & 88.29 & 69.38 & 2.28  & \textbf{97.36} & 84.58 & 3.37 & \textbf{82.56} & \textbf{56.67} & 1.75 \\ 
   & GreedySearch & 74.75 & 50.44 & 5.78 & 86.81 & 69.80 & 5.52 & 97.36 & 85.78 & 7.37 & 75.34 & 49.95 & 3.73 \\ 
   & AdaptiveSearch & 76.17 & 51.30 & 8.94 & 88.02 & 69.99 & 7.14 & 97.30 & \textbf{85.80} & 9.04 & 81.78 & 55.15 & 6.35 \\ 
   & BeamSearch & 77.56 & \textbf{52.54} & 8.98 & 88.16 & \textbf{70.39} & 6.90  & 97.35 & 85.78 & 8.99  & 80.11 & 51.00 & 5.18 \\ 
   \rowcolor{blue!10} \cellcolor{white}& \textbf{SGC} & \textbf{77.79} & 52.20 & \textbf{2.03}  & \textbf{88.54} & 69.83 & \textbf{2.10} & 97.35 & 85.76 & \textbf{3.16} & 82.27 & 56.37 & \textbf{1.62} \\ 

     \bottomrule
   \end{tabular}
   }
   \label{tab:exp_trainfree_full}
\end{table}

\subsection{Accuracy Results of Training-free Methods}

\begin{table}[!t]
    \centering
    \caption{\textbf{Results of Supplementary Metric: Accuracy ($\%$) for Training-free Methods on TaskBench}. Accuracy is $1$ if predicted tasks match the ground-truth task set, and $0$ otherwise.}
    \resizebox{\textwidth}{!}{
    \begin{tabular}{ccccc|ccccc}
      \toprule
      \rowcolor{COLOR_MEAN} \textbf{LLM} & \textbf{Method} & \textbf{HuggingFace} & \textbf{Multimedia} & \textbf{DailyLife}  & \textbf{LLM} & \textbf{Method} & \textbf{HuggingFace} & \textbf{Multimedia} & \textbf{DailyLife}  \\
      \midrule
      \multirow{5}{*}{\begin{tabular}{c} \textbf{Vicuna} \\ \textbf{13B} \end{tabular}} & Direct & 8.72 & 11.20 & 24.43 & \multirow{5}{*}{\begin{tabular}{c} \textbf{CodeLlama} \\ \textbf{7B} \end{tabular}} & Direct & 15.00 & 15.19 & 47.69   \\
      & GreedySearch & 10.95 & 9.34 & 3.76 & & GreedySearch & 16.20 & 20.04 & 45.07 \\
      & AdaptiveSearch & 10.55 & 10.37 & 13.15 & & AdaptiveSearch & 18.79 & 19.41 & 46.48 \\
      & BeamSearch & 12.58 & 12.03 & 11.06 & & BeamSearch & 17.00 & 21.10 & 45.67 \\ 
     \rowcolor{blue!10} \cellcolor{white} & \textbf{SGC} & \textbf{16.02} & \textbf{20.12} & \textbf{42.17} & \cellcolor{white} & \textbf{SGC} & \textbf{21.20} & \textbf{29.32} & \textbf{55.33} \\  \midrule

      \multirow{5}{*}{\begin{tabular}{c} \textbf{Mistral} \\ \textbf{7B} \end{tabular}} & Direct & 16.36 & 25.05 & 44.52 & \multirow{5}{*}{\begin{tabular}{c} \textbf{CodeLlama} \\ \textbf{13B} \end{tabular}} & Direct & 14.29 & 24.10 & 66.40 \\ 
      & GreedySearch & 20.45 & 16.02 & 29.22 & & GreedySearch & 19.11 & 24.90 & 67.00 \\ 
      & AdaptiveSearch & 21.88 & 26.90 & 49.32 & & AdaptiveSearch & 17.30 & 24.10 & 66.80 \\ 
      & BeamSearch & 20.45 & 29.36 & 45.89 & & BeamSearch & 19.92 & 25.70 & 67.20  \\ 
     \rowcolor{blue!10} \cellcolor{white} & \textbf{SGC} & \textbf{25.15} & \textbf{33.68} & \textbf{52.28} & \cellcolor{white} & \textbf{SGC} & \textbf{22.54} & \textbf{36.75} & \textbf{70.80} \\  \midrule
      
      \multirow{5}{*}{\begin{tabular}{c} \textbf{GPT-} \\ \textbf{3.5-turbo} \end{tabular}} & Direct & 28.95 & 47.96 & 81.30 & \multirow{5}{*}{\begin{tabular}{c} \textbf{GPT-} \\ \textbf{4-turbo} \end{tabular}} & Direct & 33.68 & 60.56 & \textbf{86.77} \\ 
      & GreedySearch & 26.90 & \textbf{52.47} & 73.17 & & GreedySearch & 33.68 &  61.37 & 86.77 \\ 
      & AdaptiveSearch & 29.36 & 51.61 & 74.59 & & AdaptiveSearch & 33.47 & 61.17 & 86.77 \\ 
      & BeamSearch & 32.03 & 52.47 & 80.87 & & BeamSearch & 33.26 & \textbf{61.57} & 86.57 \\ 
     \rowcolor{blue!10} \cellcolor{white} & \textbf{SGC} & \textbf{32.44} & 51.61 & \textbf{83.13} & \cellcolor{white} & \textbf{SGC} & \textbf{34.09} & 60.97 & 86.77 \\ 
       \bottomrule
    \end{tabular}
    }
    \label{tab:trainfree_acc}
\end{table}

Due to space limitations, we present only the Node-F1 and Link-F1 scores for training-free methods in the main text. Here, we provide the Accuracy results in Table \ref{tab:trainfree_acc}. These results show that integrating SGC significantly enhances accuracy across different LLMs on all datasets, making previously unsolvable planning scenarios manageable and successful.

\subsection{Computational Cost Analysis}
In this subsection, we present a comprehensive efficiency study on inference time of training-free methods and results are shown in Table \ref{tab:efficiency_trainfree}.

\begin{table}[!t]
    \centering
    \caption{\textbf{Computational Cost Analysis of Training-free Methods.} Due to space constraints in the table, some LLMs are abbreviated such as ``GPT-3.5'' for ``GPT-3.5-turbo''. }
     \resizebox{0.9\textwidth}{!}{%
    \begin{tabular}{c|ccccccc}
       \toprule
       \rowcolor{COLOR_MEAN} & \multicolumn{7}{c}{Inference Times Comparison on HuggingFace (Seconds)} \\
       \rowcolor{COLOR_MEAN} \multirow{-2}{*}{Method} & Baichuan & Vicuna & CodeLlama-7B & Mistral & CodeLlama-13B & GPT-3.5 & GPT-4 \\ \midrule
        Direct & 6.0 & 3.6 & 10.9 & 4.5 & 9.7 & 2.7 & 26.1\\ 
        GreedySearch & 30.7 & 45.7 & 23.1 & 109.8 & 29.1 & 7.4 & 55.6  \\ 
        AdaptiveSearch & 50.2 & 79.4 & 27.2 & 28.2 & 52.4 & 9.4 & 87.0 \\ 
        BeamSearch & 102.0 & 55.0 & 60.8 & 85.5 & 92.3 & 14.9 & 270.2\\ 
      \rowcolor{blue!10}  \textbf{SGC} & 6.1 & 3.7 & \textbf{10.7} & 4.6 & \textbf{9.5} & 3.0 & \textbf{24.4}\\
        \bottomrule \toprule
      \rowcolor{COLOR_MEAN}   & \multicolumn{7}{c}{Inference Times Comparison on Multimedia (Seconds)} \\ \midrule
      Direct & 9.7 & 3.3 &13.2 & 4.5 & 14.6 & 2.9 & 25.1 \\ 
      GreedySearch & 52.3 & 54.3 & 37.0 & 109.7 & 9.9 & 8.8 & 52.2 \\
      AdaptiveSearch & 98.1 & 142.5 & 25.7 & 41.6 & 25.6 & 9.6 & 84.2 \\ 
      BeamSearch & 122.3 & 69.8 & 103.0 & 92.0 & 84.4 & 15.0 & 70.9\\
     \rowcolor{blue!10}   \textbf{SGC} & \textbf{9.5} & 3.4 & \textbf{12.9} & \textbf{4.5} & \textbf{14.1} & 3.1 & \textbf{23.5} \\ 
      \bottomrule \toprule
       \rowcolor{COLOR_MEAN}   & \multicolumn{7}{c}{Inference Times Comparison on Daily Life (Seconds)} \\ \midrule
      Direct & 10.0 & 6.5 & 19.4 & 5.6 & 18.6 & 3.4 & 31.0 \\ 
      GreedySearch & 62.7 & 49.8 & 29.3 & 198.3 & 37.6 & 13.2 & 124.3 \\ 
      AdaptiveSearch & 133.3 & 97.6 & 30.5 & 69.1 & 45.5 & 16.5 & 209.0 \\ 
      BeamSearch& 196.9 & 54.1 & 106.9 & 195.9 & 64.9 & 89.4 & 161.7 \\ 
     \rowcolor{blue!10}    \textbf{SGC} & \textbf{9.9} & \textbf{6.5} & \textbf{18.6} & 5.7 & \textbf{17.9} & 3.6 & \textbf{29.5} \\ 
       \bottomrule
     \end{tabular}
    }
    \label{tab:efficiency_trainfree}
\end{table}

Open-sourced LLMs were deployed as local API services using the FastChat framework\footnote{https://github.com/lm-sys/FastChat} on a single A100-80G GPU. This configuration enables faster and parallel inference. Under this setup, LLM's direct inference \textbf{requires 3-15 seconds} per request. GPT-3.5-turbo and GPT-4-turbo are accessed via API, with the latter generally requiring more time. GraphSearch requires \textbf{several minutes} to complete a request due to its exhaustive search on the task graph, impacting the efficiency. In contrast, SGC achieves comparable efficiency to LLM's direct inference, as it requires only a single LLM query and both LM and SGC's forward propagation processes are extremely efficient (typically completing within seconds). Note that some discrepancies in reported times, such as Mistral-7B's GreedySearch taking longer than other modes, may be attributed to variations in the deployment across different A100 services.

\newpage 
\section{Supplementary Materials for Training-based Methods}
\subsection{Implementation of Training-based GNNs}\label{sec:implementation_trainbase}

\begin{figure}[!t]
    \centering
    \includegraphics[width=0.9\textwidth]{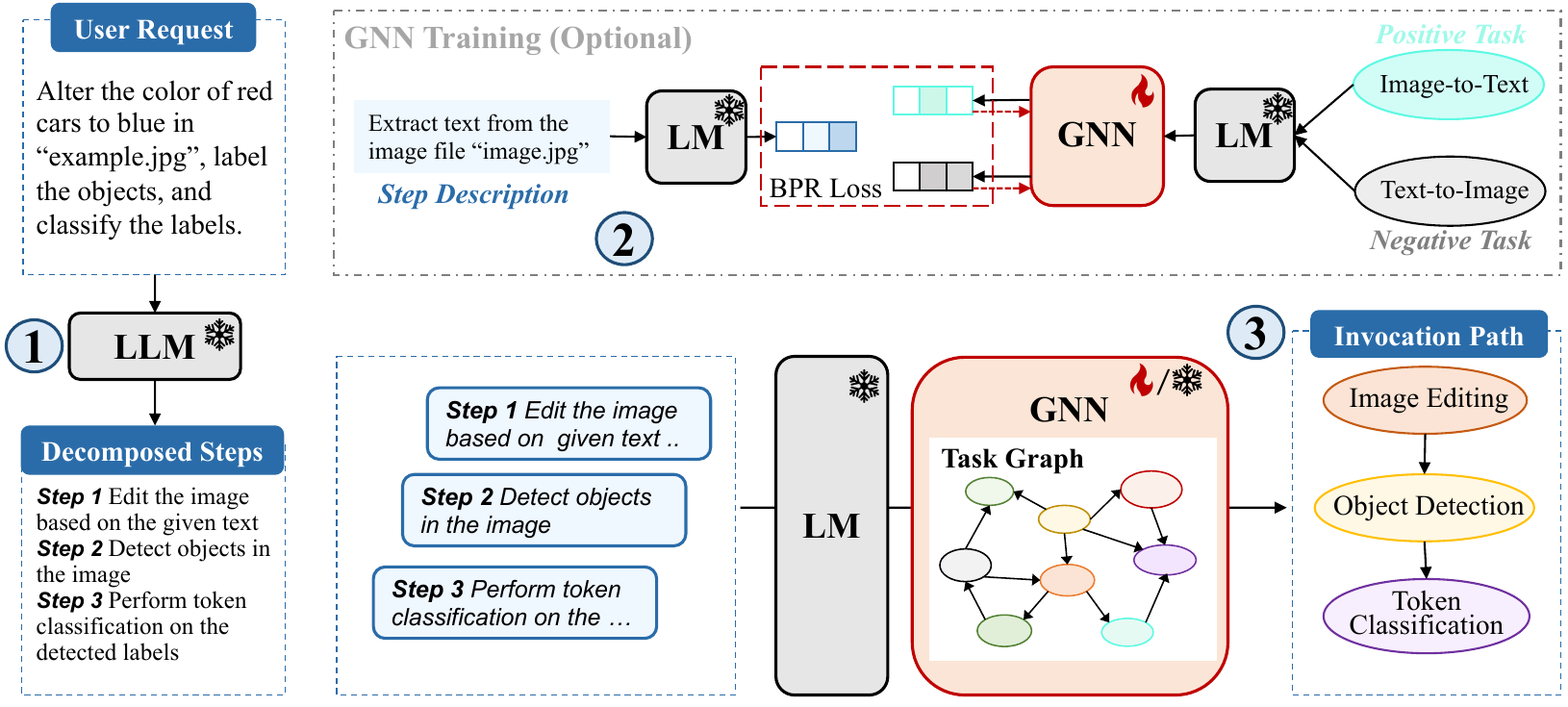}
    \caption{\textbf{Illustration of our GNN-enhanced Task Planning.} First, \textbf{LLMs interpret user request into several manageable steps}. Then, we \textbf{leverage GNNs for task retrieval}, sequentially matching each step description to a suitable task, finally generating the invocation path. }
    \label{fig:model_full}
\end{figure}
\textbf{LM and GNN Configuration:} For training-based GNNs, we uniformly use the e5-335M model \cite{Wang2022e5} as the LM backbone. For the graph encoder, our setup includes a single layer with a hidden dimension of $1024$. During the model training, we set the batch size to $512$ and run for $20$ epochs with a learning rate of $1e-3$. We use the Adam optimizer \cite{Kingma2014AdamAM} and implement an early stopping mechanism with a patience of $5$ epochs to prevent over-fitting. All experiments are conducted on a single NVIDIA A100-80G GPU.

\textbf{Training Data Preparation:} From each dataset in TaskBench, we randomly select $3,000$ samples to create the trainset. The original data includes specific task steps and corresponding task invocation paths. Therefore, we first employ a topological sort to align each task step accurately with corresponding task, forming ``<step, ground-truth task>'' pairs. Then, for each pair, we randomly sample two negative tasks to constitute the ``\textbf{<step, positive task, negative task>}'' triplets for model training. These negative samples are selected based on how textually similar they are to the positive one, creating a robust differentiation challenge for the model.

\textbf{Choices of Different Configurations:} Our training-based model offers two configuration options: training only the GNN while keeping the LM frozen, or co-training both the LM and GNN. Illustrations of these configurations are provided in Figure \ref{fig:model_config}. The first configuration is designed to explore the GNN’s capability in task retrieval. The latter leverages the LM’s dataset-specific semantic embeddings to enhance performance. For the co-training setup, we use a learning rate of $2e-5$ and a training duration of $10$ epochs.

\begin{figure}[!t]
    \centering
    \includegraphics[width=0.9\textwidth]{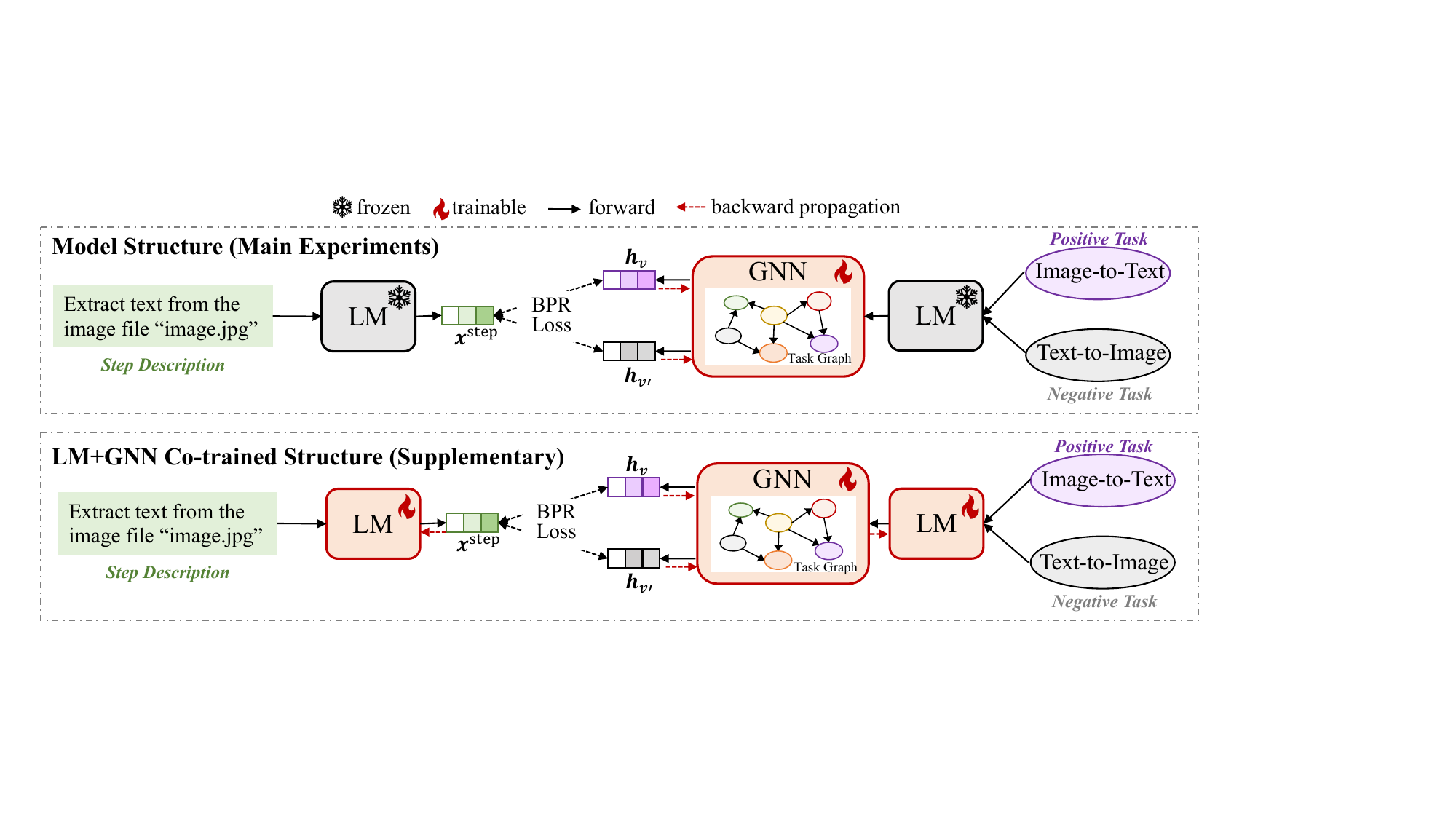}
    \caption{\textbf{LM+GNN Configuration}. We offer two configurations: only training the GNN while keeping the LM frozen (Table \ref{tab:exp_gnnfull}), or co-training both the LM and GNN (Table \ref{tab:exp_lmgnn}).}
    \label{fig:model_config}
\end{figure}

\subsection{Implementation of TAPE and GraphToken}\label{sec:tape_and_gtoken}

We adapt TAPE \cite{he2023harnessing} and GraphToken \cite{perozzi2024graphtoken} as training-required baselines for task planning. Here, we detail the adaptation processes for each method. 

\textbf{TAPE:} To adapt TAPE for task planning, we reformulate the planning task as a node classification problem, aiming to classify a user request into the appropriate task labels within the task graph. Firstly, LLMs interpret user requests by generating high-quality explanatory text, i.e., chain-of-thought reasoning, to understand each request. Then, a fine-tuned LM encodes these chain-of-thought texts into latent embeddings. The fine-tuning process is based on pairs of textual descriptions and their corresponding ground-truth tasks. Finally, GNNs select the suitable tasks by leveraging both the generated text embeddings and the task embeddings. For a fair performance comparison, we fine-tune the LM as e5-335M and configure the GNN as a $2$-layer GraphSAGE with a hidden dimension of $1024$. The training data for both the LM and GNNs are consistent, utilizing the same split of the training dataset.

\textbf{GraphToken:} For adapting GraphToken, we first encode the task graph $G(V, E, X)$ by computing each node's representation $\bm{x}_v$ by feeding its descriptive text into the pre-trained LM, e5-335M. Then, a GNN transforms the initial node embedding matrix into $H$. The GNN is configured as a $2$-layer GraphSAGE with a hidden dimension of $1024$. A mean pooling operation is further applied to the node embeddings to obtain the graph's overall representation as $\bm{h}_G = \text{Mean-Pooling}(H)$. Finally, both $\bm{h}_G$ and the text embeddings of input instruction and user request are concatenated and fed into the LLM to generate the output planning result. During this process, the backbone LLMs are frozen, and only the GNN’s parameters are tuned. For each dataset, we train over $4$ epochs using the same $3,000$ training samples consisting of \texttt{<user request, ground-truth planning>} pairs. The batch size is set to $16$, with a maximum input length of $512$ and a maximum output length of $300$ for HuggingFace and Multimedia datasets, and $600$ for DailyLife dataset. The learning rate is configured to $1e-5$.

\subsection{Implementation of Fine-tuning LLMs}\label{sec:finetune_llm}
To explore the effectiveness of our proposed framework on fine-tuned LLMs, we employ Supervised Fine-Tuning (SFT) on the CodeLlama-7B and Vicuna-13B models. We use \textbf{the same set of $3,000$ samples} from GNN training as the LLMs' fine-tuning data. In fine-tuning the LLM with LoRA \cite{Hu2021LoRALA}, we set the \texttt{lora\_r} parameter (dimension for LoRA update matrices) to $8$ and the \texttt{lora\_alpha} (scaling factor) to $16$. The dropout ratio is set to $0.1$, the batch size to $2$, and we conduct training over 2 epochs with a learning rate of $1e-5$. For HuggingFace dataset, the maximum input length is set to 800, while the maximum output length is 400. For Multimedia and Daily Life datasets, which contain a larger number of tasks and require longer textual inputs, we set the maximum input and output length to 1000 and 500, respectively. We utilize 2 NVIDIA A100-80G GPUs for fine-tuning the LLMs. 

\subsection{Full Results of Training-based GNNs}
In Table \ref{tab:exp_gnnfull}, we present comprehensive results for all training-based GNNs, including GCN \cite{kipf2017semi}, GAT \cite{velickovic2018graph}, GraphSAGE \cite{hamilton2017inductive}, GIN \cite{xu2018how}, and Graph Transformer \cite{TransformerConv}. To highlight the improvements brought by GNNs, we also include results from the strongest baseline, BeamSearch.

From the results, it is obvious that \textbf{all GNN encoders significantly enhance the task planning abilities}. For instance, when applied to Vicuna-13B on HuggingFace dataset, the introduction of GCN results in a performance improvement of $17.83\%$, GAT contributes to a $17.48\%$ increase, GraphSAGE leads to a $22.59\%$ boost, and GraphTransformer improves predictions by $18.0\%$. This conclusion can be generalized across various LLMs and datasets, demonstrating the robust capabilities of GNNs. 

\begin{table}[!t]
    \centering
     \caption{\textbf{Performance of training-based GNNs}. We also presents the results of BeamSeaerch, the strongest variant from GraphSearch method to provide a comprehensive comparison. All GNNs consistently enhance task planning performance across diverse LLMs, showing the effectiveness.}
    \resizebox{\textwidth}{!}{%
     \begin{tabular}{cc|cc|cc|cc}
     \toprule
     
     \rowcolor{COLOR_MEAN} &   & \multicolumn{2}{c|}{\textbf{HuggingFace Tools}} & \multicolumn{2}{c|}{\textbf{Multimedia Tools}} & \multicolumn{2}{c}{\textbf{Daily Life APIs}} \\
     
     \rowcolor{COLOR_MEAN}  \multirow{-2}{*}{\textbf{LLM}}  & \multirow{-2}{*}{\textbf{Method}} &  \textit{n-F1}  $\uparrow$ & \textit{l-F1}  $\uparrow$ & \textit{n-F1}  $\uparrow$ & \textit{l-F1}  $\uparrow$ & \textit{n-F1}  $\uparrow$ & \textit{l-F1}  $\uparrow$  \\ 
      \midrule

       \multirow{7}{*}{\textbf{Baichuan-13B}} & Direct & 45.85 & 19.00 & 47.57 & 4.08 & 33.45 & 9.52 \\ 
         & BeamSearch & 41.06 & 9.59 & 32.24 & 9.09 & 36.18 & 13.18 \\ 
      & GCN & 57.67 & 31.47 & 55.51 & 30.16 & 62.11 & 37.05\\ 
      & GAT  & 57.74 & 31.87 & 54.95 & 29.24 & 62.11 & 37.05  \\ 
      & GraphSAGE & \textbf{59.32} & \textbf{34.36} & \textbf{56.15} & \textbf{31.60 } & \textbf{65.18} & \textbf{40.49} \\
      & GIN & 57.38 & 31.17 & 55.08 & 30.04 & 62.11 & 37.05 \\ 
      & GraphTransformer & 59.15 & 34.36  & 56.06 & 31.12 & 64.52 & 40.14 \\ 
      \midrule

       \multirow{7}{*}{\textbf{Vicuna-13B}} & Direct & 50.46 & 21.27 & 53.57 & 23.19 & 73.70 & 45.80 \\ 
       & BeamSearch & 56.64 & 26.93 & 54.09 & 26.19  & 54.55 & 23.60 \\
      & GCN & 59.46 & 33.14 &  62.48 & 38.89 & 83.05 & 62.95 \\ 
      & GAT & 59.28 & 33.39 & 62.96 & 39.24 & 83.05 & 62.95 \\ 
      & GraphSAGE & \textbf{61.86} & \textbf{35.68} & \textbf{63.71} & \textbf{39.88} & \textbf{86.07} & \textbf{67.63} \\
      & GIN & 59.14 & 32.33 & 62.61 & 38.82 & 83.05 & 62.95  \\ 
      & GraphTransformer & 59.57 & 33.47 & 63.32 & 39.38 & 85.41 & 66.28 \\ 
      \midrule

       \multirow{7}{*}{\textbf{CodeLlama-7B}} & Direct & 58.06 & 29.39 & 59.44 & 30.83  & 84.12 & 62.89 \\ 
       & BeamSearch & 60.34 & 31.36  & 64.12 & 38.99 & 83.25 & 63.48 \\ 
      & GCN & 65.07 & 40.50 &  67.46 & 45.84 & 87.23 & 69.27\\ 
      & GAT & 65.20 & 40.93 & 67.41 & 46.46 & 87.23 & 69.27 \\ 
      & GraphSAGE & \textbf{66.67} & \textbf{43.03} & 67.97 & 46.31 & \textbf{88.53} & \textbf{72.02}  \\
      & GIN & 65.52 & 40.98 & 66.89 & 45.41 & 87.23 & 69.27 \\ 
      & GraphTransformer & 65.83 & 42.58 & \textbf{68.90} & \textbf{47.20} & 88.36 & 71.72  \\ 

      \midrule

     \multirow{7}{*}{\textbf{Mistral-7B}} & Direct & 60.60 & 30.23 & 69.83 & 39.85 & 84.26 & 53.63 \\ 
     & BeamSearch & 67.13 & 36.73 & 73.55 & 47.12 & 85.87 & 61.53 \\ 
      & GCN & 66.54 & 40.74 & 73.34 & 50.76 & 86.39 & 65.49 \\ 
      & GAT & 66.77 & 40.74 & 73.36 & 50.20 & 86.39 & 65.49 \\ 
      & GraphSAGE & 68.12 & \textbf{43.09} & \textbf{75.51} & \textbf{52.94}  & 87.51 & 66.57\\
      & GIN  & 66.69 & 40.79 & 72.89 & 50.44 & 86.39 & 65.49 \\ 
      & GraphTransformer  & \textbf{68.26} & 43.08 & 73.80 & 51.45 & \textbf{88.25} & \textbf{67.84}  \\ 
      
    \midrule

      & Direct & 57.55 & 28.88 & 68.57 & 41.79 & 91.20 & 76.07 \\  & BeamSearch & 62.65 & 34.31 & 69.53 & 43.35 & 91.74 & 76.60 \\
      & GCN & 66.22 & 41.05 & 72.99 & 52.18 & 91.83 & 77.88 \\ 
      & GAT & 66.29 & 41.28 & 74.08 & 53.56  & 91.83 & 77.88  \\ 
      & GraphSAGE & \textbf{67.30} & \textbf{42.41} & \textbf{74.93} & \textbf{54.52} & \textbf{93.84} & 80.38 \\
      & GIN & 66.40 & 40.89 & 73.62 & 53.15& 91.83 & 77.88  \\ 
     \multirow{-7}{*}{\textbf{CodeLlama-13B}} & GraphTransformer & 66.70 & 42.07 & 74.72 & 54.10 & 93.81 & \textbf{80.44} \\ 

      \midrule

      \multirow{7}{*}{\textbf{GPT-3.5-turbo}} & Direct & 73.85 & 45.73 & 82.85 & 62.07 & 96.09 & 83.65 \\
      & BeamSearch & 75.51 & 49.62 & 83.57 & 64.50 & 95.66 & 82.72  \\
      & GCN & 76.93 & 51.43 & 84.92 & 65.05 & 96.38 & 86.15 \\ 
      & GAT & 75.63 & 49.36 & 84.77 & 65.48 & 96.38 & 86.15 \\
      & GraphSAGE & \textbf{77.90} & \textbf{52.68} & \textbf{85.29} & \textbf{65.80} & \textbf{96.43} & \textbf{86.26}\\ 
      & GIN & 76.86 & 51.00 & 84.14 & 64.30 & 96.38 & 86.15\\
      & GraphTransformer & 77.61 & 52.30 & 84.21 & 64.32 & 96.38 & 86.19 \\

      \midrule

       \multirow{6}{*}{\textbf{GPT-4-turbo}} & Direct & 77.60 & 52.18 & 88.29 & 69.38 & 97.36 & 84.58 \\
       & BeamSearch & 77.56 & \textbf{52.54} & 88.16 & \textbf{70.39} & 97.35 & \textbf{85.78} \\
      & GCN & 77.01 & 50.49 & 88.56 & 69.60 & 97.10 & 85.22 \\ 
      & GAT & 76.41 & 49.66 & 88.43 & 69.52 & 97.10 & 85.22 \\
      & GraphSAGE & \textbf{78.76} & 52.53 & \textbf{88.63} & 69.65 & 97.34 & 85.67 \\ 
      & GIN & 77.74 & 51.02 & 88.05 & 69.13 & \textbf{97.36} & 84.58\\
      & GraphTransformer & 78.47 & 52.17 & 88.07 & 68.71 & 97.32 & 85.57 \\
      
      \bottomrule
    \end{tabular}
    }

    \label{tab:exp_gnnfull}
\end{table}

\clearpage

\begin{table}[!t]
    \centering
    \caption{\textbf{Performance of Training-based GNNs under the LM+GNN Co-trained Mode}. Simultaneous training of LM and GNN yields significant performance improvements.}
     \resizebox{\textwidth}{!}{%
     \begin{tabular}{cc|cc|cc|cc}
     \toprule
     \rowcolor{COLOR_MEAN} & & \multicolumn{2}{c|}{\textbf{HuggingFace Tools}} & \multicolumn{2}{c|}{\textbf{Multimedia Tools}} & \multicolumn{2}{c}{\textbf{Daily Life APIs}} \\
    \rowcolor{COLOR_MEAN} \multirow{-2}{*}{\textbf{LLM}}  & \multirow{-2}{*}{\textbf{Method}}  &  \textit{n-F1}  $\uparrow$ & \textit{l-F1}  $\uparrow$ & \textit{n-F1}  $\uparrow$ & \textit{l-F1}  $\uparrow$ & \textit{n-F1}  $\uparrow$ & \textit{l-F1}  $\uparrow$  \\  \midrule

      \multirow{7}{*}{\textbf{Baichuan-13B}} & Direct & 45.85 & 19.00 & 47.57 & 4.08  & 33.45 & 9.52\\ 
      & SGC &  \textbf{60.97} & 36.12 & 56.02 & 31.36 & 64.84 & 40.00 \\
      & GCN & 60.68 & 36.31 & 57.82 & 32.87 & 64.73 & 38.92 \\ 
      & GAT & 60.39 & 35.37 & 57.24 & 32.19 & 64.46 & 40.14 \\
      & GraphSAGE & 59.76 & 35.59 & \textbf{57.97} & \textbf{33.29} & 63.21 & 38.10 \\
      & GIN & 60.31 & 35.82 & 56.62 & 31.53 & 63.55 & 38.19 \\
      & GraphTransformer & 60.76 & \textbf{36.82} & 56.82 & 31.40 & \textbf{64.88} & \textbf{40.23} \\ \midrule

      \multirow{7}{*}{\textbf{Vicuna-13B}} & Direct & 50.46 & 21.27 & 53.57 & 23.19 & 73.70 & 45.80 \\
      & SGC & \textbf{64.40} & \textbf{38.97} & 65.12 & 41.63 & 84.74 & 65.90  \\
      &  GCN & 62.06 & 35.49 & 65.02 & 40.63  & 85.22 & 66.93 \\ 
      & GAT & 63.06 & 36.97 & 64.58 & 40.30 & \textbf{85.63} & \textbf{67.11} \\
      & GraphSAGE  & 62.82 & 37.04  & \textbf{65.89} & \textbf{42.18} & 84.23 & 65.44 \\ 
      & GIN & 62.09 & 35.33 & 64.44 & 40.67 & 85.31 & 66.83 \\ 
      & GraphTransformer & 62.11 & 36.01 & 64.57 & 40.17 & 85.42 & 66.55 \\  \midrule

      \multirow{7}{*}{\textbf{CodeLlama-7B}} & Direct &58.06 & 29.39 & 59.44 & 30.83 & 84.12 & 62.89 \\
            & SGC & \textbf{67.47} & \textbf{43.58} & 69.61 & 48.24 & 87.98 & 70.63 \\
            & GCN & 67.03 & 43.24 & 69.33 & 47.60 & 87.88 & 70.40 \\
            & GAT & 67.12 & 42.96 & 68.62 & 46.17 & \textbf{88.59} & \textbf{71.64} \\
            & GraphSAGE & 67.19 & 42.94  & \textbf{70.00} & \textbf{48.28} & 87.81 & 70.20 \\
            & GIN & 66.62 & 42.34 & 69.00 & 47.72 & 88.45 & 71.53 \\
            & GraphTransformer & 67.12 & 43.08 & 69.27 & 47.96 & 88.43 & 71.59 \\  \midrule

      \multirow{7}{*}{\textbf{Mistral-7B}} & Direct & 60.60 & 30.23 & 69.83 & 39.85 & 84.26 & 53.63 \\
        & SGC & \textbf{69.04} & \textbf{44.22} & 76.09 & 54.91 & 87.58 & 66.70 \\
        & GCN & 67.72 & 43.02 & 76.79 & 54.90 & \textbf{87.87} & 67.13 \\
        & GAT & 67.54 & 43.56 & 76.26 & 53.94 & 87.86 & \textbf{67.30} \\ 
        & GraphSAGE & 67.61 & 43.14 & 76.96 & \textbf{55.46} & 87.61 & 66.75 \\ 
        & GIN & 68.95 & 43.97 & 76.47 & 54.95 & 87.75 & 67.07 \\
        & GraphTransformer & 67.94 & 43.52 & \textbf{77.06} & 55.39 & 87.76 & 67.00  \\ \midrule

      \multirow{7}{*}{\textbf{CodeLlama-13B}} & Direct & 57.55 & 28.88 & 68.57 & 41.79 & 91.20 & 76.07 \\ 
        & SGC & \textbf{70.14} & 45.20  & 75.65 & \textbf{55.45} & 93.45 & 79.89 \\
        & GCN & 69.39 & 45.18 & 76.03 & 55.22 & 93.38 & 79.74  \\ 
        & GAT & 69.68 & \textbf{45.57} & 75.24 & 54.99 & 94.06 & 80.96 \\
        & GraphSAGE & 68.92 & 44.85 & \textbf{76.28} & 55.41 & 93.30 & 79.51 \\
        & GIN & 69.01 & 44.76 & 74.72 & 53.91 & \textbf{94.24} & \textbf{81.23} \\
        & GraphTransformer & 69.52 & 45.68 & 75.46 & 55.14 & 93.98 & 81.06 \\  \midrule

     \multirow{6}{*}{\textbf{GPT-3.5-turbo}} & Direct & 73.85 & 45.73  & 82.85 & 62.07 & 96.09 & 83.65\\
     & SGC & 77.87 & 52.86 & 85.95 & 66.95 & \textbf{96.39} & 86.16 \\
     & GCN & 77.72 & 52.58 & 85.84 & 66.92 & 96.33 & 86.06 \\
     & GAT & 77.49 & 52.30 & 85.81 & 66.97 & 96.38 & 86.15 \\
     & GraphSAGE & \textbf{77.87} & \textbf{53.04} & 85.51 & 66.56 & 96.34 & 86.09 \\
     & GIN & 77.73 & 52.36 & 85.63 & 66.69 & 96.38 & \textbf{86.19} \\ 
     & GraphTransformer & 77.78 & 52.79 & \textbf{86.09} & \textbf{67.26} & 96.33 & 86.06 \\  \midrule

     \multirow{6}{*}{\textbf{GPT-4-turbo}} & Direct & 77.60 & 52.18 & 88.29 & 69.38 & 97.36 & 84.58\\
     & SGC & 78.44 & 52.84 & \textbf{89.09} & \textbf{70.52} & 97.38 & \textbf{85.85}\\
     & GCN & 78.33 & 52.75 & 89.00 & 70.24 & 97.34 & 85.67 \\
     & GAT & 78.37 & 52.43 & 88.99 & 70.48 & 97.32 & 85.56\\
     & GraphSAGE & \textbf{78.49} & 52.62 & 88.86 & 70.25 & \textbf{97.42} & 85.80\\
     & GIN & 78.45 & \textbf{53.07} & 88.74 & 69.84 & 97.42 & 85.80 \\ 
     & GraphTransformer & 78.30 & 52.27 & 88.90 & 70.24 & 97.42 & 85.80\\

    \bottomrule
    \end{tabular}
    }
    \label{tab:exp_lmgnn}
\end{table}

\clearpage
\newpage
\subsection{Performance of LM+GNN Co-trained Mode}
During our main experiments, the LM backbone remains frozen while only the GNN is trained to automatically learn the alignment between implicit step descriptions and suitable tasks, facilitating task retrieval. In this subsection, we conduct a supplementary study where the parameters of a pre-trained LM are also tuned along with GNN during model training. The model configuration is illustrated in Figure \ref{fig:model_config}, and the results are presented in Table \ref{tab:exp_lmgnn}.

The results demonstrate that, compared to the GNN-only tunable mode, co-training LM+GNN can lead to further performance improvements. This enhancement occurs because the language model acquires task-specific semantics, which makes the representations more discriminative and boosts the GNN's effectiveness in task retrieval. Additionally, it is noted that under the co-training setup, the differences between various GNN encoders are relatively minor, with performance variations across GNNs for a specific LLM on any dataset remaining within $2\%$.

\subsection{Computational Cost Analysis}
In this subsection, we provide the computational costs of training-based methods: training time for GNN or LM+GNN co-trained, and resources needed for fine-tuning LLMs. Results are shown in Table \ref{tab:efficiency}.

\begin{table}[h]
    \centering
    \caption{\textbf{Computational Cost Analysis of Training-based Methods.} We present total training times for both GNN and LM+GNN co-trained modes, and resources needed for fine-tuning LLMs. }

    \resizebox{0.85\textwidth}{!}{%
    \begin{tabular}{cc|c|ccc}
       \toprule
       \rowcolor{COLOR_MEAN} \multicolumn{6}{c}{\textbf{Training GNNs}} \\ 
       \rowcolor{COLOR_MEAN} & &  & \multicolumn{3}{c}{Time (Seconds)} \\
      \rowcolor{COLOR_MEAN}  \multirow{-2}{*}{Mode} & \multirow{-2}{*}{Configuration} & \multirow{-2}{*}{\# Parameters} & HuggingFace & Multimedia & Daily Life \\ \midrule
    \multirow{5}{*}{GNN-only} &  GCN & 1,049,600 & 136.5 & 136.1 & 237.7 \\ 
      &GAT & 1,051,648 & 136.9 & 151.8 & 237.8 \\ 
      &GraphSAGE & 2,098,176 & 134.2 & 149.8 & 233.8 \\ 
      &GIN & 2,099,200 & 134.2 & 134.9 & 233.6 \\ 
      &GraphTransformer & 4,198,400 & 135.0 & 150.2 & 233.7 \\ \midrule
    \multirow{5}{*}{\begin{tabular}{c} LM+GNN \\ Co-trained \end{tabular}} & LM+SGC   & 335,141,889 & 743.1 & 323.3 & 384.7  \\ 
   &  LM+GCN & 336,191,488 & 482.8 & 567.8 & 384.2 \\
   &   LM+GAT & 336,193,536 & 741.3 & 812.3 & 384.4 \\ 
   &   LM+GraphSAGE & 337,240,064 & 741.2 & 406.8 & 381.7 \\
   &   LM+GIN & 337,241,088 & 735.6 & 361.8 & 382.9 \\
   &   LM+GraphTransformer & 339,340,288 &  741.0 & 362.7 & 405.8 \\ \bottomrule
    \end{tabular}
    }

   \resizebox{0.85\textwidth}{!}{%
   \begin{tabular}{c|c|ccc}
       \toprule
    \rowcolor{COLOR_MEAN} \multicolumn{5}{c}{\textbf{Fine-tuning LLMs}} \\
    \rowcolor{COLOR_MEAN} & & \multicolumn{3}{c}{Device \& Time (Hours)} \\
    \rowcolor{COLOR_MEAN} \multirow{-2}{*}{LLM} & \multirow{-2}{*}{\# Param (\# Trainable Param)} & HuggingFace & Multimedia & Daily Life \\ \midrule

    CodeLlama-7B & 6,742,740,992 (4,194,304) & 2$\times$A100 7.0 & 1$\times$A100  13.8 & 2$\times$A100 9.5\\
    Vicuna-13B & 13,022,417,920 (6,553,600) & 1$\times$A100 17.8 & 2$\times$A100 10.3 & 2$\times$A100 19.0 \\ \bottomrule
   \end{tabular}
   }
    
    \label{tab:efficiency}
\end{table}

\textbf{Efficiency of Training GNNs:} During experiments, each dataset shares the same GNN configuration: $1$ single layer with a hidden dimension of $1024$. Therefore, for each GNN, its number of parameters remains consistent across datasets. The parameter scales for GNN variants range from 1M to 4M, and the total training time for each dataset requires only \textbf{2-4 minutes}, comparable to the time taken by GraphSearch to fulfill a single request. For LM+GNN co-trained mode, where e5-335M \cite{Wang2022e5} serves as the LM backbone, training times increase to approximately \textbf{6-12 minutes}. In summary, both modes demonstrate high efficiency, with total training times spanning minutes, showcasing their ability to rapidly adapt to new task planning scenarios.

\textbf{Efficiency of Fine-tuning LLMs:} Fine-tuning a LLM with $3,000$ training samples over $2$ epochs requires huge time, typically \textbf{10-20 hours} on one or two A100-80G GPU devices.

\newpage 

\section{Experiments on Task Parameter Prediction}\label{sec:parameter}

To assess the quality of task planning, we primarily focus on the predicted tasks and their dependencies, leaving the parameters for invoking these tasks undiscussed. As LLMs' direct inference for solely predicting tasks has proven unsatisfactory (Figure \ref{fig:full_hallucination}), relying on LLMs alone to directly predict these parameters is unreliable. In this section, we will first demonstrate that it is quite straightforward for LLMs to fill in the parameters given a planned task sequence, which is a simple extension of our framework. Furthermore, we will empirically show that with more accurately planned tasks, i.e., those retrieved by GNNs, LLMs can intelligently fill in the parameters.

\subsection{Prompting LLMs to Fill in Parameters}
\textbf{Process:} Recall that our framework enables more accurate invoked task sequences, e.g., $\{ v_1, \ldots, v_n \}$, for a specific user request. To finalize the invocation sequence, just adding an additional LLM query can complete the invocation parameters for each task. Specifically, by providing the original user request, planned tasks, and detailed descriptions including each task's input and output requirements, LLMs can be intelligently prompted to fill in the invocation parameters for each planned task, resulting in a well-structured invocation sequence ready for execution. Prompts are provided in Table \ref{tab:prompt_param}.

\textbf{Example:} Taking the request shown in Figure \ref{fig:task_illustration} as an example, which is: ``Please generate an image where a girl is reading a book, and her pose is the same as the boy in `example.jpg'. Then, please describe the new image with your voice.'' Suppose GNN's planned tasks include $\{$ \texttt{Pose Detection}, \texttt{Pose-to-Image}, \texttt{Image-to-Text}, \texttt{Text-to-Speech} $\}$. LLMs can intelligently fill in the invocation arguments as follows: [ \{``task'': \texttt{Pose Detection}, ``arguments'': [`example.jpg'] \}, \{``task'': \texttt{Pose-to-Image}, ``arguments'': [output pose from \texttt{Pose Detection}] \}, \{``task'': \texttt{Image-to-Text}, ``arguments'': [output image from \texttt{Pose-to-Image}] \},  \{``task'': \texttt{Text-to-Speech}, ``arguments'': [output text from \texttt{Image-to-Text}]\}].

\subsection{Empirical Results of LLMs Predicted Parameters}
\textbf{Experimental Setup:} We conduct this supplementary experiment on the HuggingFace dataset from TaskBench \cite{shen2023taskbench}. Specifically, we compare the directly predicted invocation parameters from LLMs (denoted as \textbf{Direct}) with the parameters filled automatically by LLMs using our GNN-retrieved tasks (querying prompt as shown in Table \ref{tab:prompt_param}). For GNNs, we consider both training-free SGC and training-required GraphSAGE. Notably, with SGC, since the query for completing parameters is also training-free, the entire pipeline can be deployed without any extensive model training or labeled data. We adopt evaluation metrics from TaskBench, including \textbf{Parameter-Type F1-Score} (\textit{Param t-F1}) and \textbf{Parameter-Value F1-Score} (\textit{Param v-F1}). These metrics measure the accuracy of the predicted parameter types and concrete values, respectively. For example, when filling in the invocation for \texttt{Pose Detection}, the ground-truth parameter type is \texttt{Image}, and the ground-truth value is \texttt{`example.jpg'}.

\begin{table}[h]
 \centering
  \caption{
  \textbf{Performance (in $\%$) of Task Parameters Prediction on the HuggingFace dataset}
  }
  \label{tab:huggingface_param}
  \resizebox{\textwidth}{!}{%
  \begin{tabular}{clll|clll} 
    \toprule
   \rowcolor{COLOR_MEAN} \textbf{LLM} & \textbf{Method} & \textit{t-F1} $\uparrow$ & \textit{v-F1} $\uparrow$ & \textbf{LLM} & \textbf{Method} & \textit{t-F1} $\uparrow$& \textit{v-F1} $\uparrow$ \\
    \midrule
   & Direct & 38.77 & 18.56 & & Direct & 44.62 & 33.24\\
   & SGC & 58.13\textcolor{brown}{$_{\textbf{+19.36}}$} & 39.64\textcolor{brown}{$_{\textbf{+21.08}}$} & & SGC & 57.74\textcolor{brown}{$_{\textbf{+13.21}}$} & 43.21\textcolor{brown}{$_{\textbf{+9.97}}$} \\
   \multirow{-3}{*}{\textbf{Mistral-7B}} & GraphSAGE & \textbf{59.07}\textcolor{brown}{$_{\textbf{+20.30}}$} & \textbf{41.40}\textcolor{brown}{$_{\textbf{+22.84}}$} & \multirow{-3}{*}{\textbf{CodeLlama-13B}} & GraphSAGE & \textbf{59.49}\textcolor{brown}{$_{\textbf{+14.87}}$} & \textbf{45.54}\textcolor{brown}{$_{\textbf{+12.30}}$} \\ \midrule
   
   & Direct & 62.42 & 48.27 & & Direct & 70.73 & 55.54 \\ 
   & SGC & 68.13\textcolor{brown}{$_{\textbf{+5.71}}$} & 54.34\textcolor{brown}{$_{\textbf{+6.07}}$} & & SGC & 72.91\textcolor{brown}{$_{\textbf{+2.18}}$} & 58.02\textcolor{brown}{$_{\textbf{+2.48}}$} \\ 
   \multirow{-3}{*}{\textbf{GPT-3.5-turbo}} & GraphSAGE & \textbf{71.19}\textcolor{brown}{$_{\textbf{+8.77}}$} & \textbf{56.83}\textcolor{brown}{$_{\textbf{+8.56}}$} &  \multirow{-3}{*}{\textbf{GPT-4-turbo}} & GraphSAGE & \textbf{73.09}\textcolor{brown}{$_{\textbf{+2.36}}$} & \textbf{58.20}\textcolor{brown}{$_{\textbf{+2.66}}$} \\
  \bottomrule
  \end{tabular}
  }
\end{table}

\textbf{Observation:} From the results shown in Table \ref{tab:huggingface_param}, we can conclude that: (1) LLMs' direct inference of invocation arguments is unsatisfactory. Even for the strongest LLM, GPT-4-turbo, the Parameter-Value F1 score is only $55\%$, which is far from expectations. (2)
With accurate planning provided by GNNs, LLMs can leverage their inherent reasoning abilities to analyze the context and correctly fill in the parameters. Across four LLMs, improvements of $3\%$ to $23\%$ can be observed, highlighting the advantages of GNN-enhanced planning. (3) Our method is well-suited for training-free scenarios, as the tasks retrieved by the training-free SGC already enable LLMs to infer the parameters effectively, with only a minor performance gap compared to the training-required GraphSAGE.

\newpage 
\section{Case Studies}\label{sec:case_study}
\textbf{Effectiveness of GNNs:} We show two cases in Figure \ref{fig:case} where the results of LLM's direct inference, BeamSearch, and GraphSAGE are compared. Due to space constraints and issues such as LLM’s output content decoding errors or invalid paths, we present only the first four valid paths searched by BeamSearch on the task graph. From the cases, we can conclude that BeamSearch relies on LLM's inherent reasoning abilities. Although LLM can explore the ground-truth invocation path on the task graph, their final solutions are usually not the optimal as either containing the hallucination or wrongly invoked tasks due to limited instruction following and reasoning abilities. On the contrary, GNN can effectively align decomposed steps with suitable tasks, accurately achieving the ground-truth result and enhancing task planning.

\begin{figure}[!t]
    \centering
    \includegraphics[width=0.96\textwidth]{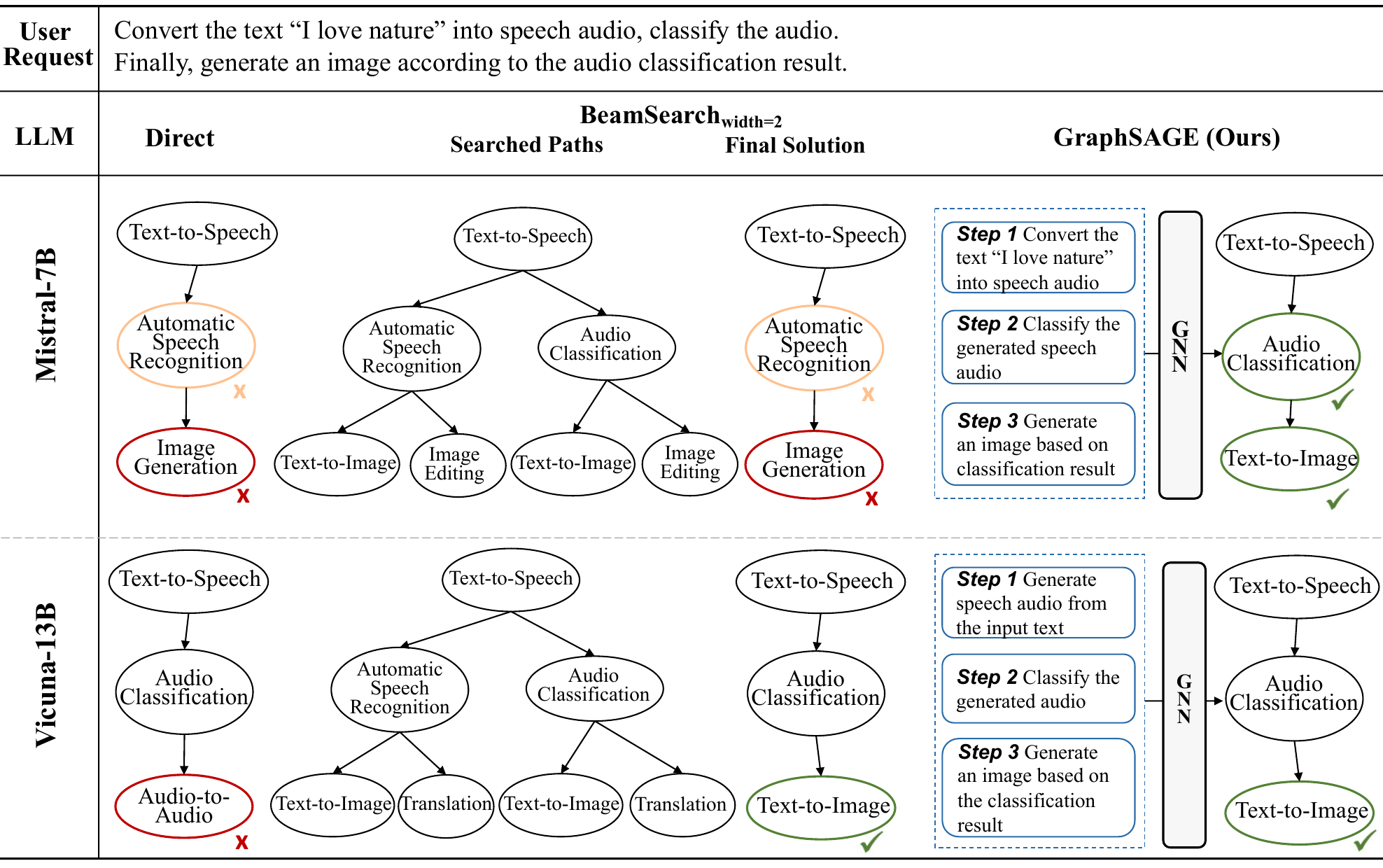}
     \includegraphics[width=0.96\textwidth]{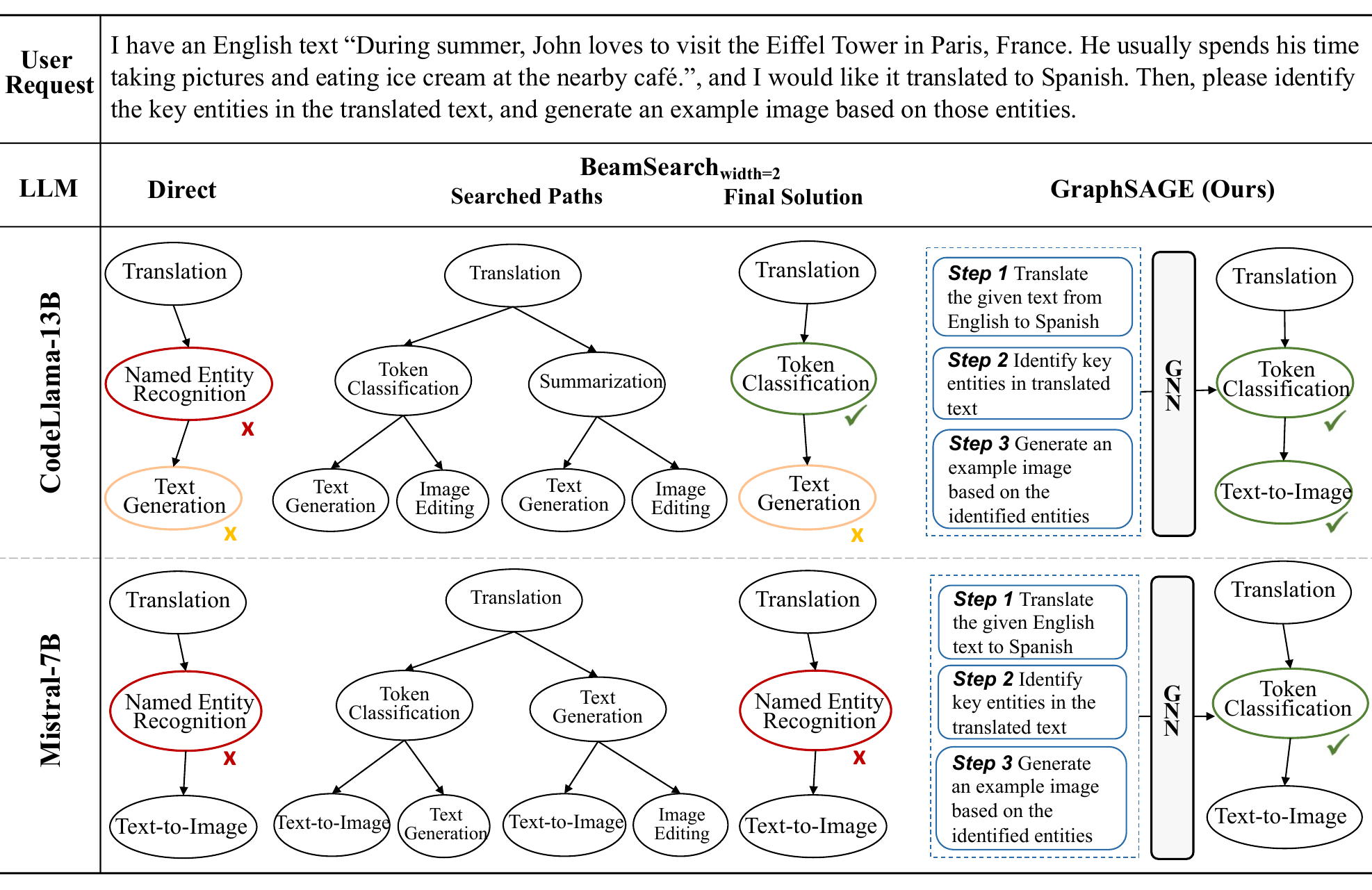}
    \caption{\textbf{Case Study of GNN's Effectiveness}. Nodes colored in \textcolor{red!20}{pink} and \textcolor{red}{red} denote wrongly predicted task or hallucinated task, respectively. Due to space limitations, we only show the first four valid searched paths of BeamSearch for illustration. Even though LLM can explore ground-truth paths during searching, they \textbf{lack certain instruction-following and reasoning abilities to consistently choose the optimal path}.  On the contrary, GNN can correctly align decomposed steps with suitable tasks, \textcolor{GREEN}{\textbf{hitting the ground-truth result}.}}
    \label{fig:case}
\end{figure}

\textbf{Failure Cases of GNNs:} We also present the failure cases where GNN performance deteriorates compared to direct inference to provide a comprehensive discussion of our method. Our conclusion is that the method is sensitive to the quality of decomposed task steps, as an ambiguous step may mislead GNN to wrongly select the task, and such errors can cascade due to the sequential selection of tasks on the task graph. As the case shown in Figure \ref{fig:case_error}, step 2 is ambiguously described as ``Segment the image and identify the tabular data'', which actually incorporates two distinct steps.  This ambiguity causes GNN to choose the unsuitable \texttt{Tabular Classification} instead of the correct \texttt{Image Segmentation}. Since tasks are selected sequentially on the task graph, where the next task is a neighbor of the current selection, such an error can prevent the exploration of the next appropriate task, as it may not be a neighbor of the current, incorrect selection. We also present the BeamSearch explored paths and its final solution, where it hits the ground-truth result.

\begin{figure}[!t]
    \centering
    \includegraphics[width=\textwidth]{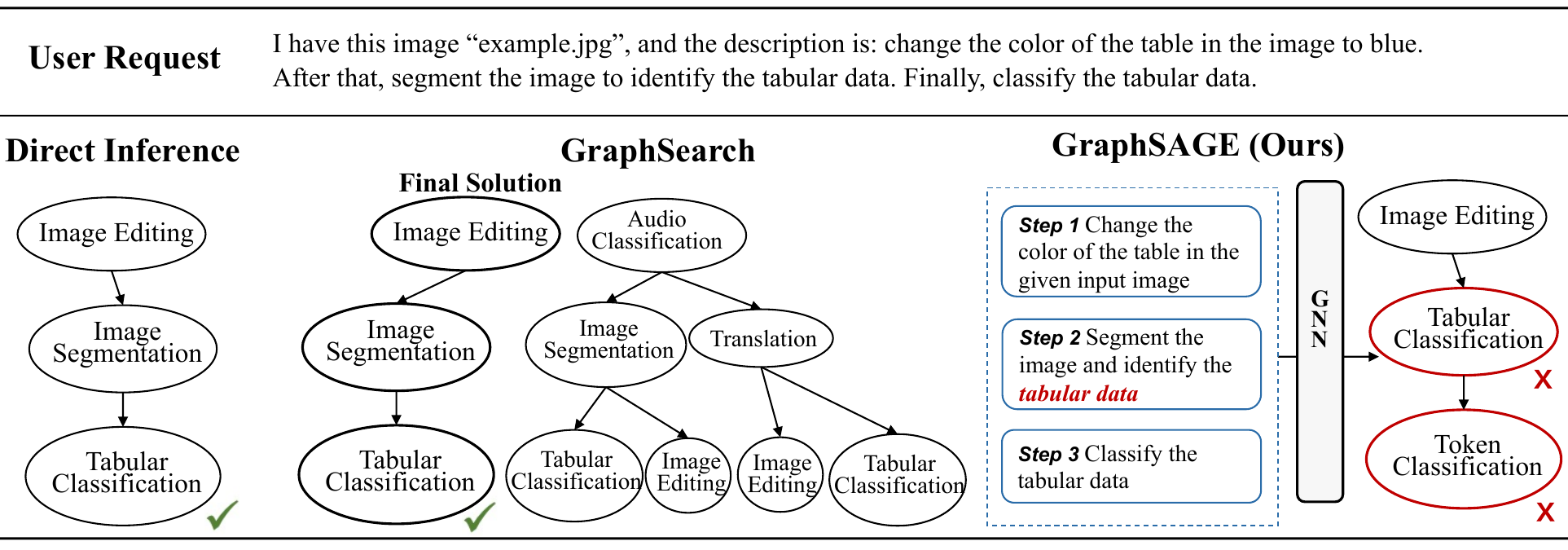}
    \caption{\textbf{Failure cases of GNN}. Our framework relies heavily on the quality of decomposed task steps. Ambiguous steps (step 2 which actually incorporates two steps) may mislead GNN to select the wrong task. }
    \label{fig:case_error}
\end{figure}

\textbf{Diagnosing GraphSearch with GPT-4-turbo:} In our experiments, GraphSearch brings marginal or even decreased performance for GPT-4-turbo across most experimental datasets, contradicting current research \cite{Liu2023ControlLLMAL, Liu2024ToolNetCL} which suggests that exhaustive search strategies can enhance the performance of more powerful LLMs. To provide a detailed analysis, we show three types of cases in Figure \ref{fig:case_gpt4}: Successful cases, Failure cases, as well as Maintaining cases:
\begin{itemize}[leftmargin=*, topsep=2pt]
    \item \textbf{Successful Cases:} As shown in the figure, despite inaccuracies in task decomposition (GPT-4 predicts an extra step compared to the three-step ground truth), its inherent reasoning abilities and the knowledge explored along the task graph can fix these mistakes, hitting the ground-truth.
    
    \item \textbf{Failure Cases:} In the failure case, although GPT-4 identifies the ground-truth solution during searching process, the final decision includes an incorrect task. This occurs because the final solution selection \textbf{demands complex reasoning abilities of LLM}, as the context contains long textual information from different aspects: \textbf{the full task graph, user request, all searched paths, and related instructions}. The demanding context and reasoning challenge exceed GPT-4's capabilities, leading to errors.
    
    \item \textbf{Maintaining Cases:} These occur when GraphSearch merely replicates the result of LLM's direct inference, indicating no added benefit from exploring the task graph. In these instances, despite navigating the graph, the LLM fails to self-correct inaccuracies due to inherent reasoning limitations.
\end{itemize}
We emphasize that the results of GraphSearch with GPT-4-turbo, tend to ``maintaining'' cases. Under such conditions, even a few failures can degrade overall performance, explaining why GraphSearch does not consistently enhance performance for GPT-4-turbo.

\begin{figure}[!t]
    \centering
    \includegraphics[width=\textwidth]{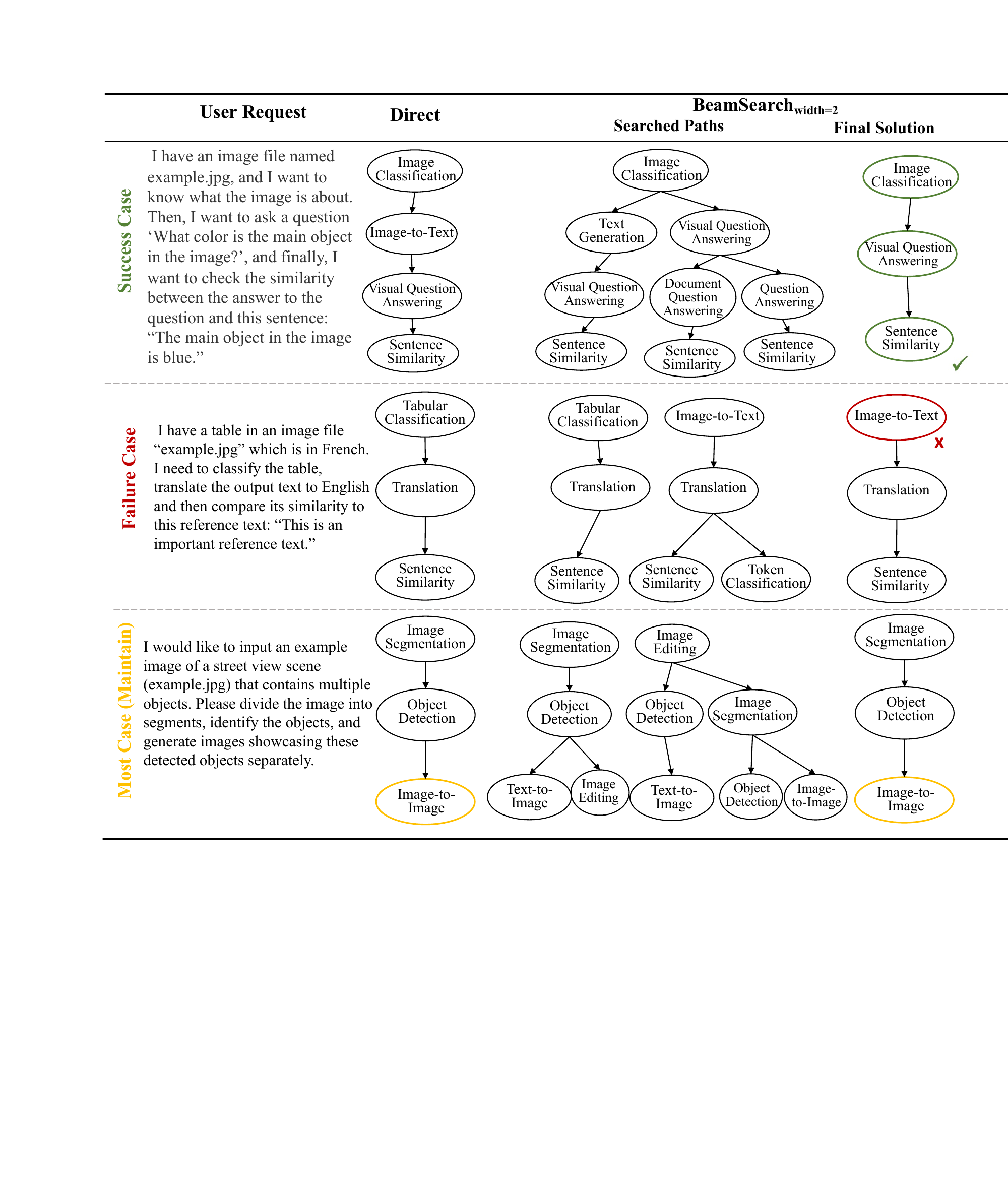}
    \caption{\textbf{Diagnosing GraphSearch for GPT-4}. We provide three types of cases: \textcolor{GREEN}{\textbf{Success Cases}} where GPT-4 can leverage inherent reasoning abilities to select the optimal invocation path, which may even fix wrongly decomposed task steps. \textcolor{red}{\textbf{Failure Cases}} where GPT-4 miss in the extremely long context, containing the whole task graph, all searched paths, and instructions, selecting an unsatisfactory path. \textcolor{yellow}{\textbf{Maintain Cases}} where the searched result is the same as direct inference result, and those wrongly predicted tasks can still not be refined even under exhaustive search.}
    \label{fig:case_gpt4}
\end{figure}

\clearpage

\end{document}